\theoremstyle{definition}
\newtheorem{definition}{Definition}
\newtheorem{theorem}{Theorem}
\newtheorem*{theorem*}{Theorem}
\newtheorem{cor}{Corollary}
\newtheorem*{cor*}{Corollary}
\newtheorem{proposition}{Proposition}
\newtheorem{lemma}{Lemma}
\newtheorem*{lemma*}{Lemma}
\newtheorem{example}{Example}
\newtheorem{remark}{Remark}
\patchcmd{\thmhead}{(#3)}{#3}{}{}
\def\mcT{\mathcal{T}}
\def\bbI{\mathbb{I}}
\def\bbR{\mathbb{R}}
\def\bbN{\mathbb{N}}
\def\bbC{\mathbb{C}}
\def\bbE{\mathop{\mathbb{E}}}
\def\fkg{\mathfrak{g}}
\def\fku{\mathfrak{u}}
\def\diff{\mathrm{D}}
\newcommand{\tensorname}[2]{{#1}_{(#2)}}
\newcommand{\tensor}[2]{$\tensorname{#1}{#2}$-tensor}
\newcommand{\tensors}[2]{$\tensorname{#1}{#2}$-tensors}
\NewDocumentCommand\contract{mg}{%
    \ensuremath{\iota_{#1}\IfNoValueTF{#2}{}{\qty(#2)}}%
}
\newcommand{\group}{G}
\newcommand{\groupelem}{g}
\title{Tensor learning with orthogonal, Lorentz, and symplectic symmetries}
\author{%
  Wilson G. Gregory \\
  Dept. of Applied Mathematics and Statistics\\
  Mathematical Institute for Data Science \\
  Johns Hopkins University\\
  Baltimore, MD 21218 \\
  \texttt{wgregor4@jhu.edu} \\
  \And
  Josué Tonelli-Cueto \\
  Department of Quantitative Methods\\
  CUNEF Universidad\\
  Madrid, SPAIN \\
  \texttt{josue.tonelli.cueto@bizkaia.eu}
  \\
  \And
  Nicholas F. Marshall \\
  Department of Mathematics \\
  Oregon State University \\
  Corvallis, OR 97331 \\
  \texttt{marsnich@oregonstate.edu} \\
  \\
  \And
  Andrew S. Lee \\
  Department of Mathematics and Computer Science \\
  Adelphi University \\
  Garden City, NY 11530 \\
  \texttt{alee2@adelphi.edu} \\
  \\
  \And
  Soledad Villar \\
  Dept. of Applied Mathematics and Statistics\\
  Mathematical Institute for Data Science\\
  Johns Hopkins University\\
  Baltimore, MD 21218 \\
  Flatiron Institute\\
  New York, NY 10010 \\
  \texttt{svillar3@jhu.edu}
   \\
}
\begin{document}

\maketitle

\begin{abstract}
Tensors are a fundamental data structure for many scientific contexts, such as time series analysis, materials science, and physics, among many others. Improving our ability to produce and handle tensors is essential to efficiently address problems in these domains.
In this paper, we show how to exploit the underlying symmetries of functions that map tensors to tensors. More concretely, we develop universally expressive equivariant machine learning architectures on tensors that exploit that, in many cases, these tensor functions are equivariant with respect to the diagonal action of the orthogonal, Lorentz, and/or symplectic groups.
We showcase our results on three problems coming from material science, theoretical computer science, and time series analysis. For time series, we combine our method with the increasingly popular path signatures approach, which is also invariant with respect to reparameterizations.  Our numerical experiments show that our equivariant models perform better than corresponding non-equivariant baselines.
\end{abstract}

\section{Introduction}

Tensors are fundamental mathematical objects that appear in a broad spectrum of domains \citep{tensorbook1,tensorbook2}. In the natural sciences, tensor-valued data are used to express polarizations \citep{melrose1977polarization}, permeabilities \citep{durlofsky1991numerical}, and stresses \citep{levitas2019tensorial}, for example.
Theoretical computer science studies several problems related to tensors, including factorization or decomposition of tensors \citep{rabanser2017introduction}, and planted tensor models \citep{hopkins2016fast}.
For time series analysis, the path signature introduced in \cite{chen_signature_tensors1,chen_signature_tensors2} transforms path data into a sequence of tensors, enabling methods to process time series data that are invariant with respect to reparameterizations \citep{bonnier2019deepsignaturetransforms,pathlearning, tóth2025pathsignatures_thesis}.

In physics, tensors are not just a multidimensional array of numbers. Tensors also have specific transformation properties under changes of coordinates, so any function of a tensor should obey certain transformation rules. These rules can be expressed as invariances or equivariances with respect to group actions. In this work, we show how to parameterize machine learning models that learn tensors and implement these symmetries, and we demonstrate that imposing them improves the learning performance on a variety of problems. 
The relevant symmetries are given by classical Lie groups acting diagonally on tensors. The groups we study arise naturally in physics and other settings, including the orthogonal group $\mathop{O}(d)$ (which typically appears in coordinate transformations), the indefinite orthogonal group $\mathop{O}(s, k-s)$ (which includes the Lorentz group, a fundamental group for special relativity), and the symplectic group $Sp(d)$ (the underlying group in much of classical and quantum mechanics).

We apply the methods to problems in time series prediction, materials science, and theoretical computer science.
Our equivariant learned models outperform prior static methods and non-equivariant learned models in almost all cases.

\textbf{Our Contributions.}
We provide a generic recipe to define equivariant machine learning models mapping from tensors to tensors.
To this end, we give explicit parameterizations for polynomials (Sec. \ref{sec:theory_results}) and analytic functions with globally convergent Taylor series (Sec. \ref{sec:generalizations}) from tuples of tensor inputs to tensor outputs that are equivariant with respect to the orthogonal (Sec. \ref{sec:theory_results}), indefinite orthogonal (which includes Lorentz), and symplectic groups (Sec. \ref{sec:generalizations}). 
This generalizes the existing results of \cite{villar2023scalars} and leverages the tensor invariant theory \citep{Appleby1987, Jeffreys_1973, goodman_wallach_2009} into a format useful for machine learning frameworks. 
On first reading and for those primarily interested in practical applications, we suggest focusing on Corollary \ref{cor:vector_to_tensors} in Section \ref{sec:theory_results} from which our experiments follow.

In Section \ref{sec:experiments} we consider three disparate applications.
For materials science, we use our machine learning model on tensors to learn the relationship between second-order stress and strain tensors of an $\mathop{O}(d)$-isotropic neo-Hookean hyperelastic material \citep{Garanger_2024}.
For time series, we focus on their representation via path signatures, which are sequences of tensors of growing order \citep{chen_signature_tensors2,lyons2007rough_paths_book}. These tensors can be used for any downstream learning tasks.
Here, we consider the problem of estimating the path signature from only a few sampled points on the path.
Finally, from theoretical computer science we consider the problem of sparse vector estimation.
Given a vector space $\bbR^n$ and a random orthonormal basis of a subspace $\bbR^d$ that contains a sparse vector $v_0$ for $d \ll n$, can we recover $v_0$?
The problem has roots in dictionary learning \citep{Spielman2012} and tensor PCA, it is closely related to the spiked tensor model \citep{MontanariEmile14}, and has several applications \citep{rene_sparse_vector_overview}.

\textbf{Related work.}
Many recent theoretical and applied efforts have focused on the implementation of symmetries and other structural constraints in the design of machine learning models. 
This is the case of graph neural networks \cite{scarselli2008graph, MaronBSL19}, geometric deep learning \cite{bronstein2021geometric, weiler2021coordinate}, and AI for science \cite{ai4science}. 
The goal is to design a hypothesis class of functions with good \emph{inductive bias} that is aligned with the theoretical framework of the physical, mathematical, or algorithmic objects it aims to represent. 
This includes respecting coordinate freedoms \citep{villar2023towards}, conservation laws \citep{alet2021noether}, or internal symmetries (e.g. in the implicit neural representations framework \citep{lim2023graph}).
Symmetries have also been used to provide interpretability to learned data representations \citep{suau2023duet, gupta2023structuring}. 
Mathematically, it has been shown that imposing symmetries can improve the generalization error and sample complexity of machine learning models \citep{elesedyzaidi, wang2021incorporating, elesedy2021provably, bietti2021sample, petrache2024approximation, tahmasebi2023exact, huang2024approximately}.

A variety of methods can be used for implementing invariances or equivariances, including group convolutions \citep{cohenwelling16, cohen2017steerable, Wang2021}, irreducible representations \citep{Fuchsetal20, Kondor18, Weileretal2018, Cohenetal2018, WeilerCesa18, tensornet}, constraints on optimization \citep{Finzi21}, canonicalization \citep{kaba2023equivariance}, and invariant theory \citep{Gripaios2021, Haddadin21, villar2023scalars, BlumSmithVillar2023, villar2023dimensionless, blum2024learning}.
This work is closer to the line of research that constructs explicit equivariant functions from invariant features.

Closest to us is the work of Kunisky, Moore, and Wein on tensor cumulants \citep{kunisky2024tensorcumulantsstatisticalinference}; and the works parameterizing equivariant tensor functions using Clebsch--Gordan methods and spherical harmonics, including e3nn \citep{geiger2022e3nn}, escnn \citep{cesa2022a}, and the recent work of \cite{ceriotti2025_sphericaltensors}. 
In \cite{kunisky2024tensorcumulantsstatisticalinference}, it is shown that $\mathop{O}(d)$-invariant polynomials on symmetric tensors can be turned into $\mathop{O}(d)$-invariant polynomials over general tensors by symmetrizing over $\mathop{O}(d)$. They do not consider learning applications, tensors of different orders or parities, nor indefinite orthogonal groups or the symplectic group.

The works in \cite{geiger2022e3nn,cesa2022a,ceriotti2025_sphericaltensors}
implement equivariant machine learning models on tensors using representation theory. 
To do so they decompose tensors into irreducible representations and use Schur’s lemma to parameterize the equivariant linear maps. 
If the orders of the intermediate tensor layers are large enough, then these models can parameterize polynomial equivariant functions of arbitrary degree (and are universal in a Stone-Weierstrass sense). 
The method we introduce here also parameterizes equivariant tensor polynomials of arbitrary degree, but does so using invariant theory results instead of irreducible representations. 
Our parameterization for the invariant and equivariant functions does not require the computation of the Clebsch–Gordan coefficients. 
The Clebsch–Gordan–based methods in \cite{geiger2022e3nn,cesa2022a,ceriotti2025_sphericaltensors} are specific for SO($d$) and O($d$) for $d=2,3$, whereas our method applies to other groups as well. 
We remark that those methods are more memory efficient than our general formulation in Theorems 1 and 2, but they are comparable to our Corollaries 1 and 3 (which require the inputs to be vectors but are applicable to O(d), the Lorentz group, the symplectic group).
In summary, the computational and approximation power should be equivalent, however, the parameterization is different and the mathematical techniques used to arrive at the parameterization are also different.

Another related work \cite{pearce2023brauer} characterizes neural networks that are  $O(d)$, $SO(d)$, and $Sp(d)$ equivariant, but only for the case of functions whose input is a tensor power of $\mathbb{R}^n$  and whose output is a tensor power of $\mathbb{R}^n$.
Finally, some other works use outer products and contractions of Cartesian tensors to capture higher order interactions, such as HotPP \citep{hotpp}, GI-Net \citep{gi_net}, and Vector Neurons \citep{deng2021vector}.
These models build higher order tensors in a point cloud or image setting, while our method exploits shortcuts depending on the type of input to build efficient models.

\section{Definitions}\label{sec:background}

To simplify the exposition, we start by focusing on the case of the 
orthogonal group before extending the result to the indefinite orthogonal and symplectic groups.
We consider the orthogonal group $\mathop{O}(d)$, the group of isometries of Euclidean space $\mathbb R^d$ that fix the origin. 
It acts on vectors and pseudovectors $v \in \mathbb R^d$ in the following way:
\begin{align} \label{eq.action}
    g\cdot v = \det(M(g))^{\frac{1-p}{2}}\,M(g)\,v,
\end{align}
where $g\in \mathop{O}(d)$, $M(g)\in \mathbb R^{d\times d}$ is the standard matrix representation of $g$ (i.e. $M(g)^\top\, M(g) = \bbI_d$, where $\bbI_d$ is the identity matrix), and $p\in\{-1,+1\}$ is the parity of $v$.
If $p=+1$ we obtain the standard $\mathop{O}(d)$ action on $\mathbb R^d$ \emph{vectors}.
If $p=-1$ we obtain the $\mathop{O}(d)$ action on what in physics are known as \emph{pseudovectors}.
For a common pseudovector, consider a rotating Ferris wheel with angular velocity whose direction is given by the right-hand rule. A reflection 
of the wheel, which will have $\det(M(g)) = -1$ in \eqref{eq.action}, does not change the direction of rotation or angular velocity.

\begin{definition}[(\tensor{k}{p}s)]\label{def:tensors}
We define the space of \emph{\tensor{1}{p}s} to be $\mathbb{R}^d$ equipped with the action $\mathop{O}(d)$ defined by \eqref{eq.action}. 
If $v_i$ is a \tensor{1}{p_i} for $i=1,\ldots, k$, then $a:=v_{1}\otimes\ldots \otimes v_k \in (\mathbb R^d)^{\otimes k}$ is a \emph{rank-1 \tensor{k}{p}}, where $p=\prod_{i=1}^k p_i$ and the action of $\mathop{O}(d)$ is the diagonal action:
\begin{align}\label{eq:groupaction}
    g\cdot (v_{1}\otimes\ldots \otimes v_k) = (g\cdot v_1)\otimes \ldots \otimes (g\cdot v_k) ~.
\end{align}
This definition generalizes to higher rank \tensors{k}{p} by linearity (see \eqref{eq:tensor_rotation_explicit} below). 
The space of \tensor{k}{p}s in $d$ dimensions is denoted $\mcT_k\qty(\bbR^d, p)$. We will write $+$ or $-$ for p when it is clear; for example, $\mcT_1\qty(\bbR^d, -)$ is the space of pseudovectors and $\mcT_1\qty(\bbR^d, +)$ is the space of vectors.
\end{definition}


\begin{definition}[(Einstein summation notation)]
Suppose that $a$ is a \tensor{k}{p}.
Let $[a]_{i_1,\ldots,i_k}$ denote the $(i_1,\ldots,i_k)$-th entry of $a$, where $i_1,\ldots,i_k$ range from $1$ to $d$.
The \textit{Einstein summation notation} is used to represent tensor products\footnote{We will identify vectors with co-vectors in the usual way and will not distinguish lower vs upper scripts.} where repeated indices are summed over.
In each product, a given index can appear either exactly once, in which case it appears in the result, or exactly twice, in which case it is summed over and does not appear in the result.

For example, in Einstein summation notation, the product of two \tensor{2}{+}s (i.e., the matrix product $a b$ of two $d\times d$ matrices $a$ and $b$) is written as
\begin{equation} 
    [a\, b]_{i,j} = [a]_{i,\ell}\,[b]_{\ell,j} := \sum_{\ell=1}^d [a]_{i,\ell}\,[b]_{\ell,j} .
\end{equation}
\end{definition}

Using Einstein summation notation, 
the action of $g \in \mathop{O}(d)$ on rank-1 tensors can be extended to general tensors by linearity by expressing $b \in \mcT_k\qty(\bbR^d,p)$ as a linear combination of (rank-1) standard basis tensors $e_{i_1,\ldots,i_k} = e_{i_1} \otimes \cdots \otimes e_{i_k}$, where $[e_{i}]_i = 1$  and $[e_{i}]_j = 0$ for $i\not = j$
\begin{equation} \label{matrix}
[g \cdot b]_{i_1,\ldots,i_k} = [b]_{j_1,\ldots,j_k} [g \cdot (e_{j_1} \otimes \cdots \otimes e_{j_k})]_{i_1,\ldots,i_k}
= [b]_{j_1,\ldots,j_k} [g \cdot e_{j_1}]_{i_1} \cdots [g \cdot e_{j_k}]_{i_k}.
\end{equation}
Note that the action \eqref{eq.action} on a \tensor{k}{p} $b$ can be written as
\begin{align}\label{eq:tensor_rotation_explicit}
    [g\cdot b]_{i_1,\ldots, i_k} = \det(M(g))^{\frac{1-p}{2}}\,[b]_{j_1,\ldots,j_k}\,[M(g)]_{i_1,j_1}\cdots[M(g)]_{i_k,j_k}
\end{align} 
for all $g\in \mathop{O}(d)$.
For example, in this notation a \tensor{2}{+} has the transformation property
$[g\cdot b]_{i,j} = [b]_{k,\ell}\,[M(g)]_{i,k}\,[M(g)]_{j,\ell}$,
which, in normal matrix notation, is written as
$g\cdot b = M(g)\,b\,M(g)^\top$. 

When multiple tensors are combined, and all their indices appear in the result, we refer to that as the tensor product or outer product. 
When indices are summed over, we refer to that as the contraction or scalar product.
We will further focus on a specific case of multiple tensor contractions that we will refer to as a $k$-contraction.

\begin{definition}[(Outer product of tensors)]
Given $a \in \mcT_k\qty(\bbR^d, p)$ and $b \in \mcT_{k'}\qty(\bbR^d, p')$, the \textit{outer product}, denoted $a\otimes b$, is a tensor in $\mcT_{k+k'}\qty(\bbR^d, p\,p')$ defined as $[a\otimes b]_{i_1,\ldots,i_{k+k'}} = [a]_{i_1,\ldots,i_k}\,[b]_{i_{k+1},\ldots,i_{k+k'}}$. We write $a^{\otimes k}$ to denote the outer product of $a$ with itself $k$ times and use the convention for $k=0$ that $a^{\otimes 0} \otimes b = b$.
\end{definition}

\begin{definition}[($k$-contraction)]\label{def:k-contraction}
    Given a tensor $a \in \mcT_{2k+k'}\qty(\bbR^d,p)$, the $k$-\textit{contraction} of $a$, denoted $\contract{k}{a}$, is the \tensor{k'}{p} that contracts the $2k$ first indices as follows (in Einstein summation): 
    \begin{equation}
     [\contract{k}{a}]_{j_1,\ldots,j_{k'}} := [a]_{i_1,\ldots,i_k,i_1,\ldots,i_k,j_1,\ldots,j_{k'}}.
    \end{equation}
\end{definition}

For instance, if $a=u\otimes v \otimes x\otimes y\otimes z\in \mcT_{4+1}\qty(\bbR^d,p)$ then $\contract{1}{a} = \langle u, x\rangle\langle v, y\rangle z$, where $\langle u,x\rangle$ denotes the standard inner product between $u$ and $x$.
Since \tensors{k}{p} are elements of the vector space $(\mathbb{R}^d)^{\otimes k}$, tensor addition and scalar multiplication are defined in the usual way.
The final operation on tensors is the permutation of the indices.

\begin{definition}[(Permutations of tensor indices)]\label{def:permutations-of-indices}
Given $a \in \mcT_k\qty(\bbR^d,p)$ and permutation $\sigma\in S_k$, the \textit{permutation of tensor indices} of $a$ by $\sigma$, denoted $a^\sigma$, is defined by
\begin{equation}
    [a^\sigma]_{i_1, \ldots, i_k} := [a]_{i_{\sigma^{-1}(1)}, \ldots, i_{\sigma^{-1}(k)}} ~.
\end{equation}
\end{definition}

\begin{definition}[(Invariant and equivariant functions)] We say that 
$f : \mcT_k\qty(\bbR^d,p) \rightarrow \mcT_{k'}\qty(\bbR^d,p')$ is $\mathop{O}(d)$-invariant if
\begin{equation}
   f(g \cdot a) = f(a), \quad \text{for all} \quad g \in \mathop{O}(d).
\end{equation}
We say that 
$f : \mcT_k\qty(\bbR^d,p) \rightarrow \mcT_{k'}\qty(\bbR^d,p')$ is 
$\mathop{O}(d)$-equivariant if
\begin{equation}
   f(g \cdot a) = g \cdot f(a), \quad \text{for all} \quad g \in \mathop{O}(d).
\end{equation}
\end{definition}

If $f$ were instead a function with multiple inputs, then the same group element $\groupelem$ would act on all inputs simultaneously.

\begin{definition}[(Isotropic tensors)]
We say that a tensor $a \in \mcT_k\qty(\bbR^d,p)$ is $\mathop{O}(d)$-isotropic if $g \cdot a = a$, for all $g \in \mathop{O}(d)$.
\end{definition}

There are two special tensors, the Kronecker delta, and the Levi-Civita symbol.
These tensors are $\mathop{O}(d)$-isotropic and, as we will show in Appendix \ref{appendix:pf_of_cor}, we can construct all $\mathop{O}(d)$-isotropic tensors using only Kronecker deltas and Levi-Civita symbols.

\begin{definition}[(Kronecker delta)]\label{def:kronecker_delta} The \textit{Kronecker delta}, $\delta$, is the $\mathop{O}(d)$-isotropic \tensor{2}{+} satisfying $[\delta]_{ij} =1 $ if $i=j$ and $0$ otherwise. When considered as a matrix, it is the identity matrix $\bbI_d$. 
\end{definition}

\begin{definition}[(Levi-Civita symbol)]\label{def:levicivita}
The \textit{Levi-Civita symbol}, $\epsilon$, in dimension $d\geq 2$ is the $\mathop{O}(d)$-isotropic \tensor{d}{-} 
such that $[\epsilon]_{i_1,\ldots,i_d} = 0$ if any two of the $i_1,\ldots,i_d$ are equal, $[\epsilon]_{i_1,\ldots,i_d} = +1$ if $i_1,\ldots,i_d$ is an even permutation of $1,\ldots,d$, and $[\epsilon]_{i_1,\ldots,i_d} = -1$ if $i_1,\ldots,i_d$ is an odd permutation of $1,\ldots,d$. For example, when $d=2$ this is simply the matrix $\left[\begin{smallmatrix} 0 & 1 \\ -1 & 0\end{smallmatrix}\right]$.
\end{definition}

\section{\texorpdfstring{$\mathop{O}(d)$}{O(d)}-equivariant polynomial functions}\label{sec:theory_results}

In this section, we characterize the $\mathop{O}(d)$-equivariant polynomial functions mapping multiple tensor inputs to tensor outputs.
On first reading and for those primarily interested in practical applications, we advise focusing on Corollary \ref{cor:vector_to_tensors} and Example \ref{ex1} below.

In what follows we consider functions of $n$ input tensors of orders $k_i$ and parities $p_i$ for $i=1,\ldots,n$, and fixed dimension $d$. This space is expressed by the cartesian product $\prod_{i=1}^n \mcT_{k_i}\qty(\bbR^d, p_i)= (\mcT_{k_1}\qty(\bbR^d, p_1), \ldots, \mcT_{k_n}\qty(\bbR^d, p_n))$. For many practical applications all inputs are of the same type, but we write the theory in this generality to allow for functions that take, for example, \tensor{1}{+} positions and \tensor{2}{+} stresses as inputs. 

The theorem below states that every $O(d)$-equivariant polynomial from tensors to tensors can be written as a combination of tensor products of the inputs with isotropic tensors followed by Einstein-summation contractions. Each term in the right-hand-side of \eqref{eq.thm1} should be viewed as combining $r$ of the input tensors with the tensor product, then mapping them to the appropriate output with a linear map.
Since a linear map between tensors can always be written as a tensor product followed by a sequence of contractions \citep[Theorem 5.1]{dimitrienko_tensor_analysis}, the linear map is implemented by the tensor $c_{\ell_1,\ldots,\ell_r}$.
However, since the function is also $\mathop{O}(d)$-equivariant, the tensor $c_{\ell_1,\ldots,\ell_r}$ must be $\mathop{O}(d)$-isotropic.
The theorem states all the tensor equivariant polynomials can be expressed this way.

\begin{theorem}\label{thm:equiv_polynomial}
    Let $f: \prod_{i=1}^n \mcT_{k_i}\qty(\bbR^d, p_i) \to \mcT_{k'}\qty(\bbR^d, p')$ be an $\mathop{O}(d)$-equivariant polynomial function of degree at most $R$.
    Then we may write $f$ as follows:
    \begin{equation} \label{eq.thm1}
        f(a_1,\ldots,a_n) = \sum_{r=0}^R\, \sum_{1 \le \ell_1 \le \cdots \le \ell_r \le n} 
        \contract{k_{\ell_1,\ldots,\ell_r}}{a_{\ell_1} \otimes \ldots \otimes a_{\ell_r} \otimes c_{\ell_1,\ldots,\ell_r}} 
    \end{equation}
    where
     $c_{\ell_1,\ldots,\ell_r}$ is an $\mathop{O}(d)$-isotropic \tensor{(k_{\ell_1,\ldots,\ell_r} + k')}{p_{\ell_1,\ldots,\ell_r}\,p'} with order and parity chosen to be consistent with the output's ($k_{\ell_1,\ldots,\ell_r} = \sum_{q=1}^r k_{\ell_q}$ and $p_{\ell_1,\ldots,\ell_r} = \prod_{q=1}^r p_{\ell_q}$).
\end{theorem}

The proof of Theorem \ref{thm:equiv_polynomial} is given in Appendix \ref{appendix:pf_of_main}.  
The result is a clean theoretical characterization of $\mathop{O}(d)$-equivariant polynomial tensor functions with arbitrary order tensor inputs. However, computing large polynomials with all possible $\mathop{O}(d)$-isotropic tensors is impractical. 
One option is considering low-degree polynomials as in Example \ref{ex:another_example} in the Appendix. 
Alternatively, in many applications we only need a function that has \tensors{1}{+} (i.e. vectors) as input and a \tensor{k}{+} as output, and the problem takes on a form more amenable to computation. 

The condition that $c_{\ell_1,\ldots,\ell_r}$ in Theorem \ref{thm:equiv_polynomial} is $\mathop{O}(d)$-isotropic is quite restrictive. Lemma \ref{lemma:invariant_tensors_delta_simpleform} in Appendix \ref{appendix:pf_of_cor}, taken from \cite{Jeffreys_1973}, shows that all isotropic tensors can be constructed from the Kronecker delta $\delta$ (Definition  \ref{def:kronecker_delta}) and the Levi-Civita symbol (Definition \ref{def:levicivita}).

The following corollary says that when the inputs of the $\mathop{O}(d)$-equivariant function are vectors, and the output is a tensor, we can write the function as a linear combination of tensor products of the input vectors and Kronecker deltas (and permutations of them), and the coefficients are scalar functions that only depend on the pairwise inner products of the input vectors.
The proof is in Appendix \ref{appendix:pf_of_cor}.

\begin{cor}\label{cor:vector_to_tensors}
    Let $f: \prod_{i=1}^n \mcT_1\qty(\bbR^d, +) \to \mcT_{k'}\qty(\bbR^d,+)$ be an $\mathop{O}(d)$-equivariant polynomial function.
    Then, we may write it as
    \begin{equation} \label{eq:cor1}
        f(v_1,\ldots,v_n) = \sum_{t=0}^{\lfloor \frac{k'}{2} \rfloor}\sum_{\sigma \in S_{k'}} \sum_{1\leq J_1 \leq \ldots \leq J_{k'-2t}\leq n} q_{t,\sigma,J}\qty(\qty(\langle v_i,v_j\rangle)_{i,j=1}^n)\, \qty(v_{J_1} \otimes \ldots \otimes v_{J_{k'-2t}} \otimes \delta^{\otimes t})^\sigma,
    \end{equation}
    where $J = (J_1,\ldots,J_{k'-2t})$ are indices of the input tensors, and the function $q_{t,\sigma,J}$ which depends on the tuple $(t,\sigma,J)$ is a polynomial of the inner products of the input vectors.
\end{cor}

The second factor is a permutation of the outer product of $t$ Kronecker deltas and $k'-2t$ of the input vectors $v_1,\ldots,v_n$, possibly with repeats. 
The first sum is over the possible numbers of Kronecker deltas $0$ to $\lfloor \frac{k'}{2} \rfloor$, where $\lfloor \cdot \rfloor$ is the floor function.
The second sum is over the possible permutations of the $k'$ axes, which is smaller than $S_{k'}$ when $t>0$ due to the symmetries of $\delta^{\otimes t}$ as discussed in Appendix \ref{appendix:permutation_symmetries}. The third sum is over choosing $k'-2t$ vectors from $v_1$ to $v_n$, allowing repeated vectors.
See Figure \ref{figure:cor1_example} for an example. Directly evaluating the function $f(v_1,\ldots,v_n)$ defined in \eqref{eq:cor1} has computational complexity
$
\mathcal{O} \left(
k'! n^{k'} \left(Q d n^2 + d^{k'} \right) \right)
$
operations, where $Q$ is the maximum number of operations needed to evaluate the polynomials $q_{t,\sigma,J}$. Thus, evaluating $f$ is only practical for small values of $k'$; however, since $k'$ is the rank of the output tensor, $k' \in \{ 1,2,3,4\}$ already captures many cases of practical interest.

\begin{remark}\label{remark:continuous}
Note that Corollary \ref{cor:vector_to_tensors} characterizes polynomial functions, but if we allow the $q_{t,\sigma,J}$ to be more general (e.g. in the class of continuous or smooth functions), then we obtain a parameterization of a larger class of $\mathop{O}(d)$-equivariant functions. 
In the experiments in Section \ref{sec:experiments}, we set the $q_{t,\sigma,J}$ to be learnable multi-layer perceptrons (MLPs). 
We are unsure if a characterization of this sort can be stated for all continuous $\mathop{O}(d)$-equivariant functions.  
However, by the Stone–Weierstrass theorem any continuous function can be approximated by a polynomial function to arbitrary accuracy on any fixed compact set, so constructing an architecture that can represent equivariant polynomial functions is sufficient to approximately represent equivariant continuous functions (see \cite{yarotsky2022universal}).
\end{remark}

The following example shows how to express a given equivariant polynomial in terms of invariant functions and tensors. A longer example appears in Appendix~\ref{appendix:examples}.

\begin{example} \label{ex1} Let $f : \mcT_1\qty(\bbR^d,+) \to \mcT_2\qty(\bbR^d,+)$ be an $\mathop{O}(d)$-equivariant polynomial of degree at most $2$. By Theorem \ref{thm:equiv_polynomial}, we can write $f$ in the form
\begin{equation}
    f(a) = \contract{0}{a^{\otimes 0} \otimes c_0} + \contract{1}{a^{\otimes 1} \otimes c_1} + \contract{2}{ a^{\otimes 2} \otimes c_2} ~,
\end{equation}
where $c_r$ is an $\mathop{O}(d)$-isotropic \tensor{(r+2)}{+} for $r=0,1,2$. Lemma \ref{lemma:invariant_tensors_delta_simpleform} characterizes such isotropic tensors, $c_0 = \beta_0 \delta$, $c_1 = 0$ is trivial, and $c_2$ is a linear combination of $(\delta^{\otimes 2})^\sigma$ for $\sigma \in G_4 = \{\sigma_1,\sigma_2,\sigma_3\}$ where $\sigma_1 := (1,2,3,4), \sigma_2 = (1,3,2,4), \sigma_3=(1,3,4,2)$, See Appendix~\ref{appendix:permutation_symmetries} for the definition of $G_4$ \eqref{eq:G_permutation_group}.

Thus the final term $\contract{2}{a^{\otimes 2}\otimes c_2}$ is  
\begin{equation}
    \contract{2}{a^{\otimes 2} \otimes \left( \beta_1 (\delta^{\otimes 2})^{\sigma_1} + \beta_2 (\delta^{\otimes 2})^{\sigma_2} + \beta_3 (\delta^{\otimes 2})^{\sigma_3} \right)}
    = \beta_1 \langle a, a \rangle \delta + \beta_2 a \otimes a + \beta_3 a \otimes a,
\end{equation}
where the terms associated with $\beta_2$ and $\beta_3$ are the same due to the symmetry of $a^{\otimes 2}$. 
We conclude
\begin{equation}
    f(a) = \beta_0 \delta + \beta_1 \langle a, a \rangle \delta + \beta_2 a \otimes a,
\end{equation}
for some scalars $\beta_0,\beta_1,$ and $\beta_2$. 
\end{example}



When the input and output are both symmetric \tensors{2}{+}, we get another useful corollary.

\begin{cor}\label{cor:symmetric_matrices}
    Let $f:\mathcal{T}^{sym}_2(\bbR^d,+) \to \mathcal{T}^{sym}_2(\bbR^d,+)$ be an $\mathop{O}(d)$-equivariant function. 
    Then there exists a function $\widetilde{f}:\bbR^{d \times d}_{\text{diag}} \to \bbR^{d \times d}_{\text{diag}}$ of diagonal matrices that is permutation equivariant such that for all $A \in \mathcal{T}^{sym}_2(\bbR^d,+),\, f(A) = Q\,\qty(\widetilde{f}\qty(\Lambda))\,Q^\top$, where  $A = Q\,\Lambda\,Q^\top$ is the eigenvalue decomposition.
\end{cor}

In other words, an $O(d)$-equivariant function of symmetric matrices can be reduced to a permutation-equivariant function of the eigenvalues.
The proof is given in Appendix \ref{sec:proof_cor_symmetric_matrices}.

\begin{figure}
\centering
    \includegraphics[width=0.9 \textwidth]{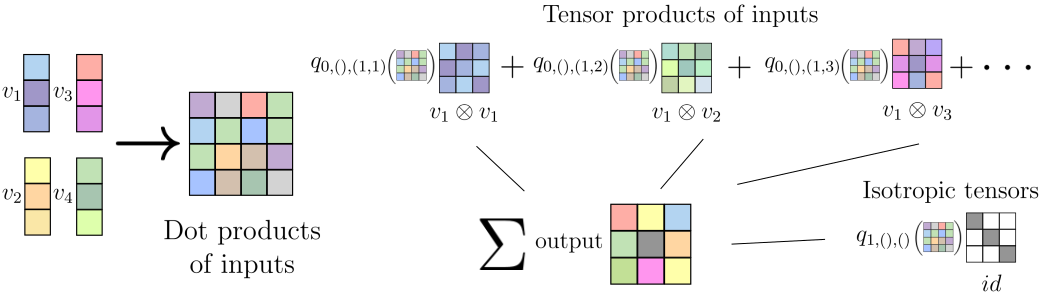}
    \caption{\small Illustration of the method from Corollary \ref{cor:vector_to_tensors} with 4 input vectors in $\mathbb R^3$ and a \tensor{2}{+} output. The tensor product of inputs includes all 16 possible tensor products of ordered pairs of input vectors, plus the isotropic Kronecker delta, labeled $id$. The coefficients $q_{t,\sigma,J}$ shown here use $\sigma=()$, the identity permutation in $S_{k'}$. \label{figure:cor1_example}}
\end{figure}

\section{Generalizations to other groups} \label{sec:generalizations}

The results regarding $\mathop{O}(d)$-equivariant tensor maps from Section \ref{sec:theory_results} are a particular case of a more general result involving algebraic groups, such as the Lorentz and symplectic groups we cover here. We work the full generalization in Appendix~\ref{app.other_groups} where we give all the details of the proofs.

Recall that $\mathop{O}(d)$ is the subgroup of linear transformations preserving the Euclidean inner product. However, in some contexts, we might be interested in preserving other bilinear products on $\bbR^d$, such as the \emph{Minkowski inner product}
\[
\langle u,v\rangle_s:=u^\top\bbI_{s,d-s}v,
\]
where $\bbI_{s,d-s}:=\begin{pmatrix}
    \bbI_s&\\&-\bbI_{d-s}
\end{pmatrix}$, or, for $d$ even, the \emph{symplectic product}
\[
\langle u,v\rangle_{\mathrm{symp}}:=u^\top J_dv
\]
where $J_d:=\begin{pmatrix}
    &\bbI_{d/2}\\-\bbI_{d/2}&
\end{pmatrix}$. The subgroups of linear maps preserving these bilinear products give respectively the \emph{indefinite orthogonal group} (which is the linear part of the \emph{Lorentz group} when $d=4$ and $s\in\{1,3\}$) given by
\begin{equation}
    \mathop{O}(s,d-s):=\{g\in \mathrm{GL}(\bbR^d)\mid g^\top\bbI_{s,d-s}g=\bbI_{s,d-s}\},
\end{equation}
and, when $d$ is even, the \emph{symplectic group} given by
\begin{equation}
    Sp(d):=\{g\in \mathrm{GL}(\bbR^d)\mid g^\top J_dg=J_d\}.
\end{equation}

For any of these groups $\group$, we can consider the modules $\mcT_{k}(\bbR^d,\chi):=(\bbR^d)^{\otimes k}$, where $\chi:\group\rightarrow \bbR^{*}$ is an algebraic group homomorphism, where the action is given by the linear extension of
\begin{equation}
    \groupelem \cdot (v_1\otimes\cdots\otimes v_k)=\chi(\groupelem)(\groupelem \cdot v_1)\otimes\cdots\otimes (\groupelem \cdot v_k).
\end{equation}

When $\group=\mathop{O}(s,d-s)$, with $s\neq 0,d$, we have four possible $\chi$: $\chi_{+,+}$ being always equal to $1$, $\chi_{+,-}$ being the sign of the determinant of the bottom-right $(d-s)\times (d-s)$ submatrix, $\chi_{-,+}$ being the sign of the determinant of the top-left $s\times s$ submatrix, and $\chi_{-,-}$ being the determinant of the matrix. Hence, we can represent them by $(p_1,p_2)$, where $p_i\in\{-1,+1\}$. When $\group=Sp(d)$, we have that $\chi$ can only be the trivial group-homomorphism. (It follows, for instance, from the representation theory of simple Lie algebras from \cite[Part III]{fulton2013representation} and a standard abelianization argument).

Additionally, we have $\group$-equivariant contractions $\contract{k}^G:\mcT_{2k+k'}(\bbR^d,\chi)\rightarrow \mcT_{k'}(\bbR^d,\chi)$ given by
\begin{equation}
\contract{k}^{\mathop{O}(s,d-s)}(a):=[a]_{i_1,\ldots,i_k,j_1,\ldots,j_k,\ell_1,\ldots,\ell_{k'}}\left[\bbI_{s,d-s}\right]_{i_1,j_1}\cdots \left[\bbI_{s,d-s}\right]_{i_k,j_k}
\end{equation}
and
\begin{equation}
\contract{k}^{Sp(d)}(a):=[a]_{i_1,\ldots,i_k,j_1,\ldots,j_k,\ell_1,\ldots,\ell_{k'}}\left[J_d\right]_{i_1,j_1}\cdots \left[J_d\right]_{i_k,j_k}.
\end{equation}
Under these notations, we can state the generalization of Theorem~\ref{thm:equiv_polynomial} as follows. Recall that an \textit{entire function} is a function that is analytic and whose Taylor series converges globally at any point.

\begin{theorem}\label{theo:entiretheo_real}
Let $\group$ be either $\mathop{O}(s,d-s)$ or $Sp(d)$ and $f: \prod_{i=1}^n \mcT_{k_i}\qty(\bbR^d, \chi_i) \to \mcT_{k'}\qty(\bbR^d, \chi')$ be a $\group$-equivariant entire function. Then we may write $f$ as follows:
\begin{equation}
f(a_1,\ldots,a_n) = \sum_{r=0}^{\infty}\, \sum_{1 \le \ell_1 \le \cdots \le \ell_r \le n} 
\contract{k_{\ell_1,\ldots,\ell_r}}^{\group}\left(a_{\ell_1} \otimes \ldots \otimes a_{\ell_r} \otimes c_{\ell_1,\ldots,\ell_r}\right)
\end{equation}
where $c_{\ell_1,\ldots,\ell_r}\in \mcT_{k_{\ell_1,\ldots,\ell_r} + k'}(\bbR^d,\chi_{\ell_1,\ldots,\ell_r}\,\chi')$ is a $\group$-isotropic tensor, i.e., a tensor in $\mcT_{k_{\ell_1,\ldots,\ell_r} + k'}(\bbR^d,\chi_{\ell_1,\ldots,\ell_r}\,\chi')$ invariant under the action of $\group$; for $k_{\ell_1,\ldots,\ell_r} := \sum_{q=1}^r k_{\ell_q}$ and $\chi_{\ell_1,\ldots,\ell_r} = \prod_{q=1}^r \chi_{\ell_q}$.  
\end{theorem}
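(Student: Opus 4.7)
The plan is to adapt the polynomial argument from the $\mathop{O}(d)$ case in three stages: first reduce from entire functions to multi-homogeneous polynomials, then reformulate each homogeneous component as a $\group$-equivariant linear map between tensor products, and finally extract the isotropic tensor $c_{\ell_1,\ldots,\ell_r}$ using invariant theory for reductive algebraic groups.

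First I would use the globally convergent Taylor expansion to write $f=\sum f_{r_1,\ldots,r_n}$, where $f_{r_1,\ldots,r_n}$ is homogeneous of multi-degree $(r_1,\ldots,r_n)$ in its arguments. Because rescaling each argument commutes with the $\group$-action, uniqueness of Taylor coefficients implies that each $f_{r_1,\ldots,r_n}$ is itself $\group$-equivariant. This reduces the theorem to proving the analogous structural statement for multi-homogeneous equivariant polynomials; convergence of the outer sum in the conclusion is then inherited from convergence of the Taylor series.

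Next, via polarisation a multi-homogeneous polynomial of multi-degree $(r_1,\ldots,r_n)$ from $\prod_i \mcT_{k_i}(\bbR^d,\chi_i)$ to $\mcT_{k'}(\bbR^d,\chi')$ is the same data as a $\group$-equivariant linear map
\begin{equation*}
\phi:\bigotimes_{i=1}^n \mathrm{Sym}^{r_i}\mcT_{k_i}(\bbR^d,\chi_i)\longrightarrow \mcT_{k'}(\bbR^d,\chi').
\end{equation*}
The preserved bilinear form ($\bbI_{s,d-s}$ for $\mathop{O}(s,d-s)$ or $J_d$ for $Sp(d)$) provides a $\group$-equivariant identification of each $\mcT_{k_i}(\bbR^d,\chi_i)^{*}$ with $\mcT_{k_i}(\bbR^d,\chi_i^{-1})$ (and $\chi_i^{-1}=\chi_i$ in our setting, since every available character has order at most $2$). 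Under the standard isomorphism $\mathrm{Hom}_{\group}(V,W)\cong (V^{*}\otimes W)^{\group}$ for finite-dimensional representations of reductive groups, $\phi$ corresponds to a $\group$-isotropic tensor $c$ of total order $\sum_i r_i k_i + k'$ and character $\left(\prod_i \chi_i^{r_i}\right)\chi'$, and evaluating $\phi$ on $\bigotimes_i a_i^{\otimes r_i}$ is realised as a $\group$-equivariant contraction $\iota^{\group}_k$ against $c$. Re-indexing by listing each $i$ with multiplicity $r_i$ in non-decreasing order as $\ell_1\le\cdots\le\ell_r$ recovers the exact form, character, and order of $c_{\ell_1,\ldots,\ell_r}$ stated in the theorem.

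The main obstacle is the isomorphism $\mathrm{Hom}_{\group}(V,W)\cong (V^{*}\otimes W)^{\group}$ invoked above: it rests on complete reducibility, which for $\mathop{O}(d)$ followed from Haar averaging over a compact group, but $\mathop{O}(s,d-s)$ with $0<s<d$ and $Sp(d,\bbR)$ are non-compact. I would handle this via Weyl's unitarian trick: both groups are real forms of the reductive complex algebraic groups $\mathop{O}(d,\bbC)$ and $Sp(d,\bbC)$, and their real points are Zariski dense in the relevant connected components of the complexifications, so polynomial $\group$-equivariance is equivalent to polynomial equivariance for the complex form, for which one can average over a maximal compact subgroup ($\mathop{O}(d)$ and the compact symplectic $\mathrm{USp}(d)$ respectively). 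Once complete reducibility is in hand, the structural argument mirrors the compact $\mathop{O}(d)$ case, and the resulting $\group$-isotropic tensors are classified in terms of $\bbI_{s,d-s}$ and the Levi-Civita symbol (or of $J_d$) by the First Fundamental Theorem of invariant theory for the classical groups, giving the analogue of Lemma~\ref{lemma:invariant_tensors_delta_simpleform} for each $\group$.
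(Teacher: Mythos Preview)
Your proposal is correct and follows essentially the same three-stage strategy as the paper: reduce the entire function to its multi-homogeneous components, identify each component with a linear map between tensor spaces, and extract an isotropic tensor, handling the non-compactness of $\mathop{O}(s,d-s)$ and $Sp(d)$ via complexification and the unitarian trick (the paper calls this ``complexly averageable'' and averages over a Zariski-dense compact subgroup of the complexification, then takes real parts to descend to $\bbR$).

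Two comments. First, the isomorphism $\mathrm{Hom}_{\group}(V,W)\cong (V^{*}\otimes W)^{\group}$ does \emph{not} rest on complete reducibility: the canonical identification $\mathrm{Hom}(V,W)\cong V^{*}\otimes W$ is $\group$-equivariant for any group acting linearly, and taking fixed points on both sides is pure linear algebra. So your second paragraph already produces a real $\group$-isotropic tensor $c$ directly over $\bbR$, and your third paragraph (the unitarian trick) is in fact unnecessary for the statement as written. The paper takes the more hands-on route of writing the linear map as a contraction against an \emph{arbitrary} tensor and then averaging over the compact subgroup to make it isotropic; your abstract route is cleaner once the confusion about reductivity is removed. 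Second, the appeal to the First Fundamental Theorem in your last sentence is not needed here: the theorem only asserts that each $c_{\ell_1,\ldots,\ell_r}$ is $\group$-isotropic, not what explicit form it takes. That classification is used later (for Corollary~\ref{cor:equivfromvectorstotensors_general_OSP}), but it is separate from this result.
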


Using the above theorem and an analogous version of Lemma~\ref{lemma:invariant_tensors_delta_simpleform} that characterizes $G$-isotropic tensors (see Proposition \ref{prop:fixedtensors_general} in Appendix \ref{app.other_groups} or \cite[Theorem 5.3.3]{goodman_wallach_2009}), we can then prove the following corollary, which generalizes Corollary~\ref{cor:vector_to_tensors}.

\begin{cor}\label{cor:equivfromvectorstotensors_general_OSP}
Let $\group$ be either $\mathop{O}(s,d-s)$ or $Sp(d)$ and $f: \prod_{i=1}^n \mcT_1\qty(\bbR^d, \chi_0) \to \mcT_{k}\qty(\bbR^d,\chi_0)$, with $\chi_0$ the constant map to $1$, be a $\group$-equivariant entire function.
Then we may write $f$ as follows:
\begin{equation}
f(v_1,\ldots,v_n) = \sum_{t=0}^{\lfloor \frac{k}{2} \rfloor}\sum_{\sigma \in S_{k}} \sum_{1 \leq J_1\leq \cdots \leq J_{k-2t} \leq n} q_{t,\sigma,J}\qty(\qty(\langle v_i,v_j\rangle_{\group})_{i,j=1}^n)\, \qty(v_{J_1} \otimes \ldots \otimes v_{J_{k-2t}} \otimes \theta_G^{\otimes t})^\sigma
\end{equation}
where $\langle\,\cdot\,,\,\cdot\,\rangle_{\group}=\langle\,\cdot\,,\,\cdot\,\rangle_{s}$ and $\theta_G=\left[\bbI_{s,d-s}\right]_{i,j}$ if $\group=\mathop{O}(s,d-s)$, and $\langle\,\cdot\,,\,\cdot\,\rangle_{\group}=\langle\,\cdot\,,\,\cdot\,\rangle_{\mathrm{symp}}$ and $\theta_G=\left[J_d\right]_{i,j}$ if $\group=Sp(d)$, and $q_{t,\sigma,J}$ is an entire function that depends on the tuple $(t,\sigma,J)$ and whose inputs are all possible inner products between the input vectors and whose output is a scalar.
\end{cor}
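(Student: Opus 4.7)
The plan is to mirror the proof of Corollary~\ref{cor:vector_to_tensors}, replacing the Kronecker delta $\delta$ by $\theta_G$ and the Euclidean contraction by the $G$-equivariant contraction $\iota^G$. First, I would apply Theorem~\ref{theo:entiretheo_real} with $k_i = 1$ and all characters equal to the trivial $\chi_0$, which writes
\[
f(v_1, \ldots, v_n) = \sum_{r \geq 0} \sum_{1 \leq \ell_1 \leq \cdots \leq \ell_r \leq n} \contract{r}^{G}\!\left( v_{\ell_1} \otimes \cdots \otimes v_{\ell_r} \otimes c_{\ell_1, \ldots, \ell_r}\right),
\]
with each $c_{\ell_1, \ldots, \ell_r} \in \mcT_{r+k}(\bbR^d, \chi_0)$ being $G$-isotropic. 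Next, I would invoke Proposition~\ref{prop:fixedtensors_general}, the generalization of Lemma~\ref{lemma:invariant_tensors_delta_simpleform}: because the character is trivial, $c_{\ell_1, \ldots, \ell_r}$ vanishes unless $r + k$ is even, and otherwise equals $\sum_{\tau \in S_{r+k}} \alpha_\tau \left(\theta_G^{\otimes (r+k)/2}\right)^\tau$ for some scalars $\alpha_\tau$.

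The heart of the argument is to simplify a single term $\contract{r}^{G}\!\left(v_{\ell_1} \otimes \cdots \otimes v_{\ell_r} \otimes (\theta_G^{\otimes m})^\tau\right)$, where $m = (r+k)/2$. I would read it as a tensor diagram in which every $\theta_G$ (either from the contraction $\iota^G$ or from the isotropic factor) is an edge between two index slots, so that the $r + k$ ``external'' endpoints split into $r$ slots carrying one input vector each and $k$ output slots. Every external endpoint is reached from a unique other external endpoint by walking a chain of at most three $\theta_G$'s. Using the identities $\bbI_{s,d-s}^2 = \bbI_d$ for $G = \mathop{O}(s,d-s)$ and $J_d^2 = -\bbI_d$ for $G = Sp(d)$, each such chain collapses: an output-output chain becomes a single $\theta_G$ between the two output indices, an input-output chain becomes a bare placement of the input vector at that output slot (with a possible $\pm 1$ sign in the symplectic case), and an input-input chain becomes a factor $\pm \langle v_{\ell_i}, v_{\ell_j} \rangle_G$.

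Letting $t$ denote the number of output-output chains, there are $k - 2t$ input vectors appearing at output positions, which I would record by a nondecreasing multi-index $J = (J_1, \ldots, J_{k-2t})$, while the remaining $r - (k-2t)$ input vectors pair up into $(r-(k-2t))/2$ inner products. A permutation $\sigma \in S_k$ encodes how the $k - 2t$ vector placements and $2t$ $\theta_G$-endpoints are distributed among the $k$ output axes. Summing the coefficients $\alpha_\tau$ (with the symplectic signs) over all $r$ and $\tau$ giving the same triple $(t, \sigma, J)$ collects into a single scalar $q_{t, \sigma, J}\!\left((\langle v_i, v_j \rangle_G)_{i,j=1}^n\right)$; the global convergence of the series for $f$ transfers to the $q_{t, \sigma, J}$, making them entire in the inner-product variables.

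The main obstacle will be the bookkeeping of the chain-collapse step: I will have to verify that after the simplification the dependence on unpaired input vectors is exclusively through their pairwise $G$-inner products, and that the $\pm 1$ signs arising from the antisymmetry of $J_d$ in the symplectic case combine coherently so that the coefficient functions $q_{t, \sigma, J}$ are well-defined. Once this combinatorial step is carried out, the claimed expansion follows immediately by grouping the terms.
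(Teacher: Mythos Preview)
Your proposal is correct and follows essentially the same route as the paper. Both arguments first reduce, via Theorem~\ref{theo:entiretheo_real} and Proposition~\ref{prop:fixedtensors_general}, to analyzing a single summand $\contract{r}^{G}\!\left(v_{\ell_1}\otimes\cdots\otimes v_{\ell_r}\otimes(\theta_G^{\otimes m})^\tau\right)$ and then classify how each copy of $\theta_G$ pairs with its neighbors; your three chain types (input--input, input--output, output--output) match exactly the paper's case split, which is phrased instead via dual bases $\theta_G=e_i\otimes\tilde e_i$ and the identities $\langle v,e_j\rangle\langle w,\tilde e_j\rangle=\pm\langle v,w\rangle_G$, $\langle v,\tilde e_j\rangle e_j=v$, $\langle v,e_j\rangle\tilde e_j=\pm v$, and $e_j\otimes\tilde e_j=\pm\theta_G$.

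The only cosmetic difference is the technical device used to collapse chains: you invoke the matrix identities $\bbI_{s,d-s}^2=\bbI_d$ and $J_d^2=-\bbI_d$ directly, whereas the paper routes the same computation through the dual-basis expansion. Either way the $\pm$ signs in the symplectic case are absorbed into the scalar coefficients $q_{t,\sigma,J}$, so the bookkeeping concern you flag is real but harmless.
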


\section{Numerical Experiments} \label{sec:experiments}

With the preceding theory in place, we can build machine learning models to learn equivariant tensor functions.
We use Corollaries \ref{cor:vector_to_tensors} and \ref{cor:equivfromvectorstotensors_general_OSP} which characterize  the $\mathop{O}(d)$- and Lorentz-equivariant functions from vectors to tensors, and Corollary \ref{cor:symmetric_matrices} which characterizes $\mathop{O}(d)$-equivariant functions from symmetric \tensors{2}{+} to symmetric \tensors{2}{+}.

\textbf{Stress-Strain Tensors.}
We consider the problem materials science from \cite{Garanger_2024}.
We can model an isotropic neo-Hookean hyperelastic material with the equation
\begin{equation}
    W = \frac{\lambda}{2}(\log \det F)^2 - \mu \log \det F + \frac{\mu}{2}\qty(\tr \qty(F^\top F) - 3) ~,
\end{equation}
where $\lambda,\mu$ are model parameters and $F$ is a random deformation gradient. 
For the Cauchy-Green strain tensor $C = F^\top F$, the second Piola-Kirchoff stress tensor is given by
\begin{equation}
    S = \qty(\frac{1}{2} \lambda \log \det C - \mu) C^{-1} + \mu \bbI_d ~.
\end{equation}
Thus $S$ is a function of $C$.
Both $S$ and $C$ are \tensors{2}{+}, and the function is $\mathop{O}(d)$-equivariant, so we can parameterize this function by Corollary \ref{cor:symmetric_matrices}.
We enforce permutation equivariance of the function of the eigenvalues by \cite{MaronBSL19}.
We compare our model to an MLP baseline, the MLP baseline trained on an augmented dataset with 4 random rotations, and the method from \cite{Garanger_2024} which is also equivariant.
The results are shown in Table \ref{tab:test_materials}.
We can see that for all dataset sizes, our equivariant model performs dramatically better than the other models.
See Appendix \ref{appendix:stress_strain_tensor} for further model and training details.


\begin{table}
    \small
    \centering
    \begin{tabular}{cccccc}
        \toprule
        dataset size & MLP baseline & MLP augmented & TFENN & Ours \\
        \midrule
        n = 5,000 & 1.586e-4 $\pm$ 2.307e-6 & 2.020e-5 $\pm$ 2.141e-7 & 5.3e-5 & \textbf{4.057e-6 $\pm$ 3.458e-7} \\
        \midrule
        n = 20,000 & 4.014e-5 $\pm$ 1.476e-6 & 9.365e-6 $\pm$ 1.584e-7 & 3.0e-5 & \textbf{7.748e-7 $\pm$ 2.911e-7} \\
        \midrule
        n = 40,000 & 2.766e-5 $\pm$ 8.134e-7 & 7.516e-6 +- 1.457e-7 & 1.0e-5 & \textbf{3.310e-6 $\pm$ 8.448e-7} \\
        \bottomrule
    \end{tabular}
    \caption{ Test error comparison on the TFENN \cite{Garanger_2024} dataset, averaged over 5 trials with standard deviation given as $\pm0.xxx$.
    The metric is the squared Frobenius norm of the difference of the predicted and target \tensor{2}{+}, so lower values are better.
    For each row, the best value is \textbf{bolded}.
    The TFENN errors are the results reported in \cite{Garanger_2024}.
    \vspace{-10pt}
    }
    \label{tab:test_materials}
\end{table}

\textbf{Path Signature.}
Let $x : [0,T] \to \bbR^d$ be a continuous path of bounded variation.
The path signature $S(x)$ is a sequence of tensors $S_0(x), S_1(x), S_2(x), \ldots$, where  $S_0(x) = 1$ and $S_k(x)$ for $k>0$ is an \tensor{k}{+} defined as:
\begin{equation}\label{eq:path_signature}
    [S_k(x)]_{i_1,\ldots,i_k} = \int_{\vec{t} \in \Delta_k([0,T])} [\dot{x}_{t_1}]_{i_1} \cdots [\dot{x}_{t_k}]_{i_k} dt_1 \cdots dt_k ~,
\end{equation}
where $\dot{x}_{t_i} = \frac{dx}{dt}$, $\Delta_k([0,T]):=\qty{(t_1,\ldots,t_k) \in \bbR^m : 0 < t_1 < \ldots < t_k < T}$ is the $k$-dimensional simplex,
and the integral is in the sense of a Riemmann-Stieljes integral.

The path signature is a useful object when working with path data because it nicely encodes the properties of a path (see ~\cite{lyons2007rough_paths_book}). For example, if $x,y$ are regular paths, then $S(x) = S(y)$ if and only if $x$ and $y$ are the same up to translation and reparameterization~\citep{chen_signature_tensors2}. This result generalizes to non-regular paths~\citep{hamblylions2010} and it's fundamental in reconstructing paths from signatures~\citep{pathlearning,pathrecovery}. 
Furthermore, we can approximate any function on a path by a linear function on its path signature \cite{path_signature_universality2022}.

We will consider the problem of approximating the path signature from a small sample of points along the path, which if done well allows us to reconstruct the path (see for example \citep{pathlearning}).
Let $\mathcal{P} = \qty{ x: [0,T] \to \bbR^d}$ be a family of paths.
Let $n \in \bbN$ be fixed and small, and let $0 \leq t_1 < \ldots < t_n \leq T$ also be fixed.
Suppose for $x \in \mathcal{P}$ that we know $x(t_1), \ldots, x(t_n)$, then the problem is to approximate the truncated signature $S_M(x) = \qty{S_k(x), 1 \leq k \leq M}$.


Our baseline for comparison is the discrete version of \eqref{eq:path_signature} on the $n$ points.
It is not hard to see that $\hat{S}_k(x(t_1),\ldots,x(t_n))$ is an $\mathop{O}(d)$-equivariant function from $n$ input vectors to a \tensor{k}{+}, so we can parameterize it with Corollary \ref{cor:vector_to_tensors}.
All the $q_{t,\sigma\,J}$ functions are learned as a single, shared MLP.
Further, it is also equivariant under the Lorentz and symplectic groups, making it well-behaved in more general physical settings.
We compare against three baseline MLP methods, one with the same width as our method, one with the same number of parameters, and one trained on an augmented dataset with 4 random transformations. Further architecture and training details are in Appendix \ref{appendix:path_signature_models}.

The results are shown in Table \ref{tab:path_signature}.
We generate paths for both the orthogonal group and the Lorentz group, see Appendix \ref{appendix:path_signature_data}.
The learned methods perform better than the fixed naive methods, and our method does the best of all.
This method would be a viable first step for processing path data for any downstream learning problem.

\begin{table}
    \small
    \centering
    \begin{tabular}{cccccc}
        \toprule
        Group & Discrete \eqref{eq:path_signature} & MLP (same width) & MLP (same $\#$ params) & MLP augmented & Ours \\
        \midrule
        $\mathop{O}(d)$ & 1.336 & 0.255 $\pm$ 0.003 & 0.071 $\pm$ 0.001 & 0.007 & \textbf{0.002} \\
        Lorentz & 1.489 & 1.391 $\pm$ 0.005 & 0.450 $\pm$ 0.002 & 0.186 $\pm$ 0.002 & \textbf{0.005} \\
        \bottomrule
    \end{tabular}
    \caption{ Path signature test performance averaged over 3 trials with standard deviation given as $\pm0.xxx$ when it is at least 1e-3. 
    The metric for true truncated signature $S_M(x)$ and predicted truncated signature $\hat{S}_M(x(t_1),\ldots,x(t_n))$ is $\ell\qty(S_M(x),\hat{S}_M(x(t_1),\ldots,x(t_n))) = \frac{1}{M} \sum_{k=1}^M \frac{1}{d^k} \norm{S_k(x)-\hat{S_k}(x(t_1),\ldots,x(t_n))}_F^2$ where $\norm{\cdot}_F$ is the Frobenius norm, so lower values are better.
    For each row, the best value is \textbf{bolded}.
    \vspace{-10pt}
    }
    \label{tab:path_signature}
\end{table}

\textbf{Sparse Vector Estimation.}
We consider the problem of finding a planted sparse vector in a linear subspace. 
This problem was introduced in \cite{Spielman2012} in the context of dictionary learning. The works \cite{hopkins2016fast} and \cite{mao2022optimal} developed state-of-the-art methods with theoretical guarantees based on sum-of-squares (SoS). The results in this section suggest (i) the SoS methods outperform learning-based methods when the (strong) SoS assumptions are met, (ii) equivariant tensor learning  can be used to learn models with good performance when the SoS assumptions are not met, and (iii) equivariant tensor learning outperforms standard machine learning models where no structure is imposed. 

The problem is defined as follows. Let $v \in \bbR^n$ be an (approximately) sparse vector of unit length, construct $v_0,\ldots,v_{d-1} \in \bbR^n$ by adding noise to $v$ according to the procedure described in Appendix \ref{appendix:sparse_vector_subspace}, then consider $S$ to be an $n \times d$ matrix whose columns form a random orthonormal basis of span$\qty{v_0,\ldots,v_{d-1}}$.
The goal is to recover $v$ from $S$.
The SoS methods consider explicit maps $h: (\mathbb{R}^d)^n \to \mathcal S_d$ that take the rows of $S$ (denoted by $a_1^\top, \ldots a_n^\top \in\mathbb R^d$) and output a $d\times d$ symmetric matrix. The estimator for $v$ is the multiplication of $S$ times the top eigenvector of $h(a_1,\ldots,a_n)$: 
\begin{equation}\label{eq:sos_method}
    \hat{v} = S \, \lambda_{\text{vec}}(h(a_1,\ldots,a_n)) ~.
\end{equation}
The SoS methods have theoretical guarantees under strict assumptions described in Appendix \ref{appendix:sparse_vector_models}. Here we learn a function $h$ from data and we compare the learned model with SoS methods and non-equivariant learned baselines for settings that do and do not satisfy the theoretical assumptions (different sampling methods for $v_0,\ldots, v_n$ described in Appendix \ref{appendix:sparse_vector_data}). 
Our method uses Corollary \ref{cor:vector_to_tensors} to learn $h$ (see Appendix \ref{appendix:sparse_vector_equivariance} for an explanation of why this problem is $O(d)$-equivariant).
We also include a variant which only takes the norms of each vector as input, instead of all the pairwise cross products.
See Appendix \ref{appendix:sparse_vector_models} for the details of all models. The results are displayed in Table~\ref{tab:test_sparse_vector}.

\begin{table}
    \small
    \centering
    \begin{tabular}{cccccc}
        \toprule
        sampling & $\Sigma$ & SoS & MLP baseline & Ours (Diag) & Ours \\
        \midrule
        & Random & 0.610 $\pm$ 0.009 & 0.241 $\pm$ 0.019 & 0.493 $\pm$ 0.005 & \textbf{0.938 $\pm$ 0.002} \\
        Accept/Reject & Diagonal & 0.448 $\pm$ 0.012 & 0.196 $\pm$ 0.011 & \textbf{0.589 $\pm$ 0.026} & 0.465 $\pm$ 0.027 \\
        & Identity & \textbf{0.606 $\pm$ 0.014} & 0.196 $\pm$ 0.008 & 0.351 $\pm$ 0.065 & 0.190 $\pm$ 0.008 \\
        \midrule
        & Random & \textbf{0.962 $\pm$ 0.002} & 0.242 $\pm$ 0.006 & 0.917 $\pm$ 0.004 & 0.937 $\pm$ 0.002 \\
        Bernoulli-Gaussian & Diagonal & \textbf{0.949 $\pm$ 0.005} & 0.205 $\pm$ 0.013 & 0.914 $\pm$ 0.006 & 0.463 $\pm$ 0.018 \\
        & Identity & \textbf{0.962 $\pm$ 0.002} & 0.196 $\pm$ 0.009 & 0.908 $\pm$ 0.006 & 0.342 $\pm$ 0.043 \\
        \midrule
        Corrected & Random & 0.412 $\pm$ 0.017 & 0.239 $\pm$ 0.012 & 0.372 $\pm$ 0.011 & \textbf{0.935 $\pm$ 0.002} \\
        Bernoulli-Gaussian & Diagonal & 0.288 $\pm$ 0.018 & 0.206 $\pm$ 0.003 & \textbf{0.550 $\pm$ 0.026} & 0.460 $\pm$ 0.022 \\
        & Identity & \textbf{0.412 $\pm$ 0.011} & 0.198 $\pm$ 0.005 & 0.239 $\pm$ 0.025 & 0.197 $\pm$ 0.011 \\
        \midrule
        & Random & 0.526 $\pm$ 0.020 & 0.923 $\pm$ 0.004 & 0.437 $\pm$ 0.034 & \textbf{0.957 $\pm$ 0.001} \\
        Bernoulli-Rademacher & Diagonal & 0.334 $\pm$ 0.024 & 0.864 $\pm$ 0.005 & 0.588 $\pm$ 0.011 & \textbf{0.903 $\pm$ 0.004} \\
        & Identity & 0.524 $\pm$ 0.010 & 0.845 $\pm$ 0.006 & 0.317 $\pm$ 0.046 & \textbf{0.889 $\pm$ 0.003} \\
        \bottomrule
    \end{tabular}
    \caption{ Test error comparison on synthetic data averaged over 5 trials ($n=100,d=5,\epsilon=0.25$) with the standard deviation given by $\pm 0.xxx$.
    The metric $\langle v,\hat{v} \rangle^2$ ranges from $0$ to $1$ with $1$ indicating the estimate $\hat{v}$ identical to the true $v$.
    For each row, the best value is \textbf{bolded}.
The SoS methods perform best when their assumptions are met, such as identity covariance for the noise vectors, but perform worse than our learned models when using Random or Diagonal covariances.
One exception to this trend is the Bernoulli-Gaussian sampling.
This is likely because in expectation the BG satisfies the sparsity requirements of SoS by a large margin (see Appendix~\ref{appendix:sparse_vector_data}). By contrast, the Corrected Bernoulli-Gaussian has lower sparsity and the learned models perform better. Finally, we see that in all experiments, the baseline MLP generalizes poorly, despite doing well on the training data (Table~\ref{tab:train_sparse_vector} in Appendix).
This is consistent with the claim that enforcing symmetries improves generalization performance.
    \vspace{-10pt}
    }
    \label{tab:test_sparse_vector}
\end{table}

\section{Discussion}

This paper provides a full characterization of polynomial functions from multiple tensor inputs to tensor outputs that are equivariant with respect to the diagonal action by classical Lie groups, including the orthogonal group, the symplectic group, and the Lorentz group. 

Our main goal is to define equivariant machine learning models. To the best of our knowledge this is the first work that provides a recipe for equivariant machine learning models for tensors at this level of generality. 
We apply the resulting models to time series data, stress and strain tensors, and sparse vectors.
The equivariant models outperform all non-equivariant baseline models, and in the case of the sparse vector problem, the learned models can operate in settings where theoretical guarantees have yet to be developed.

\paragraph{Acknowledgements}

The authors would like to thank Teresa Huang, Ben Blum-Smith, and Daniel Packer for helpful discussions. WGG and JTC are thankful to Jazz G. Suchen for useful suggestions regarding Appendix~\ref{app.other_groups}.  WGG and SV are partially supported by NSF CCF 2212457 and the NSF–Simons Research Collaboration on the Mathematical and Scientific Foundations of Deep Learning (MoDL) (NSF DMS 2031985). SV is also funded by NSF
CAREER 2339682 and NSF BSF 2430292. JTC, NFM, ASL and SV are thankful to the 2022 \emph{Mathematics Research Communities} program of the American Mathematical Society, particularly the MRC Conference ``Data Science at the Crossroads of Analysis, Geometry, and Topology'' that took place from May 29th to June 4th of 2022, for allowing them to start collaborating. JTC thanks SV, WGG and the other people at Hopkins for the wonderful working environment when writing this (and other) work(s).

\paragraph{Reproducibility}

All our code is open source, and it is available at: \url{https://github.com/WilsonGregory/TensorPolynomials}.
All datasets are synthetically generated by our code, or public.
We also include all necessary experimental details in appendices \ref{appendix:stress_strain_tensor}, \ref{appendix:path_signatures}, and \ref{appendix:sparse_vector_details}.

\bibliography{ref}
\bibliographystyle{iclr2026_conference}

\appendix

\section{Basic properties of \texorpdfstring{$\mathop{O}(d)$}{O(d)} actions on tensors}\label{appendix:basic_results}

In this section, we will show that the basic operations are $\mathop{O}(d)$-equivariant and linear by direct computation. We do so explicitly by performing routine computations. However, the universal property of tensor products, which we use in Appendix~\ref{app.other_groups}, would give immediate proofs of these statements.



\begin{proposition}\label{prop:outer_product_bilinear_equivariance}
The outer product is a $\mathop{O}(d)$-equivariant bilinear map. In other words, for $g\in \mathop{O}(d)$, $a,a'\in \mcT_k\qty(\bbR^d,p)$, $b,b'\in \mcT_{k'}\qty(\bbR^d,p')$ and $\alpha,\beta \in\bbR$, we have
$
g\cdot (a\otimes b)=(g\cdot a)\otimes (g \cdot b)
$,
$
(\alpha a+ \beta a')\otimes b= \alpha (a\otimes b)+ \beta (a'\otimes b)
$,
and
$
a\otimes (\alpha b+ \beta b')= \alpha (a\otimes b) + \beta (a\otimes b')
$. In particular, if $c\in \mcT_{k'}\qty(\bbR^d,p')$ is an $\mathop{O}(d)$-isotropic tensor, then the function mapping
\begin{equation} \label{eq:isotropic_create_equivariant}
\mcT_k\qty(\bbR^d,p) \to \mcT_{k+k'}\qty(\bbR^d,pp') \quad \text{by} \quad
a\mapsto a\otimes c
\end{equation}
is an $\mathop{O}(d)$-equivariant linear map.
\end{proposition}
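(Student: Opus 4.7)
The plan is to verify the three claims --- equivariance, bilinearity in the first argument, and bilinearity in the second argument --- directly from the coordinate formula for the outer product, and then derive the consequence about $a\mapsto a\otimes c$ as an immediate corollary.

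First, I would handle bilinearity. By Definition of the outer product, $[a\otimes b]_{i_1,\ldots,i_{k+k'}} = [a]_{i_1,\ldots,i_k}\,[b]_{i_{k+1},\ldots,i_{k+k'}}$. Since tensor addition and scalar multiplication are defined entrywise, distributivity of real multiplication over addition yields both $(\alpha a+\beta a')\otimes b = \alpha(a\otimes b)+\beta(a'\otimes b)$ and the symmetric identity in the second slot; no use of the group action is needed here.

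Next, for equivariance, I would apply formula \eqref{eq:tensor_rotation_explicit}. The outer product $a\otimes b$ lies in $\mcT_{k+k'}(\bbR^d,pp')$, so
\begin{equation*}
[g\cdot(a\otimes b)]_{i_1,\ldots,i_{k+k'}} = \det(M(g))^{\frac{1-pp'}{2}}\,[a]_{j_1,\ldots,j_k}[b]_{j_{k+1},\ldots,j_{k+k'}}\,[M(g)]_{i_1,j_1}\cdots[M(g)]_{i_{k+k'},j_{k+k'}},
\end{equation*}
while the right-hand side $(g\cdot a)\otimes(g\cdot b)$ carries a prefactor $\det(M(g))^{\frac{1-p}{2}}\det(M(g))^{\frac{1-p'}{2}} = \det(M(g))^{\frac{2-p-p'}{2}}$ against the same monomial in $M(g)$. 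The only substantive step is then to check that these two determinant prefactors agree. Since $\det(M(g))\in\{-1,+1\}$, it suffices to verify $\tfrac{1-pp'}{2}\equiv\tfrac{2-p-p'}{2}\pmod 2$, equivalently $(1-p)(1-p')\equiv 0\pmod 4$; inspecting the four cases $p,p'\in\{-1,+1\}$ shows the product is either $0$ or $4$, so both exponents yield the same sign. This is the only subtle point in the argument, and I expect it to be the main (very mild) obstacle.

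Finally, for the ``in particular'' clause, fix an $\mathop{O}(d)$-isotropic $c\in\mcT_{k'}(\bbR^d,p')$ and define $F(a)=a\otimes c$. Linearity of $F$ is bilinearity of $\otimes$ in its first argument with the second argument held fixed. For equivariance, apply the equivariance of $\otimes$ and then the definition of isotropic:
\begin{equation*}
F(g\cdot a) = (g\cdot a)\otimes c = (g\cdot a)\otimes(g\cdot c) = g\cdot(a\otimes c) = g\cdot F(a),
\end{equation*}
for every $g\in\mathop{O}(d)$, which completes the proof.
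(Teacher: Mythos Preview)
Your proof is correct and follows essentially the same direct-computation approach as the paper: verify bilinearity and equivariance coordinatewise via the outer-product formula and \eqref{eq:tensor_rotation_explicit}, reduce the only nontrivial point to matching the determinant prefactors, and then deduce the claim about $a\mapsto a\otimes c$ from equivariance plus $g\cdot c=c$. The paper handles the prefactor identity by a bare case analysis over $p,p',\det(M(g))\in\{\pm 1\}$, whereas you phrase it as a parity congruence; both are equivalent and equally short.
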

\begin{proposition}\label{prop:contraction_linear_equivariance}
The $k$-contraction $\contract{k}:\mcT_{2k+k'}\qty(\bbR^d,p)\rightarrow \mcT_{k'}\qty(\bbR^d,p)$ (def. \ref{def:k-contraction}) is an $\mathop{O}(d)$-equivariant linear map.
\end{proposition}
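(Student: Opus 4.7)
The plan is to verify both linearity and $\mathop{O}(d)$-equivariance directly from the definitions, with the key computational ingredient being the orthogonality relation $M(g)^\top M(g) = \bbI_d$. Linearity is immediate: since $[\contract{k}{a}]_{j_1,\ldots,j_{k'}} = \sum_{i_1,\ldots,i_k} [a]_{i_1,\ldots,i_k,i_1,\ldots,i_k,j_1,\ldots,j_{k'}}$ is a finite sum of entries of $a$, and addition and scalar multiplication on $\mcT_{2k+k'}(\bbR^d,p)$ are entrywise, $\contract{k}$ is a linear map between vector spaces.

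For equivariance, the strategy is to compute $[\contract{k}(g\cdot a)]_{\ell_1,\ldots,\ell_{k'}}$ by substituting the explicit formula \eqref{eq:tensor_rotation_explicit} and showing that $k$ pairs of matrix factors collapse to Kronecker deltas. Specifically, after writing
\begin{equation}
[\contract{k}(g\cdot a)]_{\ell_1,\ldots,\ell_{k'}} = \det(M(g))^{\frac{1-p}{2}} [a]_{j_1,\ldots,j_k,m_1,\ldots,m_k,n_1,\ldots,n_{k'}}\,\prod_{s=1}^{k}[M(g)]_{i_s,j_s}[M(g)]_{i_s,m_s}\,\prod_{t=1}^{k'}[M(g)]_{\ell_t,n_t},
\end{equation}
summing over each contracted index $i_s$ yields $\sum_{i_s}[M(g)]_{i_s,j_s}[M(g)]_{i_s,m_s} = [M(g)^\top M(g)]_{j_s,m_s} = [\delta]_{j_s,m_s}$ by orthogonality of $M(g)$. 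Collapsing these Kronecker deltas forces $m_s = j_s$ for each $s$, and the remaining expression is exactly $\det(M(g))^{\frac{1-p}{2}}[\contract{k}(a)]_{n_1,\ldots,n_{k'}}\prod_{t=1}^{k'}[M(g)]_{\ell_t,n_t} = [g\cdot \contract{k}(a)]_{\ell_1,\ldots,\ell_{k'}}$, where we used that $\contract{k}(a) \in \mcT_{k'}(\bbR^d,p)$ inherits the parity $p$ of $a$, so the same determinantal prefactor appears on both sides.

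The main obstacle, if one can call it that, is bookkeeping: keeping track of which indices are summed (the $2k$ contracted indices together with the $2k+k'$ dummy indices of $a$ inside the action formula) and recognizing the pairwise collapses cleanly. There is no conceptual difficulty beyond using orthogonality; in particular, the proof would go through verbatim for an arbitrary group preserving $\delta$ under its contragredient action, which is why the analogous contractions in Section \ref{sec:generalizations} go through once one replaces $\delta$ by $\bbI_{s,d-s}$ or $J_d$, yielding invariance of the appropriate bilinear form.
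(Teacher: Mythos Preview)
Your proof is correct and follows essentially the same approach as the paper: a direct index computation using the orthogonality relation to collapse (or, in the paper's direction, to expand) the contracted pairs of $M(g)$ factors. The only cosmetic difference is that the paper starts from $g\cdot\contract{k}(a)$ and inserts $\delta = M(g)M(g)^\top$ to reach $\contract{k}(g\cdot a)$, whereas you start from $\contract{k}(g\cdot a)$ and use $M(g)^\top M(g)=\delta$ to reach $g\cdot\contract{k}(a)$; these are the same computation run in opposite directions.
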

\begin{proposition}\label{prop:index_permutation_linear_equivariance}
For fixed $\sigma \in S_k$, the tensor index permutation
mapping $\mcT_{k}\qty(\bbR^d,p) \to \mcT_{k}\qty(\bbR^d,p)$ by $a\mapsto a^\sigma$  is an $\mathop{O}(d)$-equivariant linear map.
\end{proposition}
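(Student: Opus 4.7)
The linearity claim is immediate: the map $a \mapsto a^\sigma$ just relabels the entries of $a$ according to $\sigma^{-1}$, and both tensor addition and scalar multiplication are defined entrywise, so $(\alpha a + \beta a')^\sigma = \alpha a^\sigma + \beta (a')^\sigma$ holds componentwise by Definition~\ref{def:permutations-of-indices}.

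The real content is equivariance, i.e.\ $(g \cdot a)^\sigma = g \cdot (a^\sigma)$ for all $g \in \mathop{O}(d)$. My plan is to compute both sides at an arbitrary multi-index $(i_1,\ldots,i_k)$ using the explicit formula~\eqref{eq:tensor_rotation_explicit}. Writing $\tau := \sigma^{-1}$ to keep notation clean, the definition gives
\begin{equation*}
[(g\cdot a)^\sigma]_{i_1,\ldots,i_k} \;=\; [g\cdot a]_{i_{\tau(1)},\ldots,i_{\tau(k)}} \;=\; \det(M(g))^{\frac{1-p}{2}}\,[a]_{j_1,\ldots,j_k}\,[M(g)]_{i_{\tau(1)},j_1}\cdots [M(g)]_{i_{\tau(k)},j_k},
\end{equation*}
where the $j$'s are summed. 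I would then perform the change of dummy variable $j'_n := j_{\sigma(n)}$ (equivalently $j_m = j'_{\tau(m)}$), which is a bijection of summation indices since $\sigma \in S_k$. Under this substitution, the matrix factors pair up cleanly as $[M(g)]_{i_n,j'_n}$ after reindexing the product by $n = \tau(m)$, and the entry of $a$ becomes $[a]_{j'_{\tau(1)},\ldots,j'_{\tau(k)}} = [a^\sigma]_{j'_1,\ldots,j'_k}$. Combining these pieces reproduces exactly the formula~\eqref{eq:tensor_rotation_explicit} applied to $a^\sigma$, yielding $[g\cdot(a^\sigma)]_{i_1,\ldots,i_k}$.

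The main potential pitfall is purely bookkeeping: one must be careful not to confuse $\sigma$ with $\sigma^{-1}$ when relabeling, since the definition of $a^\sigma$ uses $\sigma^{-1}$ on the indices, and because the substitution that moves the permutation from the $M(g)$ factors onto the $a$ tensor involves composing with $\sigma$ from the other side. Introducing $\tau = \sigma^{-1}$ at the outset and tracking each substitution symbolically should make this transparent. Note that the determinant prefactor plays no role in the argument, so the same proof handles both parity cases $p = \pm 1$ uniformly.
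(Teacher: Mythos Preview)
Your proposal is correct and follows essentially the same route as the paper's proof: both compute in coordinates via formula~\eqref{eq:tensor_rotation_explicit} and reduce equivariance to a relabeling of the (scalar) $M(g)$ factors and summation indices. The only cosmetic difference is the direction of the chain---the paper starts from $[g\cdot(a^\sigma)]$ and reorders the $M(g)$ factors to reach $[(g\cdot a)^\sigma]$, whereas you start from $[(g\cdot a)^\sigma]$ and perform the explicit substitution $j'_n = j_{\sigma(n)}$ to reach $[g\cdot(a^\sigma)]$.
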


\begin{proof}[Proof of Proposition~\ref{prop:outer_product_bilinear_equivariance}]
First, we establish equivariance. Let $a \in \mcT_k\qty(\bbR^d,p)$, $b \in \mcT_{k'}\qty(\bbR^d, p')$, and $g \in \mathop{O}(d)$.
    We have
    \begin{align}
        \quad [g \cdot (a \otimes b)]&_{j_1,\ldots,j_{k+k'}}  \nonumber \\
        & = \det(M(g))^{\frac{1-p\,p'}{2}}\, [(a\otimes b)]_{i_1,\ldots,i_{k+k'}} [M(g)]_{j_1,i_1} \cdots [M(g)]_{j_{k+k'},i_{k+k'}} \nonumber \\
        & = 
        \det(M(g))^{\frac{1-p}{2}}\,\det(M(g))^{\frac{1-p'}{2}}\, [a]_{i_1,\ldots,i_k} [b]_{i_{k+1},\ldots,i_{k+k'}} [M(g)]_{j_1,i_1} \cdots
        \nonumber \\
        & \quad \quad \quad \cdots
        [M(g)]_{j_k,i_k}[M(g)]_{j_{k+1},i_{k+1}} \cdots [M(g)]_{j_{k+k'},i_{k+k'}} \nonumber \\
        &= \qty(\det(M(g))^{\frac{1-p}{2}}\,[a]_{i_1,\ldots,i_k} [M(g)]_{j_1,i_1} \cdots [M(g)]_{j_k,i_k})
        \nonumber \\
        & \quad \quad \quad \quad
        \qty(\det(M(g))^{\frac{1-p'}{2}}\,[b]_{i_{k+1},\ldots,i_{k+k'}} [M(g)]_{j_{k+1},i_{k+1}} \cdots [M(g)]_{j_{k+k'},i_{k+k'}}) \nonumber \\
        & = [g \cdot a]_{j_1,\ldots,j_k}[g \cdot b]_{j_{k+1},\ldots,j_{k+k'}} \nonumber \nonumber \\
        & = [g \cdot a \otimes g \cdot b]_{j_1,\ldots,j_{k+k'}} \nonumber ~,
    \end{align}
    where the second equality uses the fact that
    $$
     \det(M(g))^{\frac{1-p\,p'}{2}}
     = \det(M(g))^{\frac{1-p}{2}}\,\det(M(g))^{\frac{1-p'}{2}},
     $$
     which is straightforward to verify via a case analysis over possible parameter values (i.e., $p,p' \in \{+1,-1\}$ and $\det(M(g)) \in \{+1,-1\}$). 

Next, we verify linearity. Let $a,a' \in \mcT_k\qty(\bbR^d,p)$, $b \in \mcT_{k'}\qty(\bbR^d,p')$, and $\alpha,\beta \in \mathbb{R}$.
Then,
    \begin{align}
        [(\alpha a + \beta a') \otimes b]_{i_1,\ldots,i_{k+k'}} &= [(\alpha a + \beta a')]_{i_1,\ldots,i_k}  [b]_{i_{k+1},\ldots,i_{k+k'}} \\
        &= \alpha [a]_{i_1,\ldots,i_k}[b]_{i_{k+1},\ldots,i_{k+k'}}  + \beta [a']_{i_1,\ldots,i_k}  [b]_{i_{k+1},\ldots,i_{k+k'}} \\
        &= \alpha [a \otimes b]_{i_1,\ldots,i_{k+k'}} + \beta [a' \otimes b]_{i_1,\ldots,i_{k+k'}} .
\end{align}
The linearity in the second argument follows in the same manner.

Finally, \eqref{eq:isotropic_create_equivariant} follows immediately from the fact that the bilinear $\mathop{O}(d)$-equivariance.
\end{proof}

\begin{proof}[Proof of Proposition~\ref{prop:contraction_linear_equivariance}]
To establish equivariance, let $a \in \mcT_{2k+k'}\qty(\bbR^d,p)$ and let $g \in \mathop{O}(d)$.
    Then,
    \begin{align}
        &[g \cdot \contract{k}{a}]_{j_1,\ldots,j_{k'}} \\
        &= \det(M(g))^{\frac{1-p}{2}}\,[a]_{\ell_1,\ldots,\ell_k,\ell_1,\ldots,\ell_k,i_1,\ldots,i_{k'}} [M(g)]_{j_1,i_1} \cdots [M(g)]_{j_{k'},i_{k'}} \\
        &= \det(M(g))^{\frac{1-p}{2}}\,[a]_{\ell_1,\ldots,\ell_{2k},i_1,\ldots,i_{k'}} [\delta]_{\ell_1,\ell_{k+1}} \cdots [\delta]_{\ell_k,\ell_{2k}}[M(g)]_{j_1,i_1} \cdots [M(g)]_{j_{k'},i_{k'}} \\
        &= \det(M(g))^{\frac{1-p}{2}}\,[a]_{\ell_1,\ldots,\ell_{2k},i_1,\ldots,i_{k'}} [M(g)]_{\ell_1,m_1}[M(g)]_{\ell_{k+1},m_1} \cdots 
        \\
        &\quad \quad \quad \cdots
        [M(g)]_{\ell_k,m_k}[M(g)]_{\ell_{2k},m_k} [M(g)]_{j_1,i_1} \cdots [M(g)]_{j_{k'},i_{k'}} \\
        &= [g \cdot a]_{m_1,\ldots,m_k,m_1,\ldots,m_k,j_1,\ldots,j_{k'}} \\
        &= [\contract{k}{g \cdot a}]_{j_1,\ldots,j_{k'}},
    \end{align}
    where the third equality uses the fact that $\delta = M(g) \, M(g)^\top$.

Next, to establish linearity, let $a,b\in \mcT_{2k+k'}\qty(\bbR^d,p)$ and let $\alpha,\beta \in \mathbb{R}$. Then,
    \begin{align}
        \qty[\contract{k}{\alpha a + \beta b}]_{j_1,\ldots,j_{k'}} &= [\alpha a + \beta b]_{i_1,\ldots,i_k,i_1,\ldots,i_k,j_1,\ldots,j_{k'}} \\
        &= \alpha [a]_{i_1,\ldots,i_k,i_1,\ldots,i_k,j_1,\ldots,j_{k'}} + \beta [b]_{i_1,\ldots,i_k,i_1,\ldots,i_k,j_1,\ldots,j_{k'}} \\
        &= \alpha [\contract{k}{a}]_{j_1,\ldots,j_{k'}} + \beta [\contract{k}{b}]_{j_1,\ldots,j_{k'}}.
    \end{align}
This completes the proof.
\end{proof}

\begin{proof}[Proof of Proposition~\ref{prop:index_permutation_linear_equivariance}]
Fix $\sigma\in S_k$. To establish equivariance, let $a\in \mcT_k\qty(\bbR^d,p)$ and $g \in \mathop{O}(d)$. Then,
    \begin{align}
        [g \cdot (a^\sigma)]_{j_1,\ldots,j_k} &= \det(M(g))^{\frac{1-p}{2}}\,[a^\sigma]_{i_1,\ldots,i_k}[M(g)]_{j_1,i_1} \cdots [M(g)]_{j_k,i_k} \\
        &= \det(M(g))^{\frac{1-p}{2}}\,[a]_{i_{\sigma^{-1}(1)},\ldots,i_{\sigma^{-1}(k)}}[M(g)]_{j_1,i_1} \cdots [M(g)]_{j_k,i_k} \\
        &= \det(M(g))^{\frac{1-p}{2}}\,[a]_{i_{\sigma^{-1}(1)},\ldots,i_{\sigma^{-1}(k)}}[M(g)]_{j_{\sigma^{-1}(1)},i_{\sigma^{-1}(1)}} \cdots [M(g)]_{j_{\sigma^{-1}(k)},i_{\sigma^{-1}(k)}} \\
        &= [g \cdot a]_{j_{\sigma^{-1}(1)},\ldots,j_{\sigma^{-1}(k)}} \\
        &= [(g \cdot a)^\sigma]_{j_1,\ldots,j_k}, \\
    \end{align}
    where the third equality holds since we are merely reordering the $M(g)$ components---which is allowed because they are scalars.

To show linearity, let $a,b\in \mcT_k\qty(\bbR^d,p)$ and $\alpha,\beta \in \mathbb{R}$. We have
    \begin{align}
        [\qty(\alpha a + \beta b)^{\sigma}]_{i_1,\ldots,i_k} &= [\alpha a + \beta b]_{i_{\sigma^{-1}(1)},\ldots,i_{\sigma^{-1}(k)}} \\
        &= \alpha [a]_{i_{\sigma^{-1}(1)},\ldots,i_{\sigma^{-1}(k)}} + \beta [b]_{i_{\sigma^{-1}(1)},\ldots,i_{\sigma^{-1}(k)}} \\
        &= \alpha [a^\sigma]_{i_1,\ldots,i_k} + \beta [b^\sigma]_{i_1,\ldots,i_k}.
    \end{align}
\end{proof}

\section{Proof of Theorem \ref{thm:equiv_polynomial}}\label{appendix:pf_of_main}

The main idea of the proof of Theorem \ref{thm:equiv_polynomial} is to write out the polynomial $f$ in a way that takes advantage of the tensor operations of Section \ref{sec:background}, then show that each term must be $\mathop{O}(d)$-equivariant (Lemmas \ref{lemma:homogeneous_polys_equivariant} and \ref{lemma:poly_terms_equivariant}).
We then use a group averaging argument to show that $c_{\ell_1,\ldots,\ell_r}$ can be written as an $\mathop{O}(d)$-isotropic tensor.
We state the lemmas, prove the theorem, then prove the lemmas.

\begin{lemma}
\label{lemma:homogeneous_polys_equivariant}
Let $f:\prod_{i=1}^n \mcT_{k_i}\qty(\bbR^d,p_i) \to \mcT_{k'}\qty(\bbR^d,p')$ be a polynomial map of degree $R$, and write
\[
f(a_1,\ldots,a_n) = \sum_{r=0}^R f_r(a_1,\ldots,a_n),
\]
where $f_r:\prod_{i=1}^n \mcT_{k_i}\qty(\bbR^d,p_i) \to \mcT_{k'}\qty(\bbR^d,p')$ is homogeneous degree $r$ polynomial. 
If $f$ is $\mathop{O}(d)$-equivariant, then each $f_r$ is $\mathop{O}(d)$-equivariant.
\end{lemma}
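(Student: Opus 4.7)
The plan is to use a standard trick that separates out homogeneous components via a scalar parameter $t \in \mathbb{R}$. First, I would note that for each input slot, the action of $g \in \mathop{O}(d)$ on $\mcT_{k_i}(\bbR^d, p_i)$ is linear in the tensor argument (the determinantal factor $\det(M(g))^{(1-p_i)/2}$ depends only on $g$, not on $a_i$). Hence for any $t \in \mathbb{R}$ we have $g \cdot (ta_i) = t\,(g\cdot a_i)$, and moreover each homogeneous piece satisfies $f_r(ta_1,\ldots,ta_n) = t^r f_r(a_1,\ldots,a_n)$.

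Next, I would apply the equivariance hypothesis at the scaled input $(ta_1,\ldots,ta_n)$. On one side,
\begin{equation*}
f(g\cdot (ta_1),\ldots,g\cdot(ta_n)) = f(t\,g\cdot a_1,\ldots, t\,g\cdot a_n) = \sum_{r=0}^R t^r f_r(g\cdot a_1,\ldots, g\cdot a_n),
\end{equation*}
while, using linearity of the action on the output space,
\begin{equation*}
g\cdot f(ta_1,\ldots,ta_n) = g\cdot \sum_{r=0}^R t^r f_r(a_1,\ldots,a_n) = \sum_{r=0}^R t^r\,g\cdot f_r(a_1,\ldots,a_n).
\end{equation*}
By $\mathop{O}(d)$-equivariance of $f$, these two expressions are equal for every $t \in \mathbb{R}$.

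Finally, I would fix the input tuple $(a_1,\ldots,a_n)$ and the element $g$, and view each side as a polynomial in $t$ of degree at most $R$ with coefficients in the finite-dimensional vector space $\mcT_{k'}(\bbR^d, p')$. Since the identity holds for infinitely many values of $t$, the coefficients must agree componentwise, yielding $f_r(g\cdot a_1,\ldots, g\cdot a_n) = g\cdot f_r(a_1,\ldots, a_n)$ for each $0 \le r \le R$. As $(a_1,\ldots,a_n)$ and $g$ were arbitrary, each $f_r$ is $\mathop{O}(d)$-equivariant. There is no real obstacle here; the only point requiring care is checking that the group action on the input and output spaces is genuinely linear (so that scalar factors $t$ can be pulled through), which is immediate from \eqref{eq.action} and \eqref{eq:tensor_rotation_explicit}.
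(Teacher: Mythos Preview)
Your proposal is correct and follows essentially the same argument as the paper's own proof: introduce a scalar $t$, expand both sides as polynomials in $t$ using homogeneity and linearity of the action, and compare coefficients. The only difference is cosmetic---you explicitly note that the action commutes with scalar multiplication, which the paper uses without comment.
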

\begin{lemma}
\label{lemma:poly_terms_equivariant}
Let $f_r:\prod_{i=1}^n \mcT_{k_i}\qty(\bbR^d,p_i) \to \mcT_{k'}\qty(\bbR^d,p')$ be a homogeneous polynomial of degree $r$. Then, we can write $f_r$ as
\begin{equation}
        f_r(a_1,\ldots,a_n) = \sum_{1 \leq \ell_1 \leq \ldots \leq \ell_r \leq n} f_{\ell_1,\ldots,\ell_r}(a_{\ell_1},\ldots,a_{\ell_r}),
\end{equation}
where $f_{\ell_1,\ldots,\ell_r}:\prod_{i=1}^r \mcT_{k_{\ell_i}}\qty(\bbR^d,p_{\ell_i}) \to \mcT_{k'}\qty(\bbR^d,p')$ is the composition of the map
\begin{align}
    \prod_{i=1}^r \mcT_{k_{\ell_i}}\qty(\bbR^d,p_{\ell_i})&\rightarrow  \mcT_{\sum_{i=1}^rk_{\ell_i}}\qty(\bbR^d,\prod_{i=1}^r p_{\ell_i})\\
    (a_{\ell_1},\ldots,a_{\ell_r})&\mapsto a_{\ell_1} \otimes \ldots \otimes a_{\ell_r}
\end{align}
with a linear map $\mcT_{\sum_{i=1}^rk_{\ell_i}}\qty(\bbR^d,\prod_{i=1}^r p_{\ell_i}) \to \mcT_{k'}\qty(\bbR^d,p')$.

Moreover, if $f_r$ is $\mathop{O}(d)$-equivariant, then so are the $f_{\ell_1,\ldots,\ell_r}$.
\end{lemma}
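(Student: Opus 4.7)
The plan is to refine the degree-$r$ condition into a multi-homogeneous decomposition over the individual inputs, then exhibit each multi-homogeneous piece as a linear map applied to an outer product, and finally upgrade everything to equivariance via the same scaling trick used in Lemma~\ref{lemma:homogeneous_polys_equivariant}. Substituting $a_i \mapsto t_i a_i$ and using that each $\mcT_{k_i}\qty(\bbR^d,p_i)$ is a real vector space, one has
\[
f_r(t_1 a_1, \ldots, t_n a_n) = \sum_{\substack{d_1, \ldots, d_n \geq 0 \\ d_1 + \cdots + d_n = r}} t_1^{d_1} \cdots t_n^{d_n}\, f^{(d_1,\ldots,d_n)}(a_1, \ldots, a_n),
\]
where $f^{(d_1,\ldots,d_n)}$ is the homogeneous-degree-$d_i$ component in $a_i$. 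Because the monomials $t_1^{d_1} \cdots t_n^{d_n}$ in $\bbR[t_1,\ldots,t_n]$ are linearly independent, this decomposition is unique and each piece is polynomial. The tuples $(d_1, \ldots, d_n)$ with $\sum_i d_i = r$ are in bijection with non-decreasing sequences $1 \leq \ell_1 \leq \cdots \leq \ell_r \leq n$ via multiplicities, so I define $f_{\ell_1, \ldots, \ell_r}(a_{\ell_1}, \ldots, a_{\ell_r}) := f^{(d_1, \ldots, d_n)}(a_1, \ldots, a_n)$.

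Next I would show that each $f_{\ell_1, \ldots, \ell_r}$ factors through the outer product. By multi-homogeneity, every scalar entry of $f_{\ell_1, \ldots, \ell_r}(a_{\ell_1}, \ldots, a_{\ell_r})$ is a real linear combination of products of the form $[a_{\ell_1}]_{\alpha_1} \cdots [a_{\ell_r}]_{\alpha_r}$, and each such product is a single entry of $a_{\ell_1} \otimes \cdots \otimes a_{\ell_r} \in \mcT_{\sum k_{\ell_i}}(\bbR^d, \prod p_{\ell_i})$. Assembling over the output indices produces a linear map $L_{\ell_1,\ldots,\ell_r}: \mcT_{\sum k_{\ell_i}}(\bbR^d, \prod p_{\ell_i}) \to \mcT_{k'}(\bbR^d,p')$ with $f_{\ell_1, \ldots, \ell_r} = L_{\ell_1,\ldots,\ell_r} \circ \otimes$, as required.

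For the equivariance assertion, I would reuse the $t_i$-scaling. Since the $\mathop{O}(d)$-action on each $\mcT_{k_i}(\bbR^d,p_i)$ is linear, $g \cdot (t_i a_i) = t_i (g \cdot a_i)$, and both sides of the equivariance identity for $f_r$ admit the multi-homogeneous expansion; comparing coefficients of each monomial $t_1^{d_1} \cdots t_n^{d_n}$ yields
\[
f_{\ell_1, \ldots, \ell_r}(g \cdot a_{\ell_1}, \ldots, g \cdot a_{\ell_r}) = g \cdot f_{\ell_1, \ldots, \ell_r}(a_{\ell_1}, \ldots, a_{\ell_r})
\]
for every $g \in \mathop{O}(d)$. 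Compatibility of the factorization with the actions on the intermediate tensor space is then immediate from Proposition~\ref{prop:outer_product_bilinear_equivariance}. I expect the only step requiring genuine care to be verifying that the multi-homogeneous decomposition is well-defined and unique; everything else is bookkeeping of tensor orders and parities together with a single appeal to linear independence of monomials.
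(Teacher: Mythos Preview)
Your proposal is correct and follows essentially the same approach as the paper: both arguments obtain the multi-homogeneous decomposition by scaling the inputs by $t_1,\ldots,t_n$ and invoking linear independence of the monomials $t^{\alpha}$, exhibit each piece as a linear map composed with the outer product by writing the polynomial entrywise, and deduce equivariance of each summand by the same comparison of $t$-coefficients. The only difference is the order of exposition (the paper first assumes the decomposition and proves equivariance, then constructs the decomposition), which is immaterial.
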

\begin{remark}
Note that Lemma~\ref{lemma:poly_terms_equivariant} is nothing more than the decomposition of $f_r$ as a sum of mul\-ti\-ho\-mo\-ge\-neous maps in the inputs $a_1,\ldots,a_n$.
\end{remark}

\begin{proof}[Proof of Theorem~\ref{thm:equiv_polynomial}]
Combining Lemmas~\ref{lemma:homogeneous_polys_equivariant} and~\ref{lemma:poly_terms_equivariant}, we can write $f$ as follows:
\begin{equation}
    f(a_1,\ldots,a_n)=\sum_{r=0}^R\sum_{1 \leq \ell_1 \leq \ldots \leq \ell_r \leq n} f_{\ell_1,\ldots,\ell_r}(a_{\ell_1},\ldots,a_{\ell_r}),
\end{equation}
where the $f_{\ell_1,\ldots,\ell_r}$ is the composition of a linear map $\mcT_{k_{\ell_1,\ldots,\ell_r}}\qty(\bbR^d,p_{\ell_1,\ldots,\ell_r}) \to \mcT_{k'}\qty(\bbR^d,p')$ with the map $(a_1,\ldots,a_\ell)\mapsto a_{\ell_1}\otimes\cdots\otimes a_{\ell_r}$. Recall that $k_{\ell_1,\ldots,\ell_r}=\sum_{q=1}^rk_{\ell_q}$ and $p_{\ell_1,\ldots,\ell_r}=\prod_{q=1}^r p_{\ell_q}$. Moreover, by the lemmas, each $f_{\ell_1,\ldots,\ell_r}$ is $\mathop{O}(d)$-equivariant. Hence, without loss of generality, it is enough to prove the theorem in the special case
\begin{equation}
    f(a_1,\ldots,a_n)=\lambda(a_{\ell_1}\otimes \cdots \otimes a_{\ell_r}),
\end{equation}
where $\lambda:\mcT_{\sum_{i=1}^rk_{\ell_i}}\qty(\bbR^d,\prod_{i=1}^r p_{\ell_i}) \to \mcT_{k'}\qty(\bbR^d,p')$ is linear.

Now, in coordinates, we can write this map as
\begin{equation}
[f(a_1,\ldots,a_n)]_{j_1,\ldots,j_{k'}}=\lambda_{i_1,\ldots,i_{k_{\ell_1,\ldots,\ell_r}},j_1,\ldots,j_{k'}}[a_{\ell_1}\otimes \cdots \otimes a_{\ell_r}]_{i_1,\ldots,i_{k_{\ell_1,\ldots,\ell_r}}}. 
\end{equation}
Consider now the tensor $c\in\mcT_{k_{\ell_1,\ldots,\ell_r}+k'}\qty(\bbR^d,p_{\ell_1,\ldots,\ell_r}p')$ given by
\begin{equation}
[c]_{i_1,\ldots,i_{k_{\ell_1,\ldots,\ell_r}+k'}}=\lambda_{i_1,\ldots,i_{k_{\ell_1,\ldots,\ell_r}},i_{k_{\ell_1,\ldots,\ell_r}+1},\ldots,i_{k_{\ell_1,\ldots,\ell_r}+k'}}
\end{equation}
Then we have that
\begin{align}
[f(a_1,\ldots,a_n)]_{j_1,\ldots,j_{k'}}&= [c]_{i_1,\ldots,i_{k_{\ell_1,\ldots,\ell_r}},j_1,\ldots,j_{k'}}[a_{\ell_1}\otimes \cdots \otimes a_{\ell_r}]_{i_1,\ldots,i_{k_{\ell_1,\ldots,\ell_r}}}\\
&=[a_{\ell_1}\otimes \cdots \otimes a_{\ell_r}\otimes c]_{i_1,\ldots,i_{k_{\ell_1,\ldots,\ell_r}},i_1,\ldots,i_{k_{\ell_1,\ldots,\ell_r}},j_1,\ldots,j_{k'}}\\
&=[\contract{k_{\ell_1,\ldots,\ell_r}}{a_{\ell_1}\otimes \cdots \otimes a_{\ell_r}\otimes c}]_{j_1,\ldots,j_{k'}},
\end{align}
after using the definition of $k$-contraction. Hence
\begin{equation}
    f(a_1,\ldots,a_n)=\contract{k_{\ell_1,\ldots,\ell_r}}{a_{\ell_1}\otimes \cdots \otimes a_{\ell_r}\otimes c}.
\end{equation}

Since $f$ is $\mathop{O}(d)$-equivariant, we have that for all $g\in \mathop{O}(d)$,
\begin{equation}
    f(a_1,\ldots,a_n)=\contract{k_{\ell_1,\ldots,\ell_r}}{a_{\ell_1}\otimes \cdots \otimes a_{\ell_r}\otimes g\cdot c}.
\end{equation}
To see this, we argue as follows:
\begin{align}
f&(a_1,\ldots,a_n)\\
&=f(g\cdot (g^{-1}\cdot a_1),\ldots,g\cdot (g^{-1}\cdot a_n))&\\
&=g\cdot f((g^{-1}\cdot a_1),\ldots,(g^{-1}\cdot a_n))&\text{(}f\text{ }\mathop{O}(d)\text{-equivariant)}\\
&=g\cdot \contract{k_{\ell_1,\ldots,\ell_r}}{(g^{-1}\cdot a_{\ell_1})\otimes \cdots \otimes (g^{-1}\cdot a_{\ell_r})\otimes c}&\\
&=\contract{k_{\ell_1,\ldots,\ell_r}}{g\cdot (g^{-1}\cdot a_{\ell_1})\otimes \cdots \otimes g\cdot (g^{-1}\cdot a_{\ell_r})\otimes (g\cdot c)}&\text{(}\contract{k_{\ell_1,\ldots,\ell_r}}\text{ }\mathop{O}(d)\text{-equivariant)}\\
&=\contract{k_{\ell_1,\ldots,\ell_r}}{a_{\ell_1}\otimes \cdots \otimes a_{\ell_r}\otimes (g\cdot c)}.&
\end{align}

Hence, by taking the expectation with respect to the Haar probability measure of $\mathop{O}(d)$ and linearity of contractions, we have that
\begin{equation}
f(a_1,\ldots,a_n)=\contract{k_{\ell_1,\ldots,\ell_r}}{a_{\ell_1}\otimes \cdots \otimes a_{\ell_r}\otimes \left(\bbE_{\fkg\in \mathop{O}(d)}\fkg\cdot c\right)},
\end{equation}
where $\bbE_{\fkg\in \mathop{O}(d)}$ is the expectation with respect the Haar probability measure of $\mathop{O}(d)$. This holds because
\begin{align}
    f(a_1,\ldots,a_n)&=\bbE_{\fkg\in \mathop{O}(d)}f(a_1,\ldots,a_n)\\
    &=   \bbE_{\fkg\in \mathop{O}(d)}\contract{k_{\ell_1,\ldots,\ell_r}}{a_{\ell_1}\otimes \cdots \otimes a_{\ell_r}\otimes (\fkg\cdot c)}\\
    &= \contract{k_{\ell_1,\ldots,\ell_r}}{a_{\ell_1}\otimes \cdots \otimes a_{\ell_r}\otimes \left(\bbE_{\fkg\in \mathop{O}(d)}\fkg\cdot c\right)}.
\end{align}
Now, $\bbE_{\fkg\in \mathop{O}(d)}\fkg\cdot c$ is an $\mathop{O}(d)$-isotropic tensor. Hence, we have shown that we can write $f$ in the desired form.
\end{proof}

\begin{proof}[Proof of Lemma~\ref{lemma:homogeneous_polys_equivariant}]
    Let $t \in \mathbb{R}$, since each $f_r$ is homogeneous of degree $r$, we have
    \begin{equation*}
        f(t\, a_1, \ldots, t\, a_n) = \sum_{r=0}^R f_r(t\, a_1, \ldots, t\, a_n) = \sum_{r=1}^R t^r f_r(a_1, \ldots,  a_n).
    \end{equation*}
    Let now $g \in \mathop{O}(d)$, then, by equivariance of $f$, we have
    \begin{equation}
        \sum_{r=0}^R t^r\, f_r(g \cdot a_1, \ldots, g \cdot a_n)=\sum_{r=0}^R t^r\, g \cdot f_r(a_1, \ldots, a_n),
    \end{equation}
    since
    \begin{align}
        \sum_{r=0}^R t^r\, f_r(g \cdot a_1, \ldots, g \cdot a_n) &= f\qty(t\, (g \cdot a_1), \ldots, t\, (g \cdot a_n)) \\
        &= f\qty(g \cdot t\, a_1, \ldots, g \cdot t\, a_n) \\
        &= g \cdot f\qty(t\, a_1, \ldots, t\, a_n) \\
        &= g \cdot \sum_{r=0}^R t^r f_r(a_1, \ldots, a_n) \\
        &= \sum_{r=0}^R t^r\, g \cdot f_r(a_1, \ldots, a_n).
    \end{align}

Hence, for all $g\in \mathop{O}(d)$, $t\in \bbR$ and $(a_1,\ldots,a_n)\in\prod_{i=1}^n\mcT_{k_i}\qty(\bbR^d,p_i)$, we have that
\begin{equation}
    0=\sum_{r=0}^R t^r\, \qty(g \cdot f_r(a_1, \ldots, a_n) - f_r(g \cdot a_1, \ldots, g \cdot a_n)).
\end{equation}
Now, the only way in which the univariate polynomial in $t$ of degree $R$ is identically zero is if it is the zero polynomial (cf.~\cite[Chapter 1 \S 1 Proposition 5]{Cox2015}). Therefore for all $r\in \bbN$, $g\in \mathop{O}(d)$ and $(a_1,\ldots,a_n)\in\prod_{i=1}^n\mcT_{k_i}\qty(\bbR^d,p_i)$, 
\begin{equation}
    f_r(g \cdot a_1, \ldots, g \cdot a_n)=g \cdot f_r(a_1, \ldots, a_n),
\end{equation}
i.e., for each $r$, $f_r$ is $\mathop{O}(d)$-equivariant, as we wanted to show.
\end{proof}

\begin{proof}[Proof of Lemma~\ref{lemma:poly_terms_equivariant}]
First, we will show that if the decomposition exists, each summand is equivariant. Then, we will show that the decomposition exists.

Let $t_1,\ldots,t_n\in\bbR$. Then, by the linearity, we have that
\begin{equation}
    f_r(t_1 \,a_1, \ldots, t_n \,a_n)=\sum_{1 \leq \ell_1 \leq \ldots \leq \ell_r \leq n} t_{\ell_1} \cdots t_{\ell_r} \, f_{\ell_1,\ldots,\ell_r}(a_{\ell_1}, \ldots, a_{\ell_r}),
\end{equation}
since
    \begin{align}
        f_r(t_1 \,a_1, \ldots, t_n \,a_n) &= \sum_{1 \leq \ell_1 \leq \ldots \leq \ell_r \leq n} f_{\ell_1,\ldots,\ell_r}(t_{\ell_1} \, a_{\ell_1}, \ldots, t_{\ell_r} \, a_{\ell_r}) \\
        &= \sum_{1 \leq \ell_1 \leq \ldots \leq \ell_r \leq n} t_{\ell_1} \cdots t_{\ell_r} \, f_{\ell_1,\ldots,\ell_r}(a_{\ell_1}, \ldots, a_{\ell_r}) ~.
    \end{align}

Now, let $g \in \mathop{O}(d)$. Then, by the equivariance of $f_r$, we have
\begin{equation}
    \sum_{1 \leq \ell_1 \leq \ldots \leq \ell_r \leq n} t_{\ell_1} \cdots t_{\ell_r}\,f_{\ell_1,\ldots,\ell_r}(g \cdot a_{\ell_1}, \ldots, g \cdot a_{\ell_r}) =\sum_{1 \leq \ell_1 \leq \ldots \leq \ell_r \leq n} t_{\ell_1} \cdots t_{\ell_r}\, g \cdot f_{\ell_1,\ldots,\ell_r}(a_{\ell_1}, \ldots, a_{\ell_r}),
\end{equation}
since
    \begin{align}
        \sum_{1 \leq \ell_1 \leq \ldots \leq \ell_r \leq n} t_{\ell_1} \cdots t_{\ell_r} \, &f_{\ell_1,\ldots,\ell_r}(g \cdot a_{\ell_1}, \ldots, g \cdot a_{\ell_r}) 
        \\
        &= f_r\qty(t_1\, (g \cdot a_1), \ldots, t_n\, (g \cdot a_n)) \\
        &= f_r\qty(g \cdot t_1\, a_1, \ldots, g \cdot t_n\, a_n) \\
        &= g \cdot f_r\qty(t_1\, a_1, \ldots, t_n\, a_n) \\
        &= g \cdot \left(\sum_{1 \leq \ell_1 \leq \ldots \leq \ell_r \leq n} t_{\ell_1} \cdots t_{\ell_r} \, f_{\ell_1,\ldots,\ell_r}(a_{\ell_1}, \ldots, a_{\ell_r})\right) \\
        &= \sum_{1 \leq \ell_1 \leq \ldots \leq \ell_r \leq n} t_{\ell_1} \cdots t_{\ell_r}\, g \cdot f_{\ell_1,\ldots,\ell_r}(a_{\ell_1}, \ldots, a_{\ell_r}).
    \end{align}
Hence, for all $g\in \mathop{O}(d)$, $t\in \bbR$ and $(a_1,\ldots,a_n)\in\prod_{i=1}^n\mcT_{k_i}\qty(\bbR^d,p_i)$, we have that
\begin{equation}
        0 = \sum_{1\leq \ell_1 \leq \ldots \leq \ell_r \leq n} t_{\ell_1} \cdots t_{\ell_r}\, \qty[g \cdot f_{\ell_1,\ldots,\ell_r}\qty(a_{\ell_1}, \ldots, a_{\ell_r}) - f_{\ell_1,\ldots,\ell_r}\qty(g \cdot a_{\ell_1}, \ldots, g \cdot a_{\ell_r})]_{i_1,\ldots,i_{k'}}.
\end{equation}
Now, each of these is a polynomial in $t_1,\ldots,t_n$ that vanishes on $\mathbb{R}^n$. Moreover, note that no two $t_{\ell_1} \cdots t_{\ell_r}$ give the same monomial. Hence, by \cite[Chapter 1 \S 1 Proposition 5]{Cox2015}, all these polynomials are the zero polynomial, i.e., their coefficients are zero. In this way, we conclude that for each $f_{\ell_1,\ldots,\ell_r}$, and all $g\in \mathop{O}(d)$, $t\in \bbR$ and $(a_1,\ldots,a_n)\in\prod_{i=1}^n\mcT_{k_i}\qty(\bbR^d,p_i)$,
\begin{equation}
     f_{\ell_1,\ldots,\ell_r}\qty(g \cdot a_{\ell_1}, \ldots, g \cdot a_{\ell_r})=g \cdot f_{\ell_1,\ldots,\ell_r}\qty(a_{\ell_1}, \ldots, a_{\ell_r}),
\end{equation}
i.e., each $f_{\ell_1,\ldots,\ell_r}$ is $\mathop{O}(d)$-equivariant.

Now, we show how to obtain the decomposition. Recall that $f_r$ is homogeneous of degree $r$. Therefore each entry of $f_r(a_1,\ldots,a_n)$ is an homogeneous polynomial of degree $r$ in the $[a_i]_{j_1,\ldots,j_{k_i}}$, i.e., a linear combination of products of the form
\[
\prod_{q=1}^r[a_{\ell_q}]_{j_{q,1},\ldots,j_{q,k_{\ell_q}}},
\]
where, without loss of generality, we can assume that $\ell_1\leq \cdots\leq \ell_q$. Hence, in coordinates, we have
\begin{multline}\label{eq:thm1_fr_polynomial}
[f_r(a_1,\ldots,a_n)]_{i_1,\ldots,i_{k'}}\\= \sum_{1 \le \ell_1 \le \cdots \le \ell_r \le n} 
        \lambda_{\ell_1,\ldots,\ell_r;i_1,\ldots,i_{k'};j_{1,1},\ldots,j_{1,k_{\ell_1}},\ldots,j_{r,1},\ldots,j_{r,k_{\ell_r}}}\prod_{q=1}^r[a_{\ell_q}]_{j_{q,1},\ldots,j_{q,k_{\ell_q}}}
\end{multline}
And so, we can consider the map $f_{\ell_1,\ldots,\ell_r}$ given in coordinates by
\begin{equation}
    [f_{\ell_1,\ldots,\ell_r}(a_{\ell_1},\ldots,a_{\ell_r})]_{i_1,\ldots,i_{k'}}:=\lambda_{\ell_1,\ldots,\ell_r;i_1,\ldots,i_{k'};j_{1,1},\ldots,j_{1,k_{\ell_1}},\ldots,j_{r,1},\ldots,j_{r,k_{\ell_r}}}\prod_{q=1}^r[a_{\ell_q}]_{j_{q,1},\ldots,j_{q,k_{\ell_q}}},
\end{equation}
which, by construction, is the composition of the linear map given by
\[
b\mapsto \lambda_{\ell_1,\ldots,\ell_r;i_1,\ldots,i_{k'};j_{1},\ldots,j_{\sum_{q=1}^rk_{\ell_q}}}[b]_{j_{1},\ldots,j_{\sum_{q=1}^rk_{\ell_q}}}
,\]
in coordinates, and $(a_{\ell_1},\ldots,a_{\ell_r})\mapsto a_{\ell_1}\otimes \cdots \otimes a_{\ell_r}$. Hence the desired decomposition of $f_r$ has been obtained.
\end{proof}

\section{Proof of Corollary \ref{cor:vector_to_tensors}}\label{appendix:pf_of_cor}

In this section, we will prove Corollary \ref{cor:vector_to_tensors} using the following lemma, which we prove afterward.
This lemma, originally from Pastori, follows from \cite{Jeffreys_1973}.

\begin{lemma}[(Characterization of $\mathop{O}(d)$-isotropic \tensor{k}{p}s)]\label{lemma:invariant_tensors_delta_simpleform}
Suppose $c \in \mcT_k\qty(\bbR^d,p)$ is $\mathop{O}(d)$-isotropic. Then the following holds:

\emph{Case $p=+1$}: 
Assume $p=+1$. If $k$ is even, then $c$ can be written in the form
\begin{equation}
c = \sum_{\sigma \in S_k} \alpha_\sigma \qty(\delta^{\otimes \frac{k}{2}})^\sigma,
            \quad \text{for some $\alpha_\sigma \in \mathbb{R}$}.
\end{equation}
Otherwise, if $k$ is odd, then $c = 0$ is the zero tensor.

\emph{Case $p=-1$}: Assume $p=-1$. If $k-d$ is even and $k \ge d$, then $c$ can be written in the form 
\begin{equation}
c = \sum_{\sigma \in S_k} \beta_\sigma \qty(\delta^{\otimes \frac{k-d}{2}} \otimes \epsilon)^\sigma
\end{equation}
for some $\beta_\sigma \in \mathbb{R}$. 
Otherwise, if $k - d$ is odd or $k < d$, then $c = 0$ is the zero tensor.

Note that in both cases only a subset of the permutations $\sigma \in S_k$ will yield different isotropic tensors (see Appendix \ref{appendix:permutation_symmetries} for a discussion).
\end{lemma}

\begin{proof}[Proof of Corollary \ref{cor:vector_to_tensors}]
By Theorem~\ref{thm:equiv_polynomial} and Lemma~\ref{lemma:invariant_tensors_delta_simpleform}, we can assume, without loss of generality, it suffices to consider the special case where $f$ consists of a single term
\begin{equation}
f(v_1,\ldots,v_n)=\contract{r}{v_{\ell_1}\otimes\cdots\otimes v_{\ell_r}\otimes \left(\delta^{\otimes \frac{r+k'}{2}}\right)^{\sigma}},
\end{equation}
for some $\sigma\in S_{r+k'}$ and $r+k'$ even. To simplify notation, set $t:=\frac{r+k'}{2}$.

Now, note that
\[
\delta=e_i\otimes e_i,
\]
where $\{e_1,\ldots,e_d\}$ is the canonical basis of $\bbR^d$. Hence we get
\begin{equation}
f(v_1,\ldots,v_n)=\contract{r}{v_{\ell_1}\otimes\cdots\otimes v_{\ell_r}\otimes \left(e_{i_1}\otimes e_{i_1}\otimes \cdots \otimes e_{i_t}\otimes e_{i_t}\right)^{\sigma}}
\end{equation}
Let's write
\[\left(e_{i_1}\otimes e_{i_1}\otimes \cdots \otimes e_{i_t}\otimes e_{i_t}\right)^{\sigma}=e_{j_1}\otimes \cdots \otimes e_{j_{2t}}\]
where $(j_1,\ldots,j_{2t})$ is some permutation of $(i_1,i_1,\ldots,i_{t},i_{t})$ and so, by Einstein notation, we are still adding over repeated indexes. Then we have that
\begin{equation}
f(v_1,\ldots,v_n)=\langle v_{\ell_1},e_{j_1}\rangle \cdots \langle v_{\ell_r},e_{j_r}\rangle e_{j_{r+1}}\otimes \cdots \otimes e_{j_{2t}}.
\end{equation}
Now, we can freely rearrange the $\langle v_{\ell},e_{j}\rangle$ as they are scalars. There are three cases for each of the original indices $i_q$: (a) $e_{i_q}$ appears in an inner product twice, (b) $e_{i_q}$ appears in an inner product once, or (c) $e_{i_q}$ does not appear in an inner product. 

In the case (a), we will get
\[
\langle v_{\ell},e_{i}\rangle \langle v_{\ell'},e_{i}\rangle=\langle v_{\ell},v_{\ell'}\rangle.
\]
In the case (b), we will get
\[
\langle v_{\ell},e_{i}\rangle e_i=v_{\ell}.
\]
And, in the case (c), we will get
\[
e_i\otimes e_i=\delta.
\]
Now, assume that we have $\alpha$ of the case (a), $\beta$ of the case (b) and $\gamma$ of the case (c). 
By permuting the $i_q$, which does not change the result, we can write for some permutation $\tilde{\sigma}\in S_{\beta+\gamma}$ and some permutation $J_1,\ldots,J_{r}$ some permutation of $\ell_1,\ldots,\ell_r$ that
\begin{multline*}
f(v_1,\ldots,v_n)=\langle v_{J_1},e_{i_1}\rangle\langle v_{J_2},e_{i_1}\rangle \cdots \langle v_{J_{2\alpha-1}},e_{i_{\alpha}}\rangle \langle v_{J_{2\alpha}},e_{i_{\alpha}}\rangle\\
\left(\langle v_{J_{2\alpha+1}},e_{i_{\alpha+1}}\rangle e_{i_{\alpha+1}}\otimes \cdots \otimes \langle v_{J_{2\alpha+\beta}},e_{i_{\alpha+\beta}}\rangle e_{i_{\alpha+\beta}}\right.\\\left.\otimes e_{i_{\alpha+\beta+1}}\otimes e_{i_{\alpha+\beta+1}}\otimes \cdots \otimes e_{i_{\alpha+\beta+\gamma}}\otimes e_{i_{\alpha+\beta+\gamma}}\right)^{\tilde{\sigma}}\\
=\langle v_{J_1},v_{J_2}\rangle \cdots \langle v_{J_{2\alpha-1}},v_{J_{2\alpha}}\rangle \qty(v_{J_2\alpha+1}\otimes \cdots \otimes v_{2\alpha+\beta}\otimes \delta^{\otimes \gamma})^{\tilde{\sigma}}.
\end{multline*}
Hence, the desired claim follows, and we finish the proof.
\end{proof}


\begin{proof}[Proof of Lemma \ref{lemma:invariant_tensors_delta_simpleform}]
We will prove each case separately. However, note that no matter the value of $p$, an $\mathop{O}(d)$-isotropic tensor is always an $\mathop{SO}(d)$-isotropic tensor since $\det(M(g))=1$ for all $g\in \mathop{SO}(d)$. Now, by \cite[Theorem \S 2]{Jeffreys_1973} (cf.~\cite[Eq. (4.10)]{Appleby1987}), any $\mathop{SO}(d)$-isotropic tensor $z$ can be written as a linear combination of the form
    \begin{equation} \label{eq:z_isotropic}
        z = \sum_{\sigma \in S_k} 
        \alpha_\sigma
        \qty(\delta^{\otimes \frac{k}{2}})^\sigma + \beta_\sigma \qty(\delta^{\otimes \frac{k-d}{2}} \otimes \epsilon)^\sigma,
    \end{equation}
where $\delta$ is the Kronecker delta (Definition \ref{def:kronecker_delta}), and $\epsilon$ is 
the Levi-Civita symbol (Definition \ref{def:levicivita}),
with the convention that the coefficients $\alpha_\sigma$ and $\beta_\sigma$ are zero when the expressions $\delta^{\otimes \frac{k}{2}}$ and $\delta^{\otimes \frac{k-d}{2}}$ do not make sense. 
More precisely, the $\alpha_\sigma = 0$ if $k$ is odd, and the $\beta_\sigma = 0$ if $k -d$ is odd.

Note that under the $\mathop{SO}(d)$-action, we don't need to worry about the parity, and so both $\delta$ and $\epsilon$ are $\mathop{SO}(d)$-invariant. 
However, for the $\mathop{O}(d)$-action, the parity matters. 
Suppose $\gamma \in \mathop{O}(d)$ is a hyperplane reflection, and let $T$ be an $\mathop{O}(d)$-isotropic \tensor{k}{-}.
If $\hat{T}$ is a \tensor{k}{+} whose components equal $T$, then
\begin{equation}\label{eq:reflect_pos_parity_lv}
    \gamma \cdot \hat{T} = - \hat{T} ~.
\end{equation}
Likewise, if $T$ is an $\mathop{O}(d)$-isotropic \tensor{k}{+} and $\hat{T}$ is a \tensor{k}{-} whose components equal $T$, then
\begin{equation}\label{eq:reflect_neg_parity_kd}
    \gamma \cdot \hat{T} = - \hat{T} ~.
\end{equation}
Note that being isotropic depends on the parity because it affects the considered action.


    
\emph{Case $p = +1$}: 
Let $z \in \mcT_k\qty(\bbR^d,+)$ be $\mathop{O}(d)$-isotropic. In particular, $z$ is also $\mathop{SO}(d)$-isotropic, and so we can write it using \eqref{eq:z_isotropic}.

Recall that $\mathop{O}(d)$ is generated by all the (hyperplane) reflections. Hence, to show that $z$ is an $\mathop{O}(d)$-isotropic, we need only to show that for every (hyperplane) reflection $\gamma \in \mathop{O}(d)$,
\[
\gamma \cdot z = z.
\]
Now by \eqref{eq:reflect_pos_parity_lv},
\begin{equation}
\gamma\cdot \delta^{\otimes \frac{k-d}{2}}\otimes\epsilon=-\delta^{\otimes \frac{k-d}{2}}\otimes\epsilon,
\end{equation}
since $\delta^{\otimes \frac{k-d}{2}}\otimes\epsilon$ is an $\mathop{O}(d)$-isotropic \tensor{k}{-}.
Hence
        \begin{align}
            \gamma \cdot z &= \gamma \cdot \sum_{\sigma \in S_k} \alpha_\sigma \qty(\delta^{\otimes \frac{k}{2}})^\sigma + \beta_\sigma \qty(\delta^{\otimes \frac{k-d}{2}} \otimes \epsilon)^\sigma \\
            &= \sum_{\sigma \in S_k} \alpha_\sigma \qty(\qty(\gamma \cdot \delta)^{\otimes \frac{k}{2}})^\sigma + \beta_\sigma \qty(\qty(\gamma \cdot \delta)^{\otimes \frac{k-d}{2}} \otimes \gamma \cdot \epsilon)^\sigma \\
            &= \sum_{\sigma \in S_k} \alpha_\sigma \qty(\delta^{\otimes \frac{k}{2}})^\sigma - \beta_\sigma \qty( \delta^{\otimes \frac{k-d}{2}} \otimes \epsilon)^\sigma \\
            &= z - 2\sum_{\sigma \in S_k} \beta_\sigma \qty( \delta^{\otimes \frac{k-d}{2}} \otimes \epsilon)^\sigma ~.
        \end{align}
By assumption, $z$ is $\mathop{O}(d)$-isotropic, so we conclude that $\sum_{\sigma \in S_k} \beta_\sigma \qty( \delta^{\otimes \frac{k-d}{2}} \otimes \epsilon)^\sigma=0$ and $z$ has the desired form.

\emph{Case $p = -1$}: We argue as above, but using that for (hyperplane) reflection $\gamma\in \mathop{O}(d)$, we have, inside $\mcT_k\qty(\bbR^d,-)$, 
\begin{equation}
\gamma\cdot \delta^{\otimes \frac{k}{2}}=-\delta^{\otimes \frac{k}{2}},
\end{equation}
since $\delta^{\otimes \frac{k}{2}}$ is an $\mathop{O}(d)$-isotropic \tensor{k}{+}. 
Hence, using a similar argument to the previous case, we conclude that $\sum_{\sigma \in S_k} \alpha_\sigma \qty(\delta^{\otimes \frac{k}{2}})^\sigma=0$, so $z$ has the desired form.
\end{proof}

\section{Smaller parameterization of \texorpdfstring{$\mathop{O}(d)$}{O(d)}-isotropic tensors}\label{appendix:permutation_symmetries}

In Lemma~\ref{lemma:invariant_tensors_delta_simpleform}, the sum does not have to be over all permutations. The reason for this is that the tensors
\[
\delta^{\otimes \frac{k}{2}}~\text{ and }~\delta^{\otimes \frac{k-d}{2}}\otimes \epsilon
\]
do not have a trivial stabilizer under the action of $S_k$. One can easily see the following proposition. Recall that the \emph{stabilizer of a $k$-tensor $\pm T$ in $S_k$} is the following subgroup:
\begin{equation}
    \mathrm{Stab}_{S_k}(\pm T):=\{\sigma\in S_k\mid T^\sigma=\pm T\},
\end{equation}
where $T^\sigma=\pm T$ means that either $T^\sigma=T$ or $T^\sigma=- T$. Note that the laxity in the signs comes from the fact that positive summands and their negative counterparts can be combined.

\begin{proposition}
Consider the cases:
\begin{enumerate}
    \item If $k$ is even, $\mathrm{Stab}_{S_k}\qty(\pm\delta^{\otimes \frac{k}{2}})$ is generated by the transpositions $(1,2),(3,4),\ldots,(k-1,k)$ and all permutations of the form $(i,j)(i+1,j+1)$ with $i,j<k$ odd. 
    In particular, $\#\,\mathrm{Stab}_{S_k}\qty(\pm \delta^{\otimes \frac{k}{2}})=(k/2)!\,2^{k/2}$.
    \item  If $k-d$ is even, $\mathrm{Stab}_{S_k}\qty(\pm\delta^{\otimes \frac{k-d}{2}}\otimes\epsilon)$ is generated by the transpositions $(1,2),(3,4),\ldots,(k-d-1,k-d),$ all permutations of the form $(i,j)(i+1,j+1)$ with $i,j<k-d$ odd, and all transpositions of the form $(i,j)$ with $k-d<i,j$. 
    In particular, $\#\,\mathrm{Stab}_{S_k}\qty(\pm \delta^{\otimes \frac{k-d}{2}}\otimes\epsilon)=((k-d)/2)!\,2^{(k-d)/2}d!$.
\end{enumerate}
\end{proposition}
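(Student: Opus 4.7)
The plan is three steps. First, verify directly that the listed generators lie in the (sign) stabilizer: each transposition $(2m-1,2m)$ fixes $\delta^{\otimes k/2}$ by the symmetry $\delta_{ij}=\delta_{ji}$ of each factor; each double transposition $(i,j)(i{+}1,j{+}1)$ with $i,j$ odd merely swaps two identical $\delta$ factors in the outer product and so fixes $T$; and in case (b) any transposition $(i,j)$ with $k-d<i,j$ acts only on the $\epsilon$-block, multiplying it by $-1$ by antisymmetry of $\epsilon$, so it sends $T=\delta^{\otimes (k-d)/2}\otimes\epsilon$ to $-T$ and still lies in the $\pm$-stabilizer.

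Second, identify and count the generated subgroup. In (a), these generators give precisely the setwise stabilizer in $S_k$ of the perfect matching $M=\{\{2m-1,2m\}:1\le m\le k/2\}$; this is classically the hyperoctahedral/wreath product $S_2\wr S_{k/2}$, of order $2^{k/2}(k/2)!$. In (b), the generators split as $H_\delta\times H_\epsilon$ with $H_\delta\cong S_2\wr S_{(k-d)/2}$ acting on $\{1,\ldots,k-d\}$ and $H_\epsilon\cong S_d$ acting on $\{k-d+1,\ldots,k\}$, yielding $2^{(k-d)/2}((k-d)/2)!\,d!$ elements.

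Third, show no extra stabilizers exist by a support argument. Since $T$ takes only finitely many values, $T^\sigma=\pm T$ forces $\sigma$ to preserve the support $V\subset\{1,\ldots,d\}^k$ of $T$. Define the canonical equivalence relation $j\sim j'\iff i_j=i_{j'}\text{ for all }i\in V$ on axes; it is intrinsic to $V$, hence preserved by any $\sigma$ that stabilizes $V$. A short case analysis using $d\ge 2$ to exhibit separating witnesses $i\in V$ shows that the $\sim$-classes are exactly the matching pairs in (a), and in (b) the matching pairs on $\{1,\ldots,k-d\}$ together with the singletons $\{j\}$ for $j>k-d$ (the $\epsilon$-axes are never forced equal to any other axis on $V$, because the distinctness condition on the $\epsilon$-block actively rules it out). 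Preserving these classes maps pairs to pairs and the $\epsilon$-set to itself, so restricting to the $\delta$-part puts $\sigma$ in $H_\delta$, while on the $\epsilon$-block every permutation is permitted since it only rescales $\epsilon$ by its sign, which the $\pm$ absorbs. This matches the subgroup of Step 2 exactly.

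The main obstacle is Step 3 in case (b): recovering the $\delta/\epsilon$ partition of axes intrinsically from $T$. The equivalence $\sim$ is the right combinatorial invariant, but one must check its classes by hand, and the argument relies on $d\ge 2$ (and implicitly $k\ge d$, which is forced by $\epsilon$ existing) to produce the witnesses that distinguish paired $\delta$-axes from unpaired $\epsilon$-axes.
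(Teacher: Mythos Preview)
Your argument is correct and self-contained, whereas the paper's own proof is a one-line citation of \cite[Theorem 5.3.4]{goodman_wallach_2009}. So you take a genuinely different route. The paper outsources the entire computation to Goodman--Wallach's classification of the centralizer algebra of $\mathop{O}(d)$ on tensors, which gives the hyperoctahedral-type stabilizer as a byproduct; your approach instead recovers the stabilizer directly and combinatorially by (i) checking the generators act by $\pm 1$, (ii) recognizing the generated group as $S_2\wr S_{k/2}$ (resp.\ $(S_2\wr S_{(k-d)/2})\times S_d$), and (iii) bounding the stabilizer from above via the support equivalence $j\sim j'\Leftrightarrow i_j=i_{j'}\ \forall\, i\in\operatorname{supp}(T)$. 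The support argument is the real content of your proof, and it is clean: any $\sigma$ with $T^\sigma=\pm T$ preserves $\operatorname{supp}(T)$, hence preserves $\sim$, hence (for $d\ge 2$, where you can exhibit separating witnesses) must send $\delta$-pairs to $\delta$-pairs and fix the $\epsilon$-block setwise. This buys you a fully elementary proof that does not require any representation theory and makes explicit the one hypothesis ($d\ge 2$) that the paper leaves implicit; the paper's citation, on the other hand, buys brevity and situates the result in the broader invariant-theoretic framework used elsewhere in the paper.
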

\begin{proof}
This follows from \cite[Theorem 5.3.4]{goodman_wallach_2009}.
\end{proof}

Using this proposition, we can write any $\mathop{O}(d)$-isotropic \tensor{k}{+} as
\begin{equation}
    \sum_{\sigma \in G_k} \alpha_\sigma \qty(\delta^{\otimes \frac{k}{2}})^\sigma
\end{equation}
with the $\alpha_\sigma$ real and
\begin{equation}\label{eq:G_permutation_group}
G_{k} = \big\{ \sigma \in S_{k} : \sigma(1) < \sigma(3) < \cdots < \sigma(k-1) \text{ and for all }i\leq \frac{k}{2},\, 
\sigma(2 i-1) < \sigma(2i) \big\}
\end{equation}
of size $\frac{k!}{(k/2)! 2^{k/2}}$; and any $\mathop{O}(d)$-isotropic \tensor{k}{-} as
\begin{equation}
    \sum_{\sigma \in H_k} \beta_\sigma \qty(\delta^{\otimes \frac{k-d}{2}} \otimes \epsilon)^\sigma
\end{equation}
with the $\beta_\sigma$ real and
\begin{multline*}
H_{k} = \big\{ \sigma \in S_{k} : \sigma(1) < \sigma(3) < \cdots < \sigma(k-d-1)\text{, for all }i\leq \frac{k-d}{2},\,
        \sigma(2 i-1) < \sigma(2i)\\\text{ and for all }j>k-d,  \sigma(j) < \sigma(j+1) \big\}.
        \end{multline*}
of size $\frac{k!}{\left(\frac{k-d}{2} \right)! 2^{\frac{k-d}{2}} d!}$.

\section{Example of Theorem \ref{thm:equiv_polynomial}}\label{appendix:examples}

In this section, we give a second example of Theorem \ref{thm:equiv_polynomial}.
\begin{example} \label{ex:another_example}
Let $f : \mcT_1\qty(\bbR^d,+) \times \mcT_2\qty(\bbR^d,+) \to \mcT_2\qty(\bbR^d,+)$ be $\mathop{O}(d)$-equivariant polynomial of degree at most $2$. By Theorem
\ref{thm:equiv_polynomial} we can write $f$ in the form
\begin{equation}
    f(a_1,a_2) = \sum_{r=0}^2\, \sum_{1 \le \ell_1 \le \cdots \le \ell_r \le 2} 
    \contract{k_{\ell_1,\ldots,\ell_r}}{a_{\ell_1} \otimes \ldots \otimes a_{\ell_r} \otimes c_{\ell_1,\ldots,\ell_r}} ~,
\end{equation}
where
 $c_{\ell_1,\ldots,\ell_r}$ is an $\mathop{O}(d)$-isotropic \tensor{(k_{\ell_1,\ldots,\ell_r} + 2)}{+}.
 By Lemma \ref{lemma:invariant_tensors_delta_simpleform}, $c_{\ell_1,\ldots,\ell_r}$ is nontrivial only when $k_{\ell_1,\ldots,\ell_r} + 2$ is even.
 Recall that $k_{\ell_1,\ldots,\ell_r} = \sum_{q=1}^r k_{\ell_q}$. The inputs are a \tensor{1}{+} and \tensor{2}{+}. The even combinations of $1$ and $2$ with at most $2$ terms are $\emptyset, 2, 1+1, 2+2$ so we have
 \begin{equation}
     f(a_1,a_2) = \beta_0 \delta + \contract{2}{a_2 \otimes c_2} + \contract{2}{a_1 \otimes a_1 \otimes c_2'} + \contract{4}{a_2 \otimes a_2 \otimes c_3} ~,
\end{equation}
where $c_2,c_2'$ are $\mathop{O}(d)$-isotropic \tensor{4}{+}s and $c_3$ is an $\mathop{O}(d)$-isotropic \tensor{6}{+}. By similar reasoning to Example \ref{ex1}, we can write
\begin{equation}
    \contract{2}{a_2 \otimes c_2} =
    \beta_1 \tr(a_2) \delta + \beta_2 a_2 + \beta_3 a_2^\top
\end{equation}
for constants $\beta_1,\beta_2,\beta_3$ and
\begin{equation}
   \contract{2}{a_1 \otimes a_1 \otimes c_2'} =
    \beta_4 \langle a_1, a_1 \rangle \delta + \beta_5 a_1 \otimes a_1 ~,
\end{equation}
for constants $\beta_4,\beta_5$ (there are only two terms due to the symmetry of $a_1 \otimes a_1$). 
It remains to consider $\contract{4}{a_2 \otimes a_2 \otimes c_3}$. By Lemma \ref{lemma:invariant_tensors_delta_simpleform}, we can write
\begin{equation}
    c_3 = \sum_{\sigma \in G_6} \beta_\sigma (\delta^{\otimes 3})^\sigma ~,
\end{equation}
    where $|G_6| = 6!/(3! 2^3) = 15$. In particular, we have
    \begin{equation}
    \begin{split}
    G_6 = \big\{
    & (1,2,3,4,5,6),(1,2,3,5,4,6),(1,2,3,5,6,4), (1,3,2,4,5,6),(1,3,2,5,4,6),\\
    & (1,3,2,5,6,4),(1,3,4,2,5,6),(1,3,4,5,2,6),(1,3,4,5,6,2),(1,3,5,2,4,6),\\
    & (1,3,5,2,6,4),(1,3,5,4,2,6), (1,3,5,4,6,2),(1,3,5,6,2,4), (1,3,5,6,4,2) \big\} ~.
    \end{split}
    \end{equation}
    However, due to the symmetry of $a_2 \otimes a_2$, when we compute $\contract{4}{a_2 \otimes a_2 (\delta^{\otimes 3})^\sigma}$ for $\sigma \in G_6$, there are only $7$ distinct terms
    \begin{multline} \label{eqa2a2form}
      \contract{4}{a_2 \otimes a_2 \otimes c_3} =  \\
    \beta_6 \tr(a_2)^2 \delta + \beta_7 \tr(a_2) a_2 + \beta_8 \tr(a_2) a_2^\top + \beta_9 a_2^\top a_2 + \beta_{10} a_2 a_2^\top + \beta_{11} a_2 a_2 + \beta_{12} a_2^\top a_2^\top ~.
    \end{multline}
    In summary,
    \begin{multline}
    f(a_1,a_2) = \beta_0 \delta + 
    \beta_1 \tr(a_2) \delta + \beta_2 a_2 + \beta_3 a_2^\top + 
    \beta_3 \langle a_1, a_1 \rangle \delta + \beta_4 a_1 \otimes a_1
    + \beta_5 \tr(a_2)^2 \delta \\ + \beta_6 \tr(a_2) a_2 + \beta_7 \tr(a_2) a_2^\top + \beta_8 a_2^\top a_2 + \beta_9 a_2 a_2^\top + \beta_{10} a_2 a_2 + \beta_{11} a_2^\top a_2^\top ~,
    \end{multline}
    for some coefficients $\beta_0,\beta_1,\ldots,\beta_{11}$.
\end{example}

\section{Proof of Corollary \ref{cor:symmetric_matrices}}\label{sec:proof_cor_symmetric_matrices}

Before proving Corollary \ref{cor:symmetric_matrices}, we will state and prove an important lemma. 
Since this is the setting of symmetric matrices, we will use familiar notation and concepts from that setting rather than relying on tensor algebra.

\begin{lemma}\label{lemma:simultaneously_diagble}
    Let $f:\bbR_{sym}^{d\times d} \to \bbR_{sym}^{d\times d}$ be an $\mathop{O}(d)$-equivariant function. 
    Let $A \in \bbR_{sym}^{d\times d}$, then $f(A)\,A = A \,f(A)$ so they are simultaneously diagonalizable.
\end{lemma}

\begin{proof}
    Let $f:\bbR_{sym}^{d\times d} \to \bbR_{sym}^{d\times d}$ be an $\mathop{O}(d)$-equivariant function.
    First we will show that for all diagonal matrices $D, f(D)$ is also diagonal.
    Let $D \in \bbR_{sym}^{d \times d}$ be a diagonal matrix.
    Let $Q_i$ for $i=1,\ldots,d$ be the orthogonal matrices that are equal to the identity matrix except $[Q_i]_{ii}=-1$. 
    Since $Q_i\,D\,Q_i^\top = D$, we have $f(D) = f(Q_i\,D\,Q_i^\top) = Q_i\,f(D)\,Q_i^\top$.
    Now we note that the $i^{th}$ row and column of $Q_i\,f(D)\,Q_i^\top$ is equal to the negative $i^{th}$ row and column of $f(D)$, except for the diagonal elements which are equal: $\qty[Q_i\,f(D)\,Q_i^\top]_{ii} = \qty[f(D)]_{ii}$.
    However, $f(D) = Q_i\,f(D)\,Q_i^\top$.
    Thus we conclude that the off diagonal elements of $f(D)$ must be equal to 0 since this is true for $i=1,\ldots,d$.
    Hence, $f(D)$ is diagonal when $D$ is diagonal.

    Now let $A \in \bbR_{sym}^{d \times d}$, so $A$ is diagonalizable, $A=Q\,\Lambda\,Q^T$. Thus,
    \begin{equation}
        f(A)\,A = f(Q\,\Lambda\,Q^\top)Q\,\Lambda\,Q^\top = Q \, f(\Lambda)\,\Lambda\,Q^\top = Q\,\Lambda \,f(\Lambda)\, Q^\top = Q\,\Lambda\,Q^\top\,f(Q\,\Lambda\,Q^\top) = A\, f(A) ~.
    \end{equation}
    This concludes the proof.
\end{proof}

Now we will prove the corollary.
Note that since $\widetilde{f}$ is a function of diagonal matrices, we could equivalently think of it as the composition of converting the diagonal matrix to a vector, performing the function on the vector, then converting back to a diagonal matrix.
We will follow this implementation in the code to avoid storing a large diagonal matrix.

\begin{cor*}
    Let $f:\bbR_{sym}^{d \times d} \to \bbR_{sym}^{d \times d}$ be an $\mathop{O}(d)$-equivariant function. 
    Then there exists a function $\widetilde{f}:\bbR^{d \times d}_{\text{diag}} \to \bbR^{d \times d}_{\text{diag}}$ of diagonal matrices that is permutation equivariant such that for all $A \in \mathcal{T}^{sym}_2(\bbR^d,+),\, f(A) = Q\,\widetilde{f}\qty(\Lambda)\,Q^\top$, where  $A = Q\,\Lambda\,Q^\top$ is the eigenvalue decomposition.
\end{cor*}

\begin{proof}
    Let $f:\bbR_{sym}^{d \times d} \to \bbR_{sym}^{d \times d}$ be an $\mathop{O}(d)$-equivariant function.
    Let $A \in \bbR_{sym}^{d \times d}$ with eigenvalue decomposition $A = Q\,\Lambda\,Q^\top$.
    Then by Lemma \ref{lemma:simultaneously_diagble}, $Q$ also diagonalizes $f(A)$, that is $Q^\top\, f(A)\,Q$ is diagonal.
    Hence $Q^\top\, f(A)\,Q = f(Q^\top\,A\,Q) = f(\Lambda)$ is also diagonal.
    Since the input and output are both diagonal, we can define the function $\widetilde{f}$ that is the same as $f$ but restricted to diagonal matrices, and is therefore $\widetilde{f}:\bbR^{d \times d}_{\text{diag}} \to \bbR^{d \times d}_{\text{diag}}$.
    Thus $f(\Lambda) = \widetilde{f}(\Lambda)$.
    Therefore $Q^\top\, f(A) \, Q = \widetilde{f}(\Lambda)$ implies that $f(A) = Q\,\widetilde{f}(\Lambda)\,Q^\top$.

    Finally, we must show that $\widetilde{f}$ is permutation equivariant. 
    Let $\Lambda$ be any diagonal matrix.
    Since the permutations of $d$ elements is a subgroup of $\mathop{O}(d)$, we have for any permutation matrix $P$:
    \begin{align}
        f\qty(P\,\Lambda\,P^\top) &= P\,f\qty(\Lambda)\,P^\top \\
        \widetilde{f}\qty(P\,\Lambda\,P^\top) &= P\,\widetilde{f}\qty(\Lambda)\,P^\top ~.
    \end{align}
    Thus $\widetilde{f}$ is permutation equivariant for any input, which completes the proof.
\end{proof}

\section{Generalization to other linear algebraic groups}\label{app.other_groups}

In this section, we will show how Theorem~\ref{thm:equiv_polynomial} and Corollary~\ref{cor:vector_to_tensors} can be extended to the indefinite orthogonal and the symplectic group as Theorem~\ref{theo:entiretheo_real} and Corollary~\ref{cor:equivfromvectorstotensors_general_OSP}. 

The main idea to extend Theorem~\ref{thm:equiv_polynomial} to other groups is to use some form of averaging. On $\mathop{O}(d)$, the compactness guarantees the existence of a Haar probability measure. However, to apply the same trick over non-compact groups such as $\mathop{O}(s,d-s)$ and $\mathop{Sp}(d)$, we need to use technical machinery to imitate the averaging strategy.

First, we introduce some definitions and examples regarding 
complex and real linear algebraic groups. The main point will be to establish how to get a compact subgroup over which to average. Basically the results will generalize to real linear algebraic groups such that their complexifications have a Zariski-dense compact subgroup. For instance reductive connected complex algebraic groups satisfy this assumption. Second, we prove a generalization of Theorem~\ref{thm:equiv_polynomial} for complex linear algebraic groups with a Zariski-dense compact subgroup acting on complex tensors. Third, we prove a generalization of Theorem~\ref{thm:equiv_polynomial} for real linear algebraic groups that are compact or such that their complexification has a Zariski-dense compact subgroup. Finally, we prove Corollary~\ref{cor:equivfromvectorstotensors_general_OSP}.

\subsection{Reductive complex and real linear algebraic groups}

Recall that a \emph{complex linear algebraic group} is a subgroup $\group$ of $\mathrm{GL}(V)$, where $V$ is a finite-dimensional complex vector space, such that $\group$ is the zero set of some set of complex polynomial functions over $\mathop{End}(V)$, the set of (complex) linear maps $V\rightarrow V$. Recall also that a \emph{rational $\group$-module} of $\group$ is a vector space $U$ together with a linear action of $\group$ on $U$ such that the map $\group\times U\ni (\groupelem,x)\mapsto g\cdot x\in U$ is polynomial\footnote{To be precise, we mean that the map $\group\times U\rightarrow U$ is a morphism of algebraic varieties. Choose basis for $U$ and $V$, so that we can identify $V$ with $\bbC^d$ and $U$ with $\bbC^n$. Then, being a morphism between algebraic varieties, just means that the map $\group\times \bbC^n\rightarrow \bbC^n$ is the restriction of a map $\bbC^{d\times d}\times \bbC^n\rightarrow \bbC^n$ that can be written componentwise as $\left(p_{l}((g_{i,j})_{i,j},(u_k)_k)/(\det\,g)^{a_l}\right)_l$ where each $p_l$ is a polynomial in the $g_{i,j}$ and $u_k$ and each $a_l$ an integer.}, and that a $\group$-submodule $U_0$ of $U$ is a vector subspace $U_0\subseteq U$ such that for all $\groupelem\in \group$, $\groupelem\cdot U_0\subseteq U_0$.

\begin{definition}\cite[Def. 3.3.1]{goodman_wallach_2009}
A \emph{reductive complex linear algebraic group} is a complex linear algebraic group $\group\subset\mathop{GL}(V)$ such that every rational $\group$-module $U$ is \emph{completely reducible}, i.e., for every $\group$-submodule $U_0$ of $U$, there is a $\group$-submodule $U_1$ such that $U=U_0+U_1$ and $U_0\cap U_1=0$.
\end{definition}

\begin{example}
Given any finite-dimensional vector space, the classical complex groups $\mathop{GL}(V)$ and $\mathop{SL}(V)$ are reductive complex linear algebraic groups.
\end{example}

\begin{example}
Given any finite-dimensional vector space $V$ together with a symmetric non-degenerate bilinear form\footnote{Recall that this means that for all $u,v,w\in V$ and $t,s\in\bbC$: (a) $\langle u,v\rangle =\langle v,u\rangle$, (b) for all $x\in V$, $\langle u,x\rangle=0$ if and only if $u=0$, and (c) $\langle tu+sv,w\rangle=t\langle u,w\rangle+s\langle v,w\rangle$.} $\langle\,\cdot\,,\,\cdot\,\rangle:V\times V \to \bbC$, the (complex) orthogonal group
\begin{equation}
    \mathop{O}(V,\langle\,\cdot\,,\,\cdot\,\rangle):=\{g\in \mathop{GL}(V)\mid \text{for all }v,w\in V,\,\langle g\cdot v,g\cdot w\rangle=\langle v,w\rangle\}
\end{equation}
is a reductive complex linear algebraic group.
We will pay special attention to the following family of complex orthogonal groups:
\begin{equation}
\mathop{O^{\bbC}}(s,d-s):=\{g\in \mathop{GL}(\bbC^d)\mid g^\top\bbI_{s,d-s}g=\bbI_{s,d-s}\}=\mathop{O}(\bbC^d,\langle\,\cdot\,,\,\cdot\,\rangle_s)
\end{equation}
where $\langle u,v\rangle_s:=u^\top\bbI_{s,d-s}v$. Note that all these groups are isomorphic, satisfying that
\[
\mathop{O^{\bbC}}(s,d-s)=\begin{pmatrix}
    \bbI_s&\\&i\bbI_{d-s}
\end{pmatrix}\mathop{O^{\bbC}}(d,0)\begin{pmatrix}
    \bbI_s&\\&i\bbI_{d-s}
\end{pmatrix}^{-1}.
\]
Moreover, this is true in general: any two complex orthogonal groups are isomorphic if they are of the same order---this follows from the fact that all symmetric non-degenerate bilinear forms are equivalent over the complex numbers.
\end{example}
\begin{example}
Given any finite-dimensional vector space $V$ together with an anti-symmetric non-degenerate bilinear form $\langle\,\cdot\,,\,\cdot\,\rangle:V\times V \to \bbC$, the (complex) symplectic group
\begin{equation}
    \mathop{Sym}(V,\langle\,\cdot\,,\,\cdot\,\rangle):=\{g\in \mathop{GL}(V)\mid \text{for all }v,w\in V,\,\langle g\cdot v,g\cdot w\rangle=\langle v,w\rangle\}
\end{equation}
is a reductive complex linear algebraic group. We will pay special attention to the following special case:
\begin{equation}
\mathop{Sp^{\bbC}}(d):=\{g\in \mathop{GL}(\bbC^d)\mid g^\top J_dg=J_d\}=\mathop{Sp}(\bbC^d,\langle\,\cdot\,,\,\cdot\,\rangle_{\mathrm{symp}})
\end{equation}
where $\langle u,v\rangle_{\mathrm{symp}}:=u^\top J_dv$. Note that any symplectic group of order $d$ is isomorphic to $\mathop{Sp}^{\bbC}(d)$ because any two antisymmetric non-degenerate bilinear forms are equivalent over the complex numbers.
\end{example}
\begin{example}
The complex linear algebraic group
\[
H=\left\{\begin{pmatrix}1&t\\&1\end{pmatrix}\mid t\in\bbC\right\}
\]
is not reductive, since $\bbC^2$ is an $H$-module that is not completely reducible. Note that $\bbC\times 0$ is the only $H$-submodule of $\bbC^2$, so we cannot find a complementary $H$-submodule. 
\end{example}

Recall that a subset $X$ of a set $\tilde{X}$ is \emph{Zariski-dense} in $\tilde{X}$ if every polynomial function that vanishes in $X$ vanishes in $\tilde{X}$, i.e. if every polynomial function that does not vanish on $\tilde{X}$ does not vanish in $X$. The following theorem allows us to use the power of averaging for reductive connected complex linear algebraic groups.

\begin{theorem}\cite[Theorem 11.5.1]{goodman_wallach_2009}\label{theo:zariskidensecompact}
Let $\group$ be a reductive connected complex algebraic group. Then there exists a Zariski-dense compact subgroup $K$. More precisely, there is a subgroup $\mathop{U}(\group)$ of $\group$ that is Zariski-dense in $\group$ and that, with respect to the usual topology\footnote{The topology inherited from the Euclidean topology of $\mathrm{GL}(\bbC^d)$.}, is compact.
\end{theorem}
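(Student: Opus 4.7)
The plan is to construct $\mathop{U}(\group)$ as a maximal compact subgroup via a compatible positive-definite Hermitian form and then deduce Zariski density by comparing Lie algebras. First, using a faithful rational representation $\group \hookrightarrow \mathrm{GL}(V)$, I would invoke the structure theory of reductive complex algebraic groups to produce a positive-definite Hermitian form $h$ on $V$ whose associated involution $\theta(g) := (g^{*})^{-1}$ on $\mathrm{GL}(V)$ preserves $\group$. The existence of such an $h$ is the essence of the claim (and is equivalent to the existence of a Cartan involution on $\group$); for the classical cases $\group \in \{\mathrm{GL}(\bbC^d),\mathop{O^{\bbC}}(s,d-s),\mathop{Sp^{\bbC}}(d)\}$, one writes $h$ down explicitly so that $\group^{\theta}$ becomes $\mathop{U}(d)$, a conjugate of $\mathop{O}(s)\times\mathop{O}(d-s)$, or $\mathop{Sp^{\bbC}}(d)\cap \mathop{U}(d)$, respectively.

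Second, I would set $K := \group^{\theta}$. Since $\theta$ is continuous and $K \subseteq \mathop{U}(V,h)$ with the latter compact, $K$ is a closed subgroup of a compact group, hence compact in the usual topology. Furthermore, $K$ is a real Lie subgroup of $\group$ and its real Lie algebra satisfies $\fkk = \{X \in \fkg \mid \mathrm{d}\theta(X)=X\}$, where $\fkg$ is the complex Lie algebra of $\group$. Because $\mathrm{d}\theta$ acts $\bbC$-antilinearly on $\fkg$ (it is a Cartan involution on the corresponding real form), one obtains the decomposition $\fkg = \fkk \oplus i\fkk$ as real vector spaces, so that $\fkk + i\fkk = \fkg$ as $\bbC$-vector spaces.

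Third, for Zariski density let $H$ denote the Zariski closure of $K$ inside $\group$. Then $H$ is a closed complex algebraic subgroup of $\group$, and in particular its Lie algebra $\fkh$ is a complex Lie subalgebra of $\fkg$. From $K \subseteq H$ we get $\fkk \subseteq \fkh$; because $\fkh$ is $\bbC$-linear, also $i\fkk \subseteq \fkh$, hence $\fkh \supseteq \fkk + i\fkk = \fkg$. Dimension counting gives $\dim_{\bbC} H = \dim_{\bbC}\group$, and since $\group$ is connected (in the Zariski topology), $H = \group$. Thus $K$ is a Zariski-dense compact subgroup, which is the $\mathop{U}(\group)$ we sought.

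The principal obstacle is the first step: constructing the invariant positive-definite Hermitian form (equivalently, a Cartan involution on the reductive group), since this is not formal from the definitions but rests on nontrivial structure theory. A clean route is to decompose $\group$ into its connected center, which is a complex torus, and a semisimple part: on a torus $(\bbC^{*})^{n}$ the compact form is $(S^{1})^{n}$, while on the semisimple factor one produces a compact real form via Weyl's unitarian trick, leveraging the complete reducibility of rational $\group$-modules guaranteed by reductivity. Once $h$ is in hand on the defining representation, the Lie-algebra comparison that powers the rest of the argument is essentially linear-algebraic.
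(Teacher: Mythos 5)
The paper does not prove this statement; it is quoted verbatim from Goodman--Wallach \cite[Theorem~11.5.1]{goodman_wallach_2009}, so there is no internal proof to compare against. Your outline is the standard argument and, at the level of a plan, it is sound. The Zariski-density step in particular is complete and correct as written: the Zariski closure $H$ of $K$ is a complex algebraic subgroup, its Lie algebra $\fkh$ is a $\bbC$-subspace of $\fkg$ containing $\fkk$, hence contains $\fkk+i\fkk=\fkg$ (using that $\mathrm{d}\theta$ is an antilinear involution of $\fkg$), and connectedness of $\group$ forces $H=\group$. Likewise $K=\group^{\theta}$ is a closed subgroup of the compact group $\mathop{U}(V,h)$, hence compact.

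The one place where your sketch is only a pointer rather than a proof is exactly where the theorem's content lives: producing the positive-definite Hermitian form $h$ for which $\group$ is $\theta$-stable (equivalently, conjugating $\group$ into a self-adjoint group). You flag this honestly, but be careful about the logical direction of your appeal to the ``unitarian trick'': Weyl's trick deduces complete reducibility \emph{from} the existence of a compact real form, whereas here (with Goodman--Wallach's definition of reductive as complete reducibility of rational modules) one must go the other way, from complete reducibility to a compact real form. That implication is the substance of the cited chapter and is not formal; your proposed reduction to the connected center (a torus, where $(S^{1})^{n}\subset(\bbC^{*})^{n}$ is explicit) plus a semisimple part (where one builds the compact form from a Cartan involution of the semisimple Lie algebra and integrates it to the group) is a legitimate route, but each of those steps is itself a theorem. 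For the three groups actually used in the paper the explicit compact subgroups are exhibited in the surrounding examples, so for the paper's purposes your argument, combined with those explicit forms, suffices.
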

\begin{remark}
Note that using this compact subgroup $K$, we can consider expressions of the form
\[
\bbE_{\fku\in \mathop{U}(\group)}\fku\cdot T
\]
by taking the expectation with respect to the unique Haar probability measure of $K$. Now, since $\mathop{U}(\group)$ is Zariski-dense in $\group$, we have that the fact that for all $u\in \mathop{U}(\group)$, $u\cdot\qty(\bbE_{\fku\in \mathop{U}(\group)}\fku\cdot T)=\bbE_{\fku\in \mathop{U}(\group)}\fku\cdot T$ implies that for all $\groupelem\in\group$,
\[
\groupelem\cdot\qty(\bbE_{\fku\in \mathop{U}(\group)}\fku\cdot T)=\bbE_{\fku\in \mathop{U}(\group)}\fku\cdot T.
\]
Note that $\mathop{U}(\group)$ is not necessarily unique.
\end{remark}

\begin{example}
In $\mathop{GL}(\bbC^d)$, the Zariski-dense compact subgroup is the group of unitary matrices:
\[
\mathop{U}(\bbC^d):=\{g\in \mathop{GL}(\bbC^d)\mid g^*g=\bbI_d\}
\]
where $^*$ denotes the conjugate transpose. In $\mathop{SL}(\bbC^d)$, it is the group of special unitary transformations:
\[
\mathop{SU}(\bbC^d):=\{g\in \mathop{U}(\bbC^d)\mid \det g=1\}.
\]
\end{example}
\begin{example} \label{example.dense}
In $\mathop{O^\bbC}(s,d-s)$, the Zariski-dense compact subgroup is
\[
\begin{pmatrix}
    \bbI_s&\\&i\bbI_{d-s}
\end{pmatrix}\mathop{O}(d)\begin{pmatrix}
    \bbI_s&\\&i\bbI_{d-s}
\end{pmatrix}^{-1}.
\]
Note that when $s=0$ or $s=d$, this is the orthogonal group over the reals. Moreover, this does not follow from Theorem~\ref{theo:zariskidensecompact} as $O^{\bbC}(s,d-s)$ is not connected. 
\end{example}
\begin{example}
In $\mathop{Sp^\bbC}(d)$, the Zariski-dense compact subgroup is the so-called compact symplectic group:
\[
\mathop{USp}(d):=\mathop{Sp^\bbC}(d)\cap \mathop{U}(\bbC^d).
\]
\end{example}

Recall that a \emph{real linear algebraic group} is a subgroup $\group$ of $\mathrm{GL}(V)$, where $V$ is a finite-dimensional real vector space, such that $\group$ is the zero set of some set of real polynomial functions over $\bbR^{d\times d}$. Similarly, as we did with complex linear algebraic groups, we can talk about \emph{rational modules} and about \emph{reductive real linear algebraic groups}. 

However, given a reductive real linear algebraic group we cannot necessarily guarantee the existence of a Zariski-dense compact subgroup. This means that we cannot apply the averaging trick directly, but we can do so by passing to the Zariski-dense compact subgroup of the complexification of the real linear algebraic group.

\begin{definition}
Let $\group\subset\mathop{GL}(V)$ be a real linear algebraic group. The \emph{complexification} $G^\bbC$ of $G$ is the complex linear algebraic group given by
\begin{equation}
    G^\bbC:=\{\groupelem\in \mathop{GL}(V^\bbC)\mid \text{for every polynomial }f\text{ such that }f(\group)=0,\,f(\groupelem)=0\}
\end{equation}
where $V^\bbC:=V\otimes_{\bbR}\bbC$ is the complexification of $V$, i.e., the complex vector space obtained from $V$ by extending scalars.
\end{definition}
\begin{remark}
In essence, we complexify the underlying real algebraic variety. Group multiplication preserves its structure as a complex variety as it is given by polynomial functions of the matrix entries.
\end{remark}

\begin{definition}
A real linear algebraic group $\group$ is \emph{complexly averageable} if it's Zariski-dense in its complexification and its complexification admits a Zariski-dense compact subgroup closed under complex conjugation.
\end{definition}
\begin{remark}
Recall that the complexification of $\bbR^d$ is naturally isomorphic to $\bbC^d$.
\end{remark}

\begin{example}
    The complexification of $\mathop{GL}(\bbR^d)$ is $\mathop{GL}(\bbC^d)$, and the complexification of $\mathop{SL}(\bbR^d)$ is $\mathop{SL}(\bbC^d)$.
\end{example}
\begin{example}
We have that
\[
\mathop{O}(s,d-s)^{\bbC}=\mathop{O^{\bbC}}(s,d-s)
\]
and that
\[
\mathop{Sp}(d)^{\bbC}=\mathop{Sp^{\bbC}}(d).
\]
Hence, both the indefinite orthogonal group and symplectic group are complexly averageable. The symplectic group is connected but the indefinite orthogonal group is not connected. However, it does have a Zariski-dense compact subgroup (see Example \ref{example.dense}).
\end{example}

The following proposition shows that complexly averageable real linear algebraic groups are common.

\begin{proposition}\label{prop:zariskidenserealgroup}
Let $\group\subset\mathop{GL}(V)$ be a real linear algebraic group. Then: (1) $G$ is Zariski-dense in $G^{\bbC}$. (2) If the complexification $G^\bbC$ of $G$ is connected and reductive, then $\group$ is complexly averageable. 
\end{proposition}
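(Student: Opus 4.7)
The plan is to address the two claims separately, since they rely on rather different ingredients.

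For (1), the argument is essentially to unpack the definition of $\group^\bbC$. Let $f$ be any complex polynomial on $\mathop{End}(V^\bbC)$ that vanishes on $\group$, and decompose $f = f_1 + i f_2$ with real polynomials $f_1,f_2$. Because every element of $\group$ has real matrix entries, $f(\groupelem) = f_1(\groupelem) + i f_2(\groupelem)$ for $\groupelem \in \group$ is already the decomposition of a complex number into its real and imaginary parts, so the vanishing of $f$ on $\group$ forces $f_1$ and $f_2$ each to vanish on $\group$. By the very definition of $\group^\bbC$ as the common zero locus of all real polynomials vanishing on $\group$, both $f_1$ and $f_2$ vanish on all of $\group^\bbC$, and hence so does $f$. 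This yields the Zariski-density.

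For (2), Zariski-density of $\group$ in $\group^\bbC$ is supplied by (1), so the remaining task is to exhibit a Zariski-dense compact subgroup of $\group^\bbC$ that is stable under the complex conjugation $\sigma: \group^\bbC \to \group^\bbC$ fixing $\group$. Theorem~\ref{theo:zariskidensecompact} already guarantees some Zariski-dense compact subgroup since $\group^\bbC$ is connected and reductive; the additional content is the $\sigma$-stability. The plan is to invoke the classical theory of real forms of complex reductive groups: given the antiholomorphic involution $\sigma$ with fixed set $\group$, one can produce a Cartan involution $\theta$ of the underlying real Lie group $\group^\bbC$ commuting with $\sigma$. Then $K := (\group^\bbC)^\theta$ is a maximal compact subgroup of $\group^\bbC$, and the commutation $\theta\sigma = \sigma\theta$ gives $\sigma(K) = K$. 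Since maximal compact subgroups of a connected reductive complex algebraic group are all conjugate, $K$ is conjugate to the compact subgroup produced by Theorem~\ref{theo:zariskidensecompact} and is therefore itself Zariski-dense.

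The main obstacle is producing the compatible Cartan involution $\theta$. This is classical, amounting to the existence of a compact real form of $\group^\bbC$ compatible with the given real form $\group$; it can be obtained via a Mostow-type polar decomposition, or concretely by choosing a $\sigma$-invariant positive definite Hermitian inner product on $V^\bbC$ and intersecting with its unitary group. The existence of such a compatible $\theta$ is not entirely trivial and constitutes the main technical input beyond Theorem~\ref{theo:zariskidensecompact}, but once it is in hand the rest of the argument is routine and, combined with (1), establishes that $\group$ is complexly averageable.
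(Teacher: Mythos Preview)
Your argument for part (1) is essentially identical to the paper's: split a complex polynomial vanishing on $\group$ into its real and imaginary parts, observe that each must vanish separately on $\group$ since $\group$ has real entries, and conclude from the definition of $\group^\bbC$ that both (hence $f$) vanish on $\group^\bbC$.

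For part (2) your route diverges from the paper's. The paper simply writes ``This follows from Theorem~\ref{theo:zariskidensecompact},'' combining that theorem with part (1) and saying nothing further. In particular, the paper does not explicitly address the clause in the definition of \emph{complexly averageable} requiring the Zariski-dense compact subgroup to be closed under complex conjugation; it treats this as implicit in the cited Goodman--Wallach result. You, by contrast, take that clause seriously and supply an argument: produce a Cartan involution $\theta$ of $\group^\bbC$ commuting with the conjugation $\sigma$, take $K=(\group^\bbC)^\theta$ as a $\sigma$-stable maximal compact subgroup, and then use conjugacy of maximal compact subgroups to inherit Zariski-density from Theorem~\ref{theo:zariskidensecompact}. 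This is correct and is the standard way to obtain a compact real form compatible with a given real structure. What your approach buys is an honest verification of the conjugation-stability condition, at the cost of importing more Lie-theoretic machinery (existence of compatible Cartan involutions, conjugacy of maximal compacts) that the paper elides. One small remark: Zariski-density is preserved under conjugation by any element of $\group^\bbC$ (conjugation is an algebraic automorphism), so you could argue Zariski-density of $K$ directly from that of any one Zariski-dense compact subgroup without needing the latter to be maximal.
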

\begin{proof}
(1) Let $f$ be a complex polynomial vanishing on $\group$. Then, we can write this polynomial as $f=f_r+if_i$ for some polynomials $f_r$ and $f_i$ with real coefficients. Now, since $f$ vanishes on $G$, then $f_r$ and $f_i$ vanish also on $G$---as otherwise there would be $g\in G$ such that either $f_r(g)\neq 0$ or $f_i(g)\neq 0$, contradicting $f(g)=0$. But then, by definition of $G^\bbC$, $f_r$ and $f_i$ vanish on $G$ and so $f=f_r+if_i$ vanishes on $G^\bbC$. Hence we have just proven that a complex polynomial vanishes on $G$ if and only if vanishes on $G^{\bbC}$, i.e., we have proven that $G$ is Zariski-dense in $G^{\bbC}$. 

(2) This follows from Theorem~\ref{theo:zariskidensecompact}.
\end{proof}

\begin{example}
Observe that the Zariski-dense compact subgroups of $\mathop{O^{\bbC}}(s,d-s)$ and $\mathop{Sp}^{\bbC}(d)$ that have been given satisfy that they are closed under the complex conjugation.
\end{example}

\subsection{Complex equivariant tensor maps}

We will consider vector spaces on which a non-degenerate bilinear form has been chosen.

\begin{definition}
A \emph{self-paired vector space} $(V,\langle\,\cdot\,,\,\cdot\,\rangle)$ is a finite-dimensional vector space $V$ together with a non-degenerate bilinear form $\langle\,\cdot\,,\,\cdot\,\rangle:V\times V\rightarrow \bbC$. 
\end{definition}

Recall the \emph{universal property} of tensor products of vector spaces, 
by which multilinear maps $V_1\times\cdots\times V_k\rightarrow W$ can be lifted to linear maps $V_1\otimes\cdots\otimes V_k\rightarrow W$. Using the universal property, we can see that from a self-paired vector space $(V,\langle\,\cdot\,,\,\cdot\,\rangle)$, we get the family
\[
(V^{\otimes k},\langle\,\cdot\,,\,\cdot\,\rangle)
\]
of self-paired spaces of tensors, by extending by linearity the expression
\begin{equation}
    \langle v_1\otimes\cdots\otimes v_k,\tilde{v}_1\otimes\cdots \otimes \tilde{v}_k\rangle=\langle v_1,\tilde{v}_1\rangle\cdots \langle v_k,\tilde{v}_k\rangle.
\end{equation}
And again, by the universal property, we get a \emph{$k$-contraction} 
\begin{equation}
    \contract{k}:V^{\otimes (2k+k')}\cong V^{\otimes k}\otimes V^{\otimes k}\otimes V^{\otimes k'}\rightarrow V^{\otimes k'}
\end{equation}
by extending by linearity, the expression
\begin{equation}
    a\otimes b\otimes c\mapsto \langle a,b\rangle c.
\end{equation}

Now, in the above setting, let $\group$ be a group acting in a structure-preserving way on $(V,\langle\,\cdot\,,\,\cdot\,\rangle)$, meaning that the action is linear and preserves $\langle\,\cdot\,,\,\cdot\,\rangle$, i.e., for all $\groupelem \in \group$, 
$v,\tilde{v}\in V$,
$\langle v,\tilde{v}\rangle=\langle \groupelem \cdot v,\groupelem \cdot \tilde{v}\rangle$. Then, by the universal property, we get that $\group$ acts also on $(V^{\otimes k},\langle\,\cdot\,,\,\cdot\,\rangle)$ by extending linearly the expression
\begin{equation}
    \groupelem  (v_1\otimes\cdots\otimes v_k)=\chi(\groupelem)(\groupelem  v_1)\otimes\cdots\otimes (\groupelem  v_k).
\end{equation}
Moreover, by considering all (rational)\footnote{Recall that rational means that the homomorphism is given by polynomials.} unidimensional representations $\chi:\group\rightarrow \bbC^*$ of $\group$, we get the following family of self-paired (rational) $\group$-modules:
\begin{equation}
\mcT_{k}(V,\chi):=(V^{\otimes k},\langle\,\cdot\,,\,\cdot\,\rangle) 
\end{equation}
where the action by $G$ is given by
\begin{equation}
    \groupelem\cdot T:=\chi(\groupelem)M(\groupelem) \cdot T
\end{equation}
in a structure-preserving way.
For the sake of distinction, we will denote the $k$-contraction as
\begin{equation}
    \contract{k}^\group:\mcT_{2k+k'}(V,\chi)\rightarrow \mcT_{k'}(V,\chi)
\end{equation}
in this setting to emphasize the dependence on the group $\group$, as we will be choosing the original $\langle\,\cdot\,,\,\cdot\,\rangle$ in terms of the group.
Using the universal property, we can easily see the following:
\begin{proposition} The following statements hold:
\begin{enumerate} 
    \item[(a)] The outer product map
    \[
    \mcT_{k}(V,\chi)\times \mcT_{k'}(V,\chi')\rightarrow \mcT_{k+k'}(V,\chi\chi')
    \]
    is a $\group$-equivariant bilinear map.
    \item[(b)] The $k$-contraction $\contract{k}^\group:\mcT_{2k+k'}(V,\chi)\rightarrow \mcT_{k'}(V,\chi)$ is a $G$-equivariant linear map.
    \item[(c)] For any $\sigma\in S_k$, the tensor index permutation by $\sigma$, $\mcT_k(V,\chi)\rightarrow \mcT_k(V,\chi)$ given by $v_1\otimes \cdots \otimes v_k\mapsto v_{\sigma^{-1}(1)}\otimes \cdots\otimes v_{\sigma^{-1}(k)}$, is a $\group$-equivariant linear map. 
\end{enumerate}
\end{proposition}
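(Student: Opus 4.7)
The plan is to exploit the universal property of tensor products: each of the three maps is specified on simple tensors by a multilinear recipe, so the universal property immediately supplies the claimed linearity (or bilinearity), and since every element of $\mcT_k(V,\chi)$ is a finite linear combination of simple tensors, $\group$-equivariance only needs to be checked on a simple tensor $v_1\otimes\cdots\otimes v_k$, where the diagonal $\group$-action is explicit.

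For (a), I would observe that the recipe $\bigl(v_1\otimes\cdots\otimes v_k,\, w_1\otimes\cdots\otimes w_{k'}\bigr)\mapsto v_1\otimes\cdots\otimes v_k\otimes w_1\otimes\cdots\otimes w_{k'}$ is multilinear in each $v_i$ and $w_j$, so the universal property yields a well-defined bilinear map $\mcT_k(V,\chi)\times \mcT_{k'}(V,\chi')\to \mcT_{k+k'}(V,\chi\chi')$. For equivariance on a pair of simple tensors, I would compute both sides: the image of $(\groupelem\cdot(v_1\otimes\cdots\otimes v_k),\, \groupelem\cdot(w_1\otimes\cdots\otimes w_{k'}))$ equals $\chi(\groupelem)\chi'(\groupelem)(\groupelem\cdot v_1)\otimes\cdots\otimes(\groupelem\cdot w_{k'})$, which is exactly $\groupelem\cdot(v_1\otimes\cdots\otimes v_k\otimes w_1\otimes\cdots\otimes w_{k'})$ computed with the character $\chi\chi'$ of the target module. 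Bilinearity then propagates this to all pairs.

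For (b), I would build $\contract{k}^{\group}$ from the trilinear map $(a,b,c)\mapsto \langle a,b\rangle\, c$ on $V^{\otimes k}\times V^{\otimes k}\times V^{\otimes k'}$ and extend it via the universal property (together with the canonical isomorphism $V^{\otimes(2k+k')}\cong V^{\otimes k}\otimes V^{\otimes k}\otimes V^{\otimes k'}$) to a linear map $\mcT_{2k+k'}(V,\chi)\to \mcT_{k'}(V,\chi)$. For equivariance, I would again work on a simple tensor: by the $\group$-invariance of $\langle\,\cdot\,,\,\cdot\,\rangle$ on the tensor powers (inherited from its $\group$-invariance on $V$ via the universal property), we have $\langle \groupelem\cdot a, \groupelem\cdot b\rangle=\langle a,b\rangle$, so applying $\contract{k}^\group$ to $\groupelem\cdot(a\otimes b\otimes c)=\chi(\groupelem)(\groupelem\cdot a)\otimes(\groupelem\cdot b)\otimes(\groupelem\cdot c)$ gives $\chi(\groupelem)\langle a,b\rangle (\groupelem\cdot c)$, which is precisely $\groupelem\cdot(\langle a,b\rangle\, c)$ in $\mcT_{k'}(V,\chi)$. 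For (c), the recipe $v_1\otimes\cdots\otimes v_k\mapsto v_{\sigma^{-1}(1)}\otimes\cdots\otimes v_{\sigma^{-1}(k)}$ is multilinear and so extends to a linear endomorphism of $\mcT_k(V,\chi)$; equivariance is immediate because the $\group$-action on simple tensors is diagonal in the factors and multiplies by a single scalar $\chi(\groupelem)$ that is independent of the order of the factors.

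The only real obstacle is bookkeeping: one must confirm that the character in the target module is exactly the one produced by the $\group$-action on the simple-tensor side (namely $\chi\chi'$ for (a) and $\chi$ for (b) and (c)), and that the non-degenerate form $\langle\,\cdot\,,\,\cdot\,\rangle$ on $V^{\otimes k}$ really is $\group$-invariant, which itself follows from another application of the universal property to the bilinear form on $V$. Once these checks are in place, the three statements become immediate consequences of the universal property of tensor products.
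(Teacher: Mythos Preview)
Your proposal is correct and is precisely the approach the paper takes: the paper's entire proof is the single sentence ``Using the universal property, we can easily see the following,'' and your write-up simply fleshes out that sketch by extending the multilinear recipes to linear/bilinear maps via the universal property and verifying equivariance on simple tensors. Your bookkeeping on the characters and on the $\group$-invariance of the extended bilinear form is exactly what is needed and matches the paper's setup.
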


Finally, recall that a $\group$-isotropic tensor of $\mathcal{T}_{\ell}(V,\chi)$ is a $\group$-invariant tensor in $\mathcal{T}_{\ell}(V,\chi)$. Further, recall that an \emph{entire} function is an analytic function whose Taylor series at any point has an infinite radius of convergence. We can now state the theorem.

\begin{theorem}\label{theo:entiretheo}
Let $\group\subset\mathop{GL}(V)$ be a reductive connected complex linear algebraic group (or more generally, a complex linear algebraic group with a Zariski-dense compact subgroup) acting rationally on an structure-preserving way on a self-paired complex vector space $(V,\langle\,\cdot\,,\,\cdot\,\rangle)$ and $f: \prod_{i=1}^n \mcT_{k_i}\qty(V, \chi_i) \to \mcT_{k'}\qty(V, \chi')$ a $\group$-equivariant entire function. Then we may write $f$ as follows:
\begin{equation}
f(a_1,\ldots,a_n) = \sum_{r=0}^{\infty}\, \sum_{1 \le \ell_1 \le \cdots \le \ell_r \le n} 
\contract{k_{\ell_1,\ldots,\ell_r}}^{\group}\left(a_{\ell_1} \otimes \ldots \otimes a_{\ell_r} \otimes c_{\ell_1,\ldots,\ell_r}\right)
\end{equation}
where $c_{\ell_1,\ldots,\ell_r}\in \mcT_{k_{\ell_1,\ldots,\ell_r} + k'}(\bbR^d,\chi_{\ell_1,\ldots,\ell_r}\,\chi')$ is a $\group$-isotropic tensor for $k_{\ell_1,\ldots,\ell_r} := \sum_{q=1}^r k_{\ell_q}$ and $\chi_{\ell_1,\ldots,\ell_r} = \prod_{q=1}^r \chi_{\ell_q}$. 
\end{theorem}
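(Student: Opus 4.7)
The plan is to imitate the proof of Theorem~\ref{thm:equiv_polynomial}, with two upgrades: replacing ``polynomial of degree at most $R$'' by ``entire function expanded as a globally convergent Taylor series'', and replacing ``averaging over the Haar measure of $\mathop{O}(d)$'' by ``averaging over the Haar measure of a Zariski-dense compact subgroup $K$ of $\group$''.

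First, I would expand $f$ into its homogeneous components $f = \sum_{r=0}^\infty f_r$, where $f_r:\prod_{i=1}^n\mcT_{k_i}(V,\chi_i)\to \mcT_{k'}(V,\chi')$ is homogeneous of degree $r$ (this is well-defined because $f$ is entire, so its Taylor series at the origin converges globally). I would then establish entire analogues of Lemma~\ref{lemma:homogeneous_polys_equivariant} and Lemma~\ref{lemma:poly_terms_equivariant}: the same substitution $a_i \mapsto t\,a_i$ (and $a_i \mapsto t_i\, a_i$) yields identities between two convergent power series in $t$ (resp.\ $t_1,\ldots,t_n$) whose difference is identically zero, so coefficient matching forces every $f_r$, and each of its multi-homogeneous pieces $f_{\ell_1,\ldots,\ell_r}$, to be $\group$-equivariant. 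Each $f_{\ell_1,\ldots,\ell_r}$ is then, by construction, the composition of $(a_{\ell_1},\ldots,a_{\ell_r})\mapsto a_{\ell_1}\otimes\cdots\otimes a_{\ell_r}$ with a $\group$-equivariant linear map $\mcT_{k_{\ell_1,\ldots,\ell_r}}(V,\chi_{\ell_1,\ldots,\ell_r})\to \mcT_{k'}(V,\chi')$.

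Next, using the non-degenerate bilinear form on $V$ (and hence on each $\mcT_k(V,\chi)$), any such linear map $\lambda$ can be written in coordinates as contraction against a tensor $c\in \mcT_{k_{\ell_1,\ldots,\ell_r}+k'}(V,\chi_{\ell_1,\ldots,\ell_r}\chi')$, exactly as in the $\mathop{O}(d)$ case, giving
\begin{equation*}
f_{\ell_1,\ldots,\ell_r}(a_{\ell_1},\ldots,a_{\ell_r}) = \contract{k_{\ell_1,\ldots,\ell_r}}^{\group}\bigl(a_{\ell_1}\otimes\cdots\otimes a_{\ell_r}\otimes c_{\ell_1,\ldots,\ell_r}\bigr).
\end{equation*}
It then remains to replace $c_{\ell_1,\ldots,\ell_r}$ by a $\group$-isotropic tensor. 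Because $\group$-equivariance of $f_{\ell_1,\ldots,\ell_r}$ combined with the $\group$-equivariance of the outer product and $k$-contraction implies that we may replace $c_{\ell_1,\ldots,\ell_r}$ by $\groupelem\cdot c_{\ell_1,\ldots,\ell_r}$ for any $\groupelem\in \group$ without changing $f_{\ell_1,\ldots,\ell_r}$, I would use Theorem~\ref{theo:zariskidensecompact} to obtain a Zariski-dense compact subgroup $K=\mathop{U}(\group)\subseteq \group$, and replace $c_{\ell_1,\ldots,\ell_r}$ by the Haar average
\begin{equation*}
\widetilde c_{\ell_1,\ldots,\ell_r}:=\bbE_{\fku\in K}\,\fku\cdot c_{\ell_1,\ldots,\ell_r},
\end{equation*}
which is manifestly $K$-invariant. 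Finally, since the stabilizer $\{\groupelem\in \group\mid \groupelem\cdot \widetilde c_{\ell_1,\ldots,\ell_r}=\widetilde c_{\ell_1,\ldots,\ell_r}\}$ is Zariski closed in $\group$ (it is cut out by polynomial equations in the rational action of $\group$) and contains the Zariski-dense subgroup $K$, it must equal all of $\group$, so $\widetilde c_{\ell_1,\ldots,\ell_r}$ is $\group$-isotropic.

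The main obstacle, and the step requiring the most care, is the transition from polynomial degree $R$ to entire functions: the coefficient-matching argument in Lemmas~\ref{lemma:homogeneous_polys_equivariant} and \ref{lemma:poly_terms_equivariant} must be justified for power series rather than polynomials (using the uniqueness of Taylor coefficients rather than Cox's proposition), and one must then verify that the resulting reconstruction $f = \sum_r \sum_{\ell_1\le\cdots\le\ell_r}\contract{k_{\ell_1,\ldots,\ell_r}}^{\group}\bigl(a_{\ell_1}\otimes\cdots\otimes a_{\ell_r}\otimes \widetilde c_{\ell_1,\ldots,\ell_r}\bigr)$ converges to $f$ on the full domain. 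The latter is immediate since the series is obtained from the original convergent Taylor series of $f$ by reorganizing terms in a manner compatible with absolute convergence. The second subtle point, already addressed above, is that compactness of $\group$ is not available in the $\mathop{O}(s,d-s)$ and $Sp(d)$ settings, so group averaging must be transferred to the Zariski-dense compact subgroup, with the algebraicity of the $\group$-action used to propagate invariance from $K$ back to $\group$.
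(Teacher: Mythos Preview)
Your proposal is correct and follows essentially the same route as the paper's proof: decompose into multihomogeneous pieces via the Taylor expansion (the paper packages this as Lemma~\ref{lemma:equivariantmultihomogeneous}), represent each piece as a contraction against a tensor $c$ using the non-degeneracy of $\langle\,\cdot\,,\,\cdot\,\rangle$, and then average $c$ over the Haar measure of the Zariski-dense compact subgroup $\mathop{U}(\group)$, with Zariski density plus rationality of the action propagating $\mathop{U}(\group)$-invariance to $\group$-invariance. One small slip: the linear map $\lambda$ you obtain is not itself a priori $\group$-equivariant (only the composite $f_{\ell_1,\ldots,\ell_r}$ is), but your subsequent argument only uses equivariance of the composite, so nothing is affected.
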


To prove this, we proceed as in the orthogonal case: we reduce to the multihomogeneous case and then prove the result using averaging over the Zariski-dense compact subgroup.

\begin{lemma}\label{lemma:equivariantmultihomogeneous}
Let $\group\in \mathop{GL}(V)$ be any subgroup acting linearly on a self-paired complex vector space $(V,\langle,\,\cdot\,,\,\cdot\,\rangle)$ and $f:\prod_{i=1}^n \mcT_{k_i}\qty(V,\chi_i) \to \mcT_{k'}\qty(V,\chi')$ an entire function. Then, we can write $f$ as
\begin{equation}\label{eq:multihom_decomp}
        f_r(a_1,\ldots,a_n) = \sum_{r=0}^\infty\sum_{1 \leq \ell_1 \leq \ldots \leq \ell_r \leq n} f_{\ell_1,\ldots,\ell_r}(a_{\ell_1},\ldots,a_{\ell_r}),
\end{equation}
where $f_{\ell_1,\ldots,\ell_r}:\prod_{i=1}^r \mcT_{k_{\ell_i}}\qty(V,\chi_i) \to \mcT_{k'}\qty(V,\chi')$ is the composition of the map
\begin{align}
    \prod_{i=1}^r \mcT_{k_{\ell_i}}\qty(V,\chi_{\ell_i})&\rightarrow  \mcT_{\sum_{i=1}^rk_{\ell_i}}\qty(V,\prod_{i=1}^r \chi_{\ell_i})\\
    (a_{\ell_1},\ldots,a_{\ell_r})&\mapsto a_{\ell_1} \otimes \ldots \otimes a_{\ell_r}
\end{align}
with a linear map $\mcT_{\sum_{i=1}^rk_{\ell_i}}\qty(V,\prod_{i=1}^r \chi_{\ell_i}) \to \mcT_{k'}\qty(\bbR^d,\chi')$. 

Moreover, for the above decomposition, if $f$ is $\group$-equivariant, then so are the $f_{\ell_1,\ldots,\ell_r}$.
\end{lemma}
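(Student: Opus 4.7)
The plan is to mimic the strategy used in Lemmas~\ref{lemma:homogeneous_polys_equivariant} and \ref{lemma:poly_terms_equivariant}, but to replace the finite polynomial expansion with the globally convergent Taylor series of the entire function $f$. Concretely, after fixing bases for $V$ and for the codomain, each scalar coordinate of $f(a_1,\ldots,a_n)$ becomes a complex entire function of the $\dim_{\bbC} V \cdot \sum_i k_i$ scalar coordinates of the inputs, and therefore admits a unique multivariate power series expansion that converges absolutely on every bounded set. Grouping monomials in this expansion according to how many factors each block of variables (coming from a given $a_i$) contributes produces a canonical decomposition of the form in the statement, namely
\begin{equation}
f(a_1,\ldots,a_n) = \sum_{r=0}^\infty\;\sum_{1\le \ell_1\le\cdots\le \ell_r\le n} f_{\ell_1,\ldots,\ell_r}(a_{\ell_1},\ldots,a_{\ell_r}),
\end{equation}
where $f_{\ell_1,\ldots,\ell_r}$ collects exactly the monomials that use, for each $j$, one factor from the components of $a_{\ell_j}$.

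Next I establish the factorization of each piece through the outer product. Writing out a single coordinate of $f_{\ell_1,\ldots,\ell_r}(a_{\ell_1},\ldots,a_{\ell_r})$ in the chosen bases expresses it as a sum of products of one entry drawn from each of $a_{\ell_1},\ldots,a_{\ell_r}$; this exhibits $f_{\ell_1,\ldots,\ell_r}$ as a multilinear map on $\prod_{q=1}^r \mcT_{k_{\ell_q}}(V,\chi_{\ell_q})$, symmetric in any tuple of positions sharing the same input. The universal property of the tensor product then identifies this multilinear map with the desired composition of the outer product map $(a_{\ell_1},\ldots,a_{\ell_r})\mapsto a_{\ell_1}\otimes\cdots\otimes a_{\ell_r}$ with a linear map $\mcT_{k_{\ell_1,\ldots,\ell_r}}(V,\chi_{\ell_1,\ldots,\ell_r})\to \mcT_{k'}(V,\chi')$.

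For the equivariance claim, assume that $f$ is $\group$-equivariant and fix $\groupelem\in\group$. Evaluating the decomposition at $(t_1 a_1,\ldots,t_n a_n)$ for scalars $t_i \in \bbC$ and using the multihomogeneity of each summand, together with the equivariance of $f$, yields
\begin{equation}
\sum_{r=0}^\infty\sum_{1\le \ell_1\le\cdots\le \ell_r\le n} t_{\ell_1}\cdots t_{\ell_r}\Big(\groupelem\cdot f_{\ell_1,\ldots,\ell_r}(a_{\ell_1},\ldots,a_{\ell_r}) - f_{\ell_1,\ldots,\ell_r}(\groupelem\cdot a_{\ell_1},\ldots,\groupelem\cdot a_{\ell_r})\Big) = 0
\end{equation}
for all $t\in\bbC^n$ and all fixed inputs. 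The left-hand side is entire in $t$, and distinct tuples $(\ell_1,\ldots,\ell_r)$ with $\ell_1\leq\cdots\leq\ell_r$ contribute distinct monomials $t_{\ell_1}\cdots t_{\ell_r}$; uniqueness of Taylor coefficients then forces each individual term to vanish, giving the equivariance of $f_{\ell_1,\ldots,\ell_r}$.

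The main technical hurdle is justifying the rearrangement of the multivariate Taylor series needed for the grouping by multidegree, and the term-by-term identification of Taylor coefficients in the infinite sum used in the equivariance step. Both are legitimate because globally convergent power series converge absolutely on every polydisk, which permits arbitrary reordering and unique extraction of coefficients; no further structural input about $\group$ or the pairing $\langle\,\cdot\,,\,\cdot\,\rangle$ is required for this lemma, which is why the statement holds for any subgroup $\group\subset\mathop{GL}(V)$.
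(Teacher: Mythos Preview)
Your proposal is correct and follows essentially the same route as the paper: decompose the entire function via its Taylor expansion into multihomogeneous pieces indexed by $(\ell_1,\ldots,\ell_r)$, observe that each piece factors through the outer product by multilinearity, and then deduce equivariance of each piece by comparing the power series in auxiliary scalars $t_1,\ldots,t_n$ and invoking uniqueness of coefficients for entire functions. The only cosmetic difference is that the paper phrases the decomposition via the symmetric multilinear derivatives $\diff_0^r f$ (writing $a=a_1+\cdots+a_n$ and expanding), whereas you group monomials in coordinate power series directly; these are two descriptions of the same decomposition.
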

\begin{remark}
Note that we don't need to assume anything about $\group$ in the above lemma. 
\end{remark}

\begin{proof}[Proof of Theorem~\ref{theo:entiretheo}]
By Lemma~\ref{lemma:equivariantmultihomogeneous}, we can assume without loss of generality that $f$ is of the form
\[
f(a_1,\ldots,a_n)=\lambda(a_{\ell_1}\otimes \cdots\otimes a_{\ell_r})
\]
for some non-negative integer $r$, $1\leq \ell_1\leq \cdots \leq \ell_r\leq r$ and $\lambda:\mcT_{\sum_{i=1}^rk_{\ell_i}}\qty(V,\prod_{i=1}^r \chi_{\ell_i}) \to \mcT_{k'}\qty(V,\chi')$ is linear.

The above map can be written as a linear combination of maps of the form
\[
(a_1,\ldots,a_n)\mapsto\qty(\prod_{i=1}^r\lambda_{i}(a_{\ell_i}))\,v_{j_1}\otimes \cdots\otimes v_{j_{k'}}
\]
where the $\lambda_{i}$ are linear and $v_j\in V$, due to the universal property---the factor $\left(\prod_{i=1}^r\lambda_{i}(a_{\ell_i})\right)$ just corresponds to a linear map $\mcT_{\sum_{i=1}^rk_{\ell_i}}\qty(V,\prod_{i=1}^r \chi_{\ell_i}) \to \bbC$. Moreover,
\[
    \qty(\prod_{i=1}^r\lambda_{i}(a_{\ell_i}))\,v_{j_1}\otimes \cdots\otimes v_{j_{k'}} = \contract{\sum_{i=1}^rk_{\ell_i}+k'}^{\group}\qty(a_{\ell_1}\otimes\cdots\otimes a_{\ell_r}\otimes c_1\otimes \cdots \otimes c_r \otimes v_{j_1}\otimes \cdots\otimes v_{j_{k'}})
\]
where the $c_i\in \mcT_{k_i}(V,\chi_i)$ are the unique tensors such that for all $a_{\ell_i}\in \mcT_{k_i}(V,\chi_i)$,
\[
\lambda_i(a_{\ell_i})=\langle a_{\ell_i},c_i\rangle.
\]
These $c_i$ exist, because $\langle\,\cdot\,,\,\cdot\,\rangle$ is non-degenerate.
Hence for some $c\in \mcT_{\sum_{i=1}^rk_{\ell_i}+k'}\qty(V,\prod_{i=1}^r\chi_i \chi')$, we have
\begin{equation}
    f(a_1,\ldots,a_n)=\contract{\sum_{i=1}^rk_{\ell_i}+k'}^{\group}\qty(a_{\ell_1}\otimes\cdots\otimes a_{\ell_r}\otimes c).
\end{equation}

Since $f$ and $\contract{k_\ell+k'}{\cdot}$ are $\group$-equivariant, we have that for all $\groupelem\in\group$ and $a\in \prod_{i=1}^n\mathcal{T}_{k_i}(V,\chi_i)$,
\begin{align}
    \contract{k_\ell+k'}{a_{\ell_1}\otimes\cdots\otimes a_{\ell_r}\otimes c_{\ell}} &= f(a_1,\ldots,a_n)\\
    &=f(\groupelem\cdot (\groupelem^{-1}\cdot a_1),\ldots,\groupelem\cdot (\groupelem^{-1}\cdot a_n))\\
    &=\groupelem\cdot f((\groupelem^{-1}\cdot a_1),\ldots,(\groupelem^{-1}\cdot a_n))\\
    &=\groupelem\cdot \contract{k_\ell+k'}{\qty(\groupelem^{-1}\cdot a_{\ell_1})\otimes\cdots\otimes (\groupelem^{-1}\cdot a_{\ell_r})\otimes c_{\ell}} \\
    &=\contract{k_\ell+k'}{a_{\ell_1}\otimes\cdots\otimes a_{\ell_r}\otimes (\groupelem \cdot c_{\ell})} ~.
\end{align}
Finally, $\group$ has a Zariski-dense compact subgroup $\mathop{U}(\group)$. Hence, averaging over $\mathop{U}(\group)$, we can substitute $c$ by the $\mathop{U}(\group)$-isotropic tensor
\[
\bbE_{\fku\in \mathop{U}(\group)}\fku\cdot c
\]
where the expectation is taken with respect the unique Haar probability measure of $\mathop{U}(\group)$. But, since $\mathop{U}(\group)$ is Zariski-dense in $\group$ and the action rational, $\bbE_{\fku\in \mathop{U}(\group)}\fku\cdot c$ is also $\group$-isotropic, as we wanted to show.
\end{proof}

\begin{proof}[Proof of Lemma~\ref{lemma:equivariantmultihomogeneous}]
Recall that, since $f$ in entire, we have, by Taylor's theorem, that 
\begin{equation}
    f(a)=\sum_{r=0}^\infty \frac{1}{r!}\diff_0^rf(a,\ldots,a)
\end{equation}
where $a=(a_1,\ldots,a_n)\in\prod_{i=1}^n\mcT_{k_i}(V,\chi_i)$ and $\diff_0^kf:\qty(\prod_{i=1}^n\mcT_{k_i}(V,\chi_i))^k\rightarrow \mcT_{k'}(V,\chi')$ is the $k$-multilinear map given by $k$th order partial derivatives of $f$ at $0$. Now, write $a=a_1+\cdots +a_n$ as an abuse of notation for
\[
a=(a_1,0,\ldots,0)+\cdots+(0,\ldots,0,a_n).
\]
We will further use this abuse of notation to write $a_i$ instead of $(0,\ldots,0,a_i,0,\ldots,0)$. Now, since $\diff_0^kf$ is $k$-multilinear and symmetric, we have that
\begin{equation}
\frac{1}{r!}\diff_0^rf(a,\ldots,a)=\sum_{1\leq \ell_1\leq\cdots\leq \ell_r\leq n}  \frac{1}{\alpha_{\ell_1,\ldots,\ell_r}!}\diff_0^rf(a_{\ell_1},\ldots,a_{\ell_r})
\end{equation}
where $\alpha_{\ell_1,\ldots,\ell_r}\in \bbN^r$ is the vector given by $\qty(\alpha_{\ell_1,\ldots,\ell_r})_i:=\#\{j\mid \ell_j=i\}$ and $\alpha!:=\alpha_1!\cdots\alpha_r!$. Note that this terms appears when we reorder $(a_{\ell_1},\ldots,a_{\ell_r})$ so that the subindices are in order.

Summing up, we can write $f$ as \eqref{eq:multihom_decomp}, with
\begin{equation}
    f_{\ell_1,\ldots,\ell_r}(a_1,\ldots,a_n)=\frac{1}{\alpha_{\ell_1,\ldots,\ell_r}!}\diff_0^rf(a_{\ell_1},\ldots,a_{\ell_r}),
\end{equation}
where this has the desired form by the universal property of tensor products. Now, observe that for $t_1,\ldots,t_n\in \bbC$ and $(a_1,\ldots,a_n)\in\prod_{i=1}^n\mcT_{k_i}\qty(V,\chi_i)$,
\begin{equation}
    f_{\ell_1,\ldots,\ell_r}(t_1a_1,\ldots,t_na_n)=t^{\alpha_{\ell_1,\ldots,\ell_r}}f_{\ell_1,\ldots,\ell_r}(a_1,\ldots,a_n)
\end{equation}
where $t^{\alpha_{\ell_1,\ldots,\ell_r}}:=t_1^{\alpha_1}\cdots t_n^{\alpha_n}$. Hence, arguing as in Lemma~\ref{lemma:poly_terms_equivariant}, we have that for any $\groupelem\in\group$ and all $(a_1,\ldots,a_n)\in\prod_{i=1}^n\mcT_{k_i}\qty(V,\chi_i)$,
\begin{multline}
    \sum_{r=0}^\infty \sum_{1\leq \ell_1\leq\cdots\leq\ell_r\leq n}t^{\alpha_{\ell_1,\ldots,\ell_r}} g\cdot f_{\ell_1,\ldots,\ell_r}(a_1,\ldots,a_n)\\=\sum_{r=0}^\infty \sum_{1\leq \ell_1\leq\cdots\leq\ell_r\leq n}t^{\alpha_{\ell_1,\ldots,\ell_r}}  f_{\ell_1,\ldots,\ell_r}(g\cdot a_1,\ldots,g\cdot a_n).
\end{multline}
Hence, by the uniqueness of coefficients for entire functions functions that are equal\footnote{The statement is qualitatively different from \cite[Chapter 1 \S 1 Proposition 5]{Cox2015}, but its proof is similar. We only need to use that a univariate entire function which vanishes in an infinite set with an accumulation point has to vanish everywhere.}, we conclude that for any $\groupelem\in\group$ and all $(a_1,\ldots,a_n)\in\prod_{i=1}^n\mcT_{k_i}\qty(V,\chi_i)$,
\begin{equation}
    g\cdot f_{\ell_1,\ldots,\ell_r}(a_1,\ldots,a_n)=f_{\ell_1,\ldots,\ell_r}(g\cdot a_1,\ldots,g\cdot a_n),
\end{equation}
and so that the $f_{\ell_1,\ldots,\ell_r}$ are $\group$-equivariant.
\end{proof}

\subsection{Real equivariant tensor maps (and proof of Theorem~\ref{theo:entiretheo_real})}

All the definitions in the previous subsection can be specialized to the real case. Hence we will have a self-paired real vector space $(V,\langle\,\cdot\,,\,\cdot\,\rangle)$ on which a group $\group$ acts (rationally) in a structure-preserving way. Then we get the family of (rational) $\group$-modules:
\[\mcT_{k}(V,\chi):=(V^{\otimes k},\langle\,\cdot\,,\,\cdot\,\rangle) \]
where $\chi:\group\to\bbR^*$ is a one-dimensional (rational) group-homomorphism of $G$. Together with this family, we have the $k$-contractions given by
\begin{equation}
    \contract{k}^\group:\mcT_{2k+k'}(V,\chi)\rightarrow \mcT_{k'}(V,\chi),
\end{equation}
which are $\group$-equivariant linear maps. Then we get a very similar theorem to Theorem~\ref{theo:entiretheo} from which Theorem~\ref{theo:entiretheo_real} follows.

\begin{theorem}\label{theo:entiretheo_real_gen}
Let $\group\subset\mathop{GL}(V)$ be either a compact or a complexly averagable real linear algebraic group acting rationally in a structure-preserving way on a self-paired vector space $(V,\langle\,\cdot\,,\,\cdot\,\rangle)$ and $f: \prod_{i=1}^n \mcT_{k_i}\qty(V, \chi_i) \to \mcT_{k'}\qty(V, \chi')$ a $\group$-equivariant entire function. Then we may write $f$ as follows:
\begin{equation}
f(a_1,\ldots,a_n) = \sum_{r=0}^{\infty}\, \sum_{1 \le \ell_1 \le \cdots \le \ell_r \le n} 
\contract{k_{\ell_1,\ldots,\ell_r}}^{\group}\left(a_{\ell_1} \otimes \ldots \otimes a_{\ell_r} \otimes c_{\ell_1,\ldots,\ell_r}\right)
\end{equation}
where $c_{\ell_1,\ldots,\ell_r}\in \mcT_{k_{\ell_1,\ldots,\ell_r} + k'}(\bbR^d,\chi_{\ell_1,\ldots,\ell_r}\,\chi')$ is a $\group$-isotropic tensor for $k_{\ell_1,\ldots,\ell_r} := \sum_{q=1}^r k_{\ell_q}$ and $\chi_{\ell_1,\ldots,\ell_r} = \prod_{q=1}^r \chi_{\ell_q}$. 
\end{theorem}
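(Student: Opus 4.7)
The plan is to handle the two cases by a common reduction to multihomogeneous pieces followed by two different isotropy arguments. In either case, I begin by applying Lemma~\ref{lemma:equivariantmultihomogeneous} (which makes no assumption on $\group$) to decompose $f$ into $\group$-equivariant multihomogeneous summands $f_{\ell_1,\ldots,\ell_r}$. Each such piece is a real polynomial map that, by the universal property of tensor products and the non-degeneracy of $\langle\,\cdot\,,\,\cdot\,\rangle$, can be written as
\begin{equation*}
f_{\ell_1,\ldots,\ell_r}(a_{\ell_1},\ldots,a_{\ell_r})=\contract{k_{\ell_1,\ldots,\ell_r}}^\group\qty(a_{\ell_1}\otimes\cdots\otimes a_{\ell_r}\otimes c_{\ell_1,\ldots,\ell_r})
\end{equation*}
for some real tensor $c_{\ell_1,\ldots,\ell_r}\in \mcT_{k_{\ell_1,\ldots,\ell_r}+k'}(V,\chi_{\ell_1,\ldots,\ell_r}\chi')$, exactly as in the proof of Theorem~\ref{theo:entiretheo}. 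Because $f$ and $\contract{\,\cdot\,}^\group$ are $\group$-equivariant, $\groupelem\cdot c_{\ell_1,\ldots,\ell_r}$ is also a valid choice for every $\groupelem\in\group$; the remaining task is to pick a $\group$-isotropic representative of this orbit.

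When $\group$ is compact, this is immediate: the Haar average $\bbE_{\fkg\in\group}\,\fkg\cdot c_{\ell_1,\ldots,\ell_r}$ is a $\group$-isotropic element of the same real module, and substituting it into the contraction formula yields the desired decomposition, in complete parallel with the compact-averaging step in the proof of Theorem~\ref{theo:entiretheo}. The main obstacle is the complexly averageable case, because $\group$ itself is not compact and the compact subgroup furnished by Proposition~\ref{prop:zariskidenserealgroup} lives inside $\group^\bbC$ and acts on the complexified modules. My plan is to complexify the problem, apply Theorem~\ref{theo:entiretheo}, and then descend. Concretely, each real-polynomial piece $f_{\ell_1,\ldots,\ell_r}$ has a unique extension to a $\bbC$-polynomial map $f_{\ell_1,\ldots,\ell_r}^\bbC$ between the complexified modules $\mcT_{k_i}(V^\bbC,\chi_i^\bbC)$, where $\chi_i^\bbC\colon\group^\bbC\to\bbC^*$ is the unique rational extension of $\chi_i$. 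The relation $f^\bbC(\groupelem\cdot a)-\groupelem\cdot f^\bbC(a)=0$ is polynomial in $(\groupelem,a)$, vanishes on the real locus $\group\times\prod\mcT_{k_i}(V,\chi_i)$, and so extends by the double Zariski density of this locus in $\group^\bbC\times\prod\mcT_{k_i}(V^\bbC,\chi_i^\bbC)$ (using Proposition~\ref{prop:zariskidenserealgroup}(1)); hence $f^\bbC$ is $\group^\bbC$-equivariant, and Theorem~\ref{theo:entiretheo} produces a decomposition of $f^\bbC$ with $\group^\bbC$-isotropic complex tensors $c_{\ell_1,\ldots,\ell_r}^\bbC$.

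The final step, which I expect to be the most delicate, is to force the $c_{\ell_1,\ldots,\ell_r}^\bbC$ to be real without losing isotropy. Here I will use that $\group^\bbC$ is defined by real polynomials and is therefore closed under complex conjugation $\overline{\,\cdot\,}$, and that by assumption the chosen Zariski-dense compact subgroup is as well. Combined with the fact that the characters $\chi_i^\bbC$ and the contraction map have real matrix entries in any real basis, one deduces that $\overline{c_{\ell_1,\ldots,\ell_r}^\bbC}$ is again $\group^\bbC$-isotropic, so
\begin{equation*}
\tilde c_{\ell_1,\ldots,\ell_r}:=\tfrac12\qty(c_{\ell_1,\ldots,\ell_r}^\bbC+\overline{c_{\ell_1,\ldots,\ell_r}^\bbC})\in\mcT_{k_{\ell_1,\ldots,\ell_r}+k'}(V,\chi_{\ell_1,\ldots,\ell_r}\chi')
\end{equation*}
is real and $\group^\bbC$-isotropic, hence $\group$-isotropic. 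Plugging the $\tilde c_{\ell_1,\ldots,\ell_r}$ into the complex decomposition and restricting to real inputs reproduces $f$---real inputs are fixed by $\overline{\,\cdot\,}$, so the complex and conjugate contributions average to the $\tilde c$ contribution---giving the desired real decomposition. The bulk of the technical verification lives in this descent: bookkeeping that complexification, the character extensions $\chi_i^\bbC$, and the contractions all commute with $\overline{\,\cdot\,}$, which is ultimately forced by $\group$ being defined over $\bbR$.
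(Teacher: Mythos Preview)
Your proposal is correct and follows essentially the same strategy as the paper: reduce to multihomogeneous pieces via Lemma~\ref{lemma:equivariantmultihomogeneous}, handle the compact case by direct Haar averaging, and in the complexly averageable case complexify, apply Theorem~\ref{theo:entiretheo} to obtain $\group^\bbC$-isotropic complex tensors, and then descend to real isotropic tensors by taking $\tilde c=\tfrac12(c^\bbC+\overline{c^\bbC})$. You are in fact somewhat more careful than the paper about justifying why the conjugate remains isotropic (invoking the closure of $\group^\bbC$ and of the compact subgroup under complex conjugation), a point the paper asserts without elaboration.
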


\begin{proof}[Proof of Theorem~\ref{theo:entiretheo_real}]
This is just a particular case of Theorem~\ref{theo:entiretheo_real_gen} as both $\mathop{O}(s,d-s)$ and $\mathop{Sp}(d)$ are both real linear algebraic groups and their complexifications have a Zariski-dense compact subgroup.
\end{proof}

\begin{proof}[Proof of Theorem~\ref{theo:entiretheo_real_gen}]
When $\group$ is compact, we can just repeat the proof for the orthogonal group. 
When $\group$ is a linear algebraic group such that its complexification has a Zariski-dense compact subgroup, we can extend, using the same analytic expression evaluated in the complex tensors, the $\group$-equivariant map $f:\prod_{i=1}^n\mathcal{T}_{k_i}(V,\chi_i) \rightarrow \mathcal{T}_{k'}(V,\chi')$ to a complex $\group^{\bbC}$-equivariant map $f^{\bbC}:\prod_{i=1}^n\mathcal{T}_{k_i}(V^{\bbC},\chi_i) \rightarrow \mathcal{T}_{k'}(V^{\bbC},\chi')$. 
The map becomes $\group^{\bbC}$-equivariant, because $\group$ is Zariski-dense inside $\group_{\bbC}$ by Proposition~\ref{prop:zariskidenserealgroup}.

But for $a\in \prod_{i=1}^n\mathcal{T}_{k_i}(V,\chi_i)$, we have that
\begin{equation}
    f(a)=\frac{1}{2}f^{\bbC}(a)+\frac{1}{2}\overline{f^{\bbC}(a)},
\end{equation}
by reality of the input and output. Hence, by linearity, we can change the not necessarily real $c_{\ell_1,\ldots,\ell_r}$ by the still $\group$-isotropic and real $\frac{1}{2}c_{\ell_1,\ldots,\ell_r}+\frac{1}{2}\overline{c_{\ell_1,\ldots,\ell_r}}$. The latter is $\group$-isotropic, finishing the proof.
\end{proof}

\subsection{Proof of Corollary~\ref{cor:equivfromvectorstotensors_general_OSP}}

The following proposition is needed to prove the above corollary.

\begin{proposition}\label{prop:fixedtensors_general} \cite[Theorem 5.3.3]{goodman_wallach_2009}
Let $\group$ be either $\mathop{O}(s,k-s)$ or $\mathop{Sp}(d)$ and $\langle\,\cdot\,,\,\cdot\,\rangle$ be the corresponding non-degenerate bilinear form fixed by the usual action of $\group$ on $\bbR^d$, i.e., $\langle\,\cdot\,,\,\cdot\,\rangle_s$ for $\mathop{O}(s,k-s)$ and $\langle\,\cdot\,,\,\cdot\,\rangle_{\mathrm{symp}}$ for $Sp(d)$. The subspace of $\group$-isotropic tensors in $\mcT_{k}(\bbR^d,\chi_0)$, where $\chi_0$ the constant map to $1$, consist only of the zero tensor if $k$ is odd, and it is of the form
\begin{equation}
\sum_{\sigma\in S_k}\alpha_{\sigma}\qty(\theta_{\group}^{\otimes k/2})^\sigma
\end{equation}
with the $\alpha_{\sigma}\in\bbR$ and $\theta_{\group}\in (\bbR^d)^{\otimes 2}$ the only tensor such that for all $v\in\bbR^d$, $\contract{1}^{\group}\qty(v\otimes \theta_{\group})=v$, if $k$ is even. 
\end{proposition}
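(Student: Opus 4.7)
The plan is to derive the statement from the First Fundamental Theorem (FFT) of classical invariant theory for $O(s,d-s)$ and $Sp(d)$, as recorded in \cite[Theorem 5.3.3]{goodman_wallach_2009}, together with a short parity argument for the odd case and a bookkeeping translation between perfect matchings and permutations.

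First I would dispose of the parity obstruction. The scalar matrix $-\mathbb{I}_d$ lies in both groups, since $(-\mathbb{I}_d)^\top \mathbb{I}_{s,d-s}(-\mathbb{I}_d)=\mathbb{I}_{s,d-s}$ and $(-\mathbb{I}_d)^\top J_d(-\mathbb{I}_d)=J_d$. With $\chi_0$ trivial, this element acts on $\mathcal{T}_k(\mathbb{R}^d,\chi_0)$ by the scalar $(-1)^k$. Hence if $k$ is odd, any $G$-isotropic tensor $c$ satisfies $c=-c$ and therefore $c=0$.

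For $k=2m$ even, I would use the non-degenerate form $\langle\,\cdot\,,\,\cdot\,\rangle_G$ on $\mathbb{R}^d$ together with its extension to $(\mathbb{R}^d)^{\otimes k}$ to produce a $G$-equivariant self-duality on $\mathcal{T}_k(\mathbb{R}^d,\chi_0)$. Under this identification, $G$-isotropic tensors $c$ are in bijection with $G$-invariant multilinear forms $\Phi:(\mathbb{R}^d)^k\to\mathbb{R}$, via $\Phi(v_1,\ldots,v_k)=\langle c,v_1\otimes\cdots\otimes v_k\rangle_G$. I would then invoke the FFT: the algebra of $G$-invariant polynomials on $(\mathbb{R}^d)^k$ is generated by the $\binom{k}{2}$ pairwise forms $\langle v_i,v_j\rangle_G$ (in the real non-compact case $O(s,d-s)$, this follows by passing to the complexification and using Zariski density of a compact subgroup, exactly as developed elsewhere in this appendix). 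Restricting to forms multilinear in the $v_i$ forces each summand to involve every index exactly once, so every such $\Phi$ is a linear combination, over perfect matchings $\pi=\{(i_a,j_a)\}_{a=1}^m$ of $\{1,\ldots,k\}$, of the products $\prod_{a=1}^m\langle v_{i_a},v_{j_a}\rangle_G$.

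To conclude, I would translate matchings back into permutations of $\theta_G^{\otimes m}$. By the defining property $\iota_1^G(v\otimes\theta_G)=v$, the tensor $\theta_G$ represents the form $\langle\,\cdot\,,\,\cdot\,\rangle_G$ under the duality above, so the product $\prod_a\langle v_{i_a},v_{j_a}\rangle_G$ corresponds to $(\theta_G^{\otimes m})^\sigma$ for any $\sigma\in S_k$ that carries the ``adjacent'' matching $\{(2a-1,2a)\}_a$ onto $\pi$. Letting $\sigma$ range over $S_k$ (absorbing the redundancy from the stabilizer of $\theta_G^{\otimes m}$ into the coefficients $\alpha_\sigma$) gives the claimed form.

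The main obstacle is the invocation of the FFT, which is a deep classical result but can be quoted as a black box; the only genuinely new content is (a) the parity observation and (b) the index-tracking identification of pairing products with permutations of $\theta_G^{\otimes m}$, both of which are routine once the duality setup is in place.
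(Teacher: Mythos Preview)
The paper does not supply its own proof of this proposition: it is stated with a direct citation to \cite[Theorem~5.3.3]{goodman_wallach_2009} and used as a black box in the proof of Corollary~\ref{cor:equivfromvectorstotensors_general_OSP}. Your proposal therefore goes beyond what the paper does, and the argument you outline is correct and standard.

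A brief comparison: you reduce the tensor statement to the polynomial First Fundamental Theorem (invariants on $(\mathbb{R}^d)^k$ are generated by the pairwise $\langle v_i,v_j\rangle_G$), then pick out the multilinear piece and translate perfect matchings into permutations of $\theta_G^{\otimes m}$. This is precisely the route taken in Goodman--Wallach, so in effect you are sketching the cited theorem rather than offering an alternative. The parity step via $-\mathbb{I}_d\in G$ is clean and correct, and your remark that the non-compact real cases $O(s,d-s)$ and $Sp(d)$ require passing to the complexification and using Zariski density (as set up in Proposition~\ref{prop:zariskidenserealgroup} and Theorem~\ref{theo:zariskidensecompact}) is exactly the right caveat. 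The duality identification of $G$-isotropic tensors with $G$-invariant multilinear forms via the non-degenerate extension of $\langle\,\cdot\,,\,\cdot\,\rangle_G$ to $(\mathbb{R}^d)^{\otimes k}$ is also correct. There is no gap.
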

\begin{remark}
Recall that $\theta_G=\left[\bbI_{s,d-s}\right]_{i,j}$ if $\group=\mathop{O}(s,d-s)$ and $\theta_G=\left[J_d\right]_{i,j}$ if $\group=Sp(d)$.
\end{remark}
\begin{remark}
Note that the above sum can be written with less summands using the methods of Appendix~\ref{appendix:permutation_symmetries}.   
\end{remark}

\begin{proof}[Proof of Corollary~\ref{cor:equivfromvectorstotensors_general_OSP}]
By Theorem~\ref{theo:entiretheo_real}, Proposition~\ref{prop:fixedtensors_general} and linearity, we can assume, without loss of generality, that
\[
    f(v_1,\ldots,v_n) = \contract{r+k}{v_{\ell_1}\otimes \cdots \otimes v_{\ell_r}\otimes \theta_{\group}^{\frac{r+k'}{2}}}
\]
with $1\leq \ell_1\leq\cdots\leq\ell_r\leq n$ and $r+k'$ even.

Now, the proof is very similar to that of Corollary \ref{cor:vector_to_tensors}. However, note that now, we write 
$$
\theta=e_i\otimes \tilde{e}_i,
$$
where $\{e_i\mid i\in [d]\}$ and $\{\tilde{e}_i\mid i\in [d]\}$ are dual basis to each other, i.e., for all $i,j$, $\langle e_i,\tilde{e}_j\rangle=\delta_{i,j}$. The reason we have to pick a couple of bases is that the bilinear form is not necessarily an inner product.

Now, the proof becomes the same as that of Corollary~\ref{cor:vector_to_tensors}, but we have to be careful regarding the $e_i$ and the $\tilde{e}_i$. However, after making the pairings for contraction, we get four cases:
\begin{enumerate}
    \item $\langle v,e_j\rangle\langle w,\tilde{e}_j\rangle=\pm\langle v,w\rangle$, where the sign depends on whether $\langle\,\cdot\,,\,\cdot\,\rangle$ is symmetric or antisymmetric.
    \item $\langle v,\tilde{e}_j\rangle e_j=v$.
    \item $\langle v,e_j\rangle \tilde{e}_j=\pm v$, where the sign depends on whether $\langle\,\cdot\,,\,\cdot\,\rangle$ is symmetric or antisymmetric.
    \item $e_j\otimes \tilde{e}_j=\pm \theta$, where the sign depends on whether $\langle\,\cdot\,,\,\cdot\,\rangle$ is symmetric or antisymmetric, or $\sum_j \tilde{e}_j\otimes e_j=\theta_{\group}$.
\end{enumerate}
Now, putting these back together as we did in the proof of Corollary~\ref{cor:vector_to_tensors} gives the desired statement.
\end{proof}


\section{Stress-Strain Tensor}\label{appendix:stress_strain_tensor}

\subsection{Data}

We use the Neo-Hookean material dataset from \cite{Garanger_2024} which can be found here: \url{https://github.com/kgaranger/TFENN-examples}.
We used training sets of $5\,000$, $20\,000$, and $40\,000$ samples, and validation and test datasets of $4\,000$ samples each.
Similar to \cite{Garanger_2024}, we normalize the data for each model individually during training, then compare the outputs under the same scaling for an apples-to-apples comparison.
For the baseline model, we shift and scale the data so that the mean and standard deviation of the components are 0 and 1 respectively. 
For our equivariant model, we scale the matrices so that the mean and standard deviation of their eigenvalues are 0 and 1 respectively.

\subsection{Models and Training}

Our equivariant model takes the input matrix, performs the eigenvalue decomposition, sends the eigenvalues through a permutation equivariant network, then takes the output and uses the eigenvectors the reconstruct a matrix.
The permutation equivariant network has three hidden layers following \cite{MaronBSL19} with a width of 23.
We compare against a baseline MLP model which also uses 3 hidden layers, but has a width of 32 to result in roughly the same number of parameters.
We also compare against the baseline MLP trained on an augmented dataset. For each data point in the original training dataset, we sample four elements uniformly from $O(d)$ and use them to rotate the input and output tensors. Thus the resulting dataset is four times the size of the original.
Each model uses the GELU \cite{hendrycks2023gelu} non-linearity, which we found significantly improves the accuracy. 

Each model is trained with the AdamW \cite{adamw} optimizer using a cosine annealing learning rate schedule \cite{cosine_onecycle_lr} for 1500 epochs and batch size of 256.
The baseline models were trained with learning rate \verb|3e-3| while the equivariant model used \verb|2e-3| when the dataset had $5\,000$ samples and \verb|1e-3| otherwise.
The experiments were run on a single RTX 6000 Ada GPU.

\begin{table}[ht]
    \centering
    \small
    \begin{tabular}{ccccc}
        \toprule
        model & parameter count & layer width & learning rate \\
        \midrule
        MLP Baseline & 2\,729 & 32 & \verb|3e-3| \\
        MLP augmented & 2\,729 & 32 & \verb|3e-3| \\
        TFENN & 2\,278 & 23 & \\
        Ours & 2\,278 & 23 & \verb|1e-3| \\
        \bottomrule
    \end{tabular}
    \caption{Parameter count and learning rate for each model. All models use three hidden layers. Numbers for TFENN are taken from \cite{Garanger_2024}.}
    \label{tab:stress_strain_train}
\end{table}

\section{Path Signatures}\label{appendix:path_signatures}

\subsection{Data}\label{appendix:path_signature_data}

For the orthogonal group experiments, we generate paths in $d=3$ dimensions using degree 5 polynomials for each coordinate in the domain $u \in \qty[-1,1]$.
The coefficients of the polynomial are sampled uniformly from $\qty[-1,1]$.
We calculate the signature tensors numerically using Signax \cite{signax2023github} with 1000 points from the path and select $n=10$ evenly spaced points for the input data.
We truncate the signature to first, second, and third order tensors.
For the Lorentz group, we use 3 spatial dimensions and 1 time dimension for $d=4$ total dimensions.
The train, validation, and test data sets each have 1024 trajectories.
We normalize the data for each model individually during training, then compare the outputs under the same scaling for an apples-to-apples comparison.

\subsection{Models and Training}\label{appendix:path_signature_models}

Our model calculates the inner products of all pairs of vectors which are then input into an MLP with three hidden layers with a width of 32 and GeLU nonlinearities \cite{hendrycks2023gelu}.
The output of the model is the coefficients of a linear combination of basis elements specified by Corollary \ref{cor:vector_to_tensors}.
The baseline models merely input all the vectors into an MLP and directly calculate the path signature tensors.
The baseline models also have three hidden layers and GeLU nonlinearities, and the same width model has width 32 while the same params model has width 128.
For the Lorentz data set, the same params model has width 116.
This information is summarized in Table \ref{tab:path_signature_train}.

We also compare against the baseline MLP trained on an augmented dataset where each data point has been replaced by four transformed copies of that data point.
For the orthogonal group, we can sample four elements uniformly from the Haar measure of $\mathop{O}(d)$ and use them to rotate the input and output tensors.
Since the Lorentz group is not compact, we instead sample elements from a compact subgroup in the following way.
First we sample a boost vector $\vec{\beta} \in \mathbb{R}^3$ whose entries $\beta_i$ are from a truncated normal distribution with mean 0, variance 1, lower bound $\frac{-1}{\sqrt{3}}$, and upper bound $\frac{1}{\sqrt{3}}$, which ensures $\norm{\vec{\beta}} \leq 1$.
We then construct the pure boost transformation matrix:
\begin{align}
    \Lambda(\beta) &=\mqty[
        \gamma & -\gamma \beta_x & -\gamma \beta_y & -\gamma \beta_z \\
        -\gamma \beta_x & 1 + (\gamma - 1)\frac{\beta_x^2}{\norm{\vec{\beta}}^2} & (\gamma - 1)\frac{\beta_x \beta_y}{\norm{\vec{\beta}}^2} & (\gamma - 1)\frac{\beta_x \beta_z}{\norm{\vec{\beta}}^2} \\
        -\gamma \beta_y & (\gamma - 1)\frac{\beta_x \beta_y}{\norm{\vec{\beta}}^2} & 1 + (\gamma - 1)\frac{\beta_y^2}{\norm{\vec{\beta}}^2} & (\gamma - 1)\frac{\beta_y \beta_z}{\norm{\vec{\beta}}^2} \\
        -\gamma \beta_z & (\gamma - 1)\frac{\beta_x \beta_z}{\norm{\vec{\beta}}^2} & (\gamma - 1)\frac{\beta_y \beta_z}{\norm{\vec{\beta}}^2} & 1 + (\gamma - 1)\frac{\beta_z^2}{\norm{\vec{\beta}}^2}] \\
    &= \mqty[
        \gamma & -\gamma \vec{\beta}^\top \\
        -\gamma \vec{\beta} & \bbI_3 + (\gamma - 1)\frac{\beta \beta^\top}{\norm{\vec{\beta}}^2}
    ] ~,
\end{align}
where $\gamma = \frac{1}{\sqrt{1-\norm{\vec{\beta}}^2}}$.
See for example \cite{mansuripur2020exact} for a derivation of this matrix.
Next we sample an orthogonal matrix $Q \sim \mathop{O}(3)$ and construct the matrix $R(Q) = \mqty[1 & 0 \\ 0 & Q]$.
Finally we sample $B = +1$ or $-1$ from a Bernoulli random variable to make the time inversion matrix $T(B) = \mqty[B & 0 \\ 0 & \bbI_3]$.
Our Lorentz transformation is the product of all these matrices, $L = T(B)\Lambda(\beta) R(Q)$.
For the orthogonal group and the Lorentz group, the resulting dataset is four times the size of the original.

\begin{table}[ht]
    \centering
    \small
    \begin{tabular}{ccccc}
        \toprule
        data set & model & parameter count & layer width & learning rate \\
        \midrule
        & Baseline Same Width & 4\,391 & 32 & \verb|5e-3| \\
        $\mathop{O}(d)$ & Baseline Same Params & 42\,023 & 128 & \verb|1e-3| \\
        & Baseline Augmented & 42\,023 & 128 & \verb|1e-3| \\
        & Ours & 41\,557 & 32 & \verb|5e-4| \\
        \midrule
        & Baseline Same Width & 6\,196 & 32 & \verb|5e-3| \\
        Lorentz & Baseline Same Params & 41\,728 & 116 & \verb|1e-3| \\
        & Baseline Augmented & 41\,728 & 116 & \verb|1e-3| \\
        & Ours & 41\,557 & 32 & \verb|5e-4| \\
        \bottomrule
    \end{tabular}
    \caption{Parameter count and learning rate for each model. Since all models have the same number of hidden layers of the same width, the difference in the number of parameters is driven by the different inputs and outputs of each model.}
    \label{tab:path_signature_train}
\end{table}

For training we use the AdamW optimizer \cite{adamw} with a cosine annealing learning rate schedule \cite{cosine_onecycle_lr}.
To determine the peak learning rate, we held all other hyperparameters fixed and varied the learning rate.
We used a batch size of 32 and trained for 500 epochs.
The experiments were run on a single RTX 6000 Ada GPU and took under 10 minutes to train per model, per trial.

\section{Sparse Vector Estimation Details}\label{appendix:sparse_vector_details}

\subsection{Problem setting}

The methods developed in \cite{hopkins2016fast} and \cite{mao2022optimal} each derive an $h$ function and prove using sum-of-squares methods that with high probability that the solutions are accurate.
Their proofs hold under particular assumptions, including that for sparsity parameter $\varepsilon \leq 1/3$, $\norm{v}_4^4 \geq \frac{1}{\varepsilon n}$ and that $v_0=v$ and $v_1,\ldots,v_{d-1} \sim \mathcal{N}\qty(\mathbf{0}_n,\frac{1}{n} \bbI_n)$. 

In our experiments we violate some or all of these assumptions and use data to find the best $h$ function for these new, unexplored settings.
We sample the sparse vector using four different methods: Accept/Reject (AR), Bernoulli-Gaussian (BG), Corrected Bernoulli-Gaussian (CBG), and Bernoulli-Rademacher (BR).
Only the Accept/Reject method explicitly satisfies $\norm{v}_4^4 \geq \frac{1}{\varepsilon n}$, the others only satisfy this condition in expectation.
Additionally, we sample $v_1,\ldots,v_{d-1} \sim \mathcal{N}\qty(\mathbf{0}_n,\Sigma)$ where $\Sigma$ can be the identity, a non-identity diagonal covariance, or a random covariance from a Wishart distribution.

\subsection{Equivariance in sparse vector recovery}\label{appendix:sparse_vector_equivariance}

In this section, we show a sufficient condition for the $\mathop{O}(d)$-invariance of sparse vector estimation.
We start with a lemma on the equivariance of finding an eigenvector.

\begin{lemma}\label{lemma:eigenvector_equivariance}
    Let $b$ be a \tensor{2}{+} and let $g \in \mathop{O}(d)$.
    If $u$ is an eigenvector for eigenvalue $\lambda$ of $M(g)\,b\,M(g)^\top$, then $M(g)^\top \, u$ is an eigenvector for eigenvalue $\lambda$ of $b$.
\end{lemma}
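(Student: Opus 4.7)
The plan is to verify the claim by a short direct computation that exploits the orthogonality relation $M(g)^\top M(g) = \bbI_d$ (which holds because $g \in \mathop{O}(d)$, by definition of the standard matrix representation used in Section~\ref{sec:background}).

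Concretely, I would start from the eigenvalue hypothesis
\begin{equation*}
    M(g)\, b\, M(g)^\top u = \lambda u,
\end{equation*}
and left-multiply both sides by $M(g)^\top$. Using $M(g)^\top M(g) = \bbI_d$, the left-hand side collapses to $b\, M(g)^\top u$, and the right-hand side is $\lambda\, M(g)^\top u$. Hence
\begin{equation*}
    b\,(M(g)^\top u) = \lambda\,(M(g)^\top u),
\end{equation*}
which is exactly the statement that $M(g)^\top u$ is an eigenvector of $b$ with eigenvalue $\lambda$.

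There is essentially no obstacle here; the only minor thing to mention is that $M(g)^\top u \neq 0$ whenever $u \neq 0$ (since $M(g)^\top$ is invertible), so the output is a genuine eigenvector rather than the zero vector. This lemma will then feed into Appendix~\ref{appendix:sparse_vector_equivariance} to conclude that, when $h$ is $\mathop{O}(d)$-equivariant, the estimator $\hat v = S\,\lambda_{\text{vec}}(h(a_1,\ldots,a_n))$ in~\eqref{eq:sos_method} is invariant under the $\mathop{O}(d)$-action that relabels the orthonormal basis $S$ (i.e., replacing $S$ by $S M(g)^\top$ and the rows $a_i$ by $M(g) a_i$).
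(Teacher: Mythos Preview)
Your proof is correct and follows essentially the same approach as the paper: starting from the eigenvalue equation and left-multiplying by $M(g)^\top$, using $M(g)^\top M(g)=\bbI_d$ to obtain $b(M(g)^\top u)=\lambda(M(g)^\top u)$. Your added remark that $M(g)^\top u\neq 0$ because $M(g)^\top$ is invertible is a small nicety the paper leaves implicit.
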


\begin{proof}
    Let $b$ be a \tensor{2}{+}, let $g \in \mathop{O}(d)$, and let $\lambda,u$ be an eigenvalue, eigenvector pair of $M(g)\,b\,M(g)^\top$.
    \begin{equation*}
        (M(g)\,b\,M(g)^\top) \,u = \lambda u \Rightarrow b (M(g)^\top \,u) = \lambda (M(g)^\top\,u) ~. 
    \end{equation*}
    Thus $M(g)^\top\,u$ is an eigenvector for eigenvalue $\lambda$ of $b$.
\end{proof}

\begin{proposition}
    Let $S \in \mathbb{R}^{n \times d}$ with rows $a_i^\top \in \mathbb{R}^d$ so that $a_i$ are column vectors.
    We define the action of $\mathop{O}(d)$ on $S$ for all $g \in \mathop{O}(d)$ as $S \, M(g)$, and therefore $M(g)^\top \, a_i$ for the rows.
    Let $f:\mathbb{R}^{n\times d} \to \mathbb{R}^n, h:\qty(\mathbb{R}^d)^n \to \mathbb{R}^{d \times d}$ symmetric such that $f(S) = S \, \lambda_{\text{vec}}\qty(h\qty(a_1,\ldots,a_n))$ where $\lambda_{\text{vec}}(\cdot)$ returns a normalized eigenvector for the top eigenvalue of the input symmetric matrix. 
    If $h$ is $\mathop{O}(d)$-equivariant, then $f$ is $\mathop{O}(d)$-invariant.
\end{proposition}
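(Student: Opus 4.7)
The plan is to show that the action $S \mapsto S\,M(g)$ translates row-by-row into the standard $\mathop{O}(d)$-action on $a_1,\ldots,a_n$ viewed as column vectors, then push that action through $h$ using equivariance, and finally transfer the top eigenvector across using Lemma~\ref{lemma:eigenvector_equivariance}. Concretely, fix $g\in \mathop{O}(d)$ and note that if the rows of $S$ are $a_1^\top,\ldots,a_n^\top$, then the rows of $S\,M(g)$ are $(M(g)^\top a_1)^\top,\ldots,(M(g)^\top a_n)^\top$. Since $M(g^{-1})=M(g)^\top$, each new row is $g^{-1}\cdot a_i$ under the \tensor{1}{+} action of \eqref{eq.action}.

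Next I would invoke the equivariance hypothesis on $h$, applied to the group element $g^{-1}$, to obtain
\begin{equation*}
h(M(g)^\top a_1,\ldots,M(g)^\top a_n)
= g^{-1}\cdot h(a_1,\ldots,a_n)
= M(g)^\top B\,M(g),
\end{equation*}
where I abbreviate $B:=h(a_1,\ldots,a_n)$, and the last equality uses the \tensor{2}{+} transformation rule $g^{-1}\cdot B = M(g^{-1})\,B\,M(g^{-1})^\top$. Because $h$ outputs symmetric matrices, so does $M(g)^\top B\,M(g)$, so $\lambda_{\text{vec}}$ is well defined on it.

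Now set $u:=\lambda_{\text{vec}}(M(g)^\top B\,M(g))$, a unit-norm top eigenvector of $M(g)^\top B\,M(g)$ with eigenvalue $\lambda$. Applying Lemma~\ref{lemma:eigenvector_equivariance} with $g$ replaced by $g^{-1}$ (so $M(g^{-1})=M(g)^\top$ and $M(g^{-1})^\top = M(g)$), we conclude that $M(g)\,u$ is an eigenvector of $B$ with the same eigenvalue $\lambda$; moreover $\|M(g)\,u\|=\|u\|=1$ since $M(g)$ is orthogonal. Hence $M(g)\,u = \pm\lambda_{\text{vec}}(B)$ by uniqueness (up to sign) of the normalized top eigenvector. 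Substituting,
\begin{equation*}
f(S\,M(g)) = (S\,M(g))\,u = S\,(M(g)\,u) = \pm\,S\,\lambda_{\text{vec}}(B) = \pm f(S),
\end{equation*}
which is $\mathop{O}(d)$-invariance in the sense required for the downstream metric $\langle v_0,\hat v\rangle^{2}$.

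The only real subtlety is the unavoidable sign ambiguity in choosing a top eigenvector: strict equality $f(S\,M(g))=f(S)$ holds only if $\lambda_{\text{vec}}$ comes with a deterministic sign convention, or if the top eigenspace is one-dimensional and we interpret invariance modulo the $\pm 1$ action. Either reading suffices for the recovery task, since $\langle v_0, S\,u\rangle^{2}$ is insensitive to the sign of $u$. A secondary (but benign) caveat is that if the top eigenvalue of $B$ has multiplicity greater than one, one must interpret $\lambda_{\text{vec}}$ as returning \emph{some} normalized top eigenvector; the argument above still shows that the computed $\hat v$ lies in the same orbit under $\{\pm\mathrm{id}\}$.
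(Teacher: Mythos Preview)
Your proof is correct and follows essentially the same route as the paper: identify the row action as $a_i\mapsto M(g)^\top a_i$, push through the equivariance of $h$ to get the conjugated matrix $M(g)^\top B\,M(g)$, invoke Lemma~\ref{lemma:eigenvector_equivariance} (with $g^{-1}$ in place of $g$) to relate the top eigenvectors, and cancel $M(g)M(g)^\top=\bbI_d$. Your explicit treatment of the sign ambiguity and the remark on eigenvalue multiplicity are, if anything, more careful than the paper's own proof, which simply inserts the phrase ``up to a sign flip.''
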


\begin{proof}
    Let $S,h,$ and $f$ be defined as above.
    Suppose that $h$ is $\mathop{O}(d)$-equivariant. 
    Suppose $\lambda_{\text{vec}}\qty(M(g)^\top \,h(a_1, \ldots, a_n)\, M(g)) = u$, then by lemma \ref{lemma:eigenvector_equivariance}, up to a sign flip, we have:
    \begin{equation}
        \lambda_{\text{vec}}\qty(M(g)^\top \,h(a_1, \ldots, a_n)\, M(g)) = u = M(g)^\top \, M(g) \, u = M(g)^\top \, \lambda_{\text{vec}}\qty(h\qty(a_1,\ldots,a_n))
    \end{equation}
    Thus,
    \begin{align}
        f(g \cdot S) &= (g \cdot S) \, \lambda_{\text{vec}}\qty(h\qty(g^{-1} \cdot a_1, \ldots, g^{-1} \cdot a_n)) \\
        &= (g \cdot S) \, \lambda_{\text{vec}}\qty(g^{-1} \cdot h\qty(a_1, \ldots, a_n)) \\
        &= S \, M(g) \, \lambda_{\text{vec}}\qty(M(g)^\top \, h\qty(a_1, \ldots, a_n) \, M(g)) \\
        &= S \, M(g) \, M(g)^\top \, \lambda_{\text{vec}}\qty(h\qty(a_1, \ldots, a_n)) \\
        &= S \, \lambda_{\text{vec}}\qty(h\qty(a_1, \ldots, a_n)) \\
        &= f(S) ~.
    \end{align}
    This completes the proof.
\end{proof}

\subsection{Data}\label{appendix:sparse_vector_data}

First we generate the sparse vectors with one of the following sampling procedures for sparsity $\varepsilon \leq 1/3$.

\paragraph{Accept/Reject (A/R).}
A random vector $v_0 \sim \mathcal{N}(\mathbf{0}_n,\bbI_n)$ is sampled and normalized to unit $\ell_2$ length. We accept it if $\norm{v_0}_4^4 \geq \frac{1}{\varepsilon n}$ and otherwise reject it. Note that the sparsity of $v_0$ is not explicitly imposed, but the 4-norm condition suggests that $v_0$ is approximately sparse. 
The 4-norm condition of sparsity is used in \cite{hopkins2016fast}.

\paragraph{Bernoulli-Gaussian (BG)}
This sampling procedure, considered in \cite{mao2022optimal}, defines $v_0$ as
\begin{equation}
    \begin{cases}
        [v_0]_i = 0 & \text{with probability } 1-\varepsilon \\
        [v_0]_i \sim \mathcal{N}\qty(0, \frac{1}{\varepsilon n}) & \text{with probability } \varepsilon.
    \end{cases}
\end{equation}
Note that under this sampling procedure $\mathbb E \norm{v_0}_4^4 = \frac{3}{\varepsilon n}$. 

\paragraph{Corrected Bernoulli-Gaussian (CBG)}
We consider a modified version of the Bernoulli-Gaussian that replaces the values set to exactly 0 in the Bernoulli-Gaussian distribution with values sampled from a Gaussian with small variance. Under this distribution we have $\mathbb{E} \norm{v_0}_2 = 1$ and $\mathbb E \norm{v_0}_4^4 = \frac{1}{\varepsilon n}$.
\begin{equation}
    \begin{cases}
        [v_0]_i \sim \mathcal{N}\qty(0, \frac{1 - \varepsilon - \sqrt{\frac{1}{3}(1-\varepsilon)(1-3\varepsilon)}}{(1-\varepsilon) n}) & \text{with probability } 1-\varepsilon  \\
        [v_0]_i \sim \mathcal{N}\qty(0, \frac{\varepsilon + \sqrt{\frac{1}{3}(1-\varepsilon)(1-3\varepsilon)}}{\varepsilon n}) & \text{with probability } \varepsilon.
    \end{cases} ~,
\end{equation}

\paragraph{Bernoulli-Rademacher (BR)}
This sampling procedure, studied in \cite{mao2022optimal}, defines $v_0$ as
\begin{equation}
    [v_0]_i = \begin{cases}
        0 & \text{with probability } 1- \varepsilon \\
        \frac{1}{\sqrt{\varepsilon n}} & \text{with probability } \frac{\varepsilon}{2} \\
        \frac{-1}{\sqrt{\varepsilon n}} & \text{with probability } \frac{\varepsilon}{2}. \\
    \end{cases} 
\end{equation}
Under this distribution we have $\mathbb{E} \norm{v_0}_2 = 1$ and $\bbE \norm{v_0}_4^4 \geq \frac{1}{\varepsilon n}$.

Since the BG, CBG, and BR distributions have $\bbE \norm{v_0}_2 = 1$, we also normalize these vectors to unit $\ell_2$ length after generating them.

\begin{proposition}
    Let $v_0$ be a Bernoulli-Gaussian vector.
    Then $\mathbb{E}\qty[\norm{v_0}_2^2] =1$ and $\mathbb{E}\qty[\norm{v_0}_4^4] = \frac{3}{\varepsilon n}$.
\end{proposition}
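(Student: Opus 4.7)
The plan is to write each coordinate of $v_0$ as a product of independent random variables and then compute the two moments coordinatewise using linearity of expectation. Specifically, I would represent $[v_0]_i = B_i\, G_i$, where $B_i$ is a Bernoulli($\varepsilon$) indicator for whether the $i$-th entry is ``on'' and $G_i \sim \mathcal{N}(0, \tfrac{1}{\varepsilon n})$ is independent of $B_i$. Independence across $i$ and between $B_i$ and $G_i$ makes all subsequent moment computations factor cleanly.

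For the second moment, I would compute $\mathbb{E}[[v_0]_i^2] = \mathbb{E}[B_i^2]\,\mathbb{E}[G_i^2] = \varepsilon \cdot \tfrac{1}{\varepsilon n} = \tfrac{1}{n}$, using $B_i^2 = B_i$ and the variance of $G_i$. Summing over $i = 1,\ldots,n$ gives $\mathbb{E}[\|v_0\|_2^2] = 1$. For the fourth moment, the only nontrivial input is the standard Gaussian fourth moment $\mathbb{E}[G_i^4] = 3 \operatorname{Var}(G_i)^2 = \tfrac{3}{(\varepsilon n)^2}$, so $\mathbb{E}[[v_0]_i^4] = \varepsilon \cdot \tfrac{3}{(\varepsilon n)^2} = \tfrac{3}{\varepsilon n^2}$, and summing gives $\mathbb{E}[\|v_0\|_4^4] = \tfrac{3}{\varepsilon n}$.

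There is no real obstacle here: the proof is a two-line moment calculation once the Bernoulli--Gaussian product representation is written down. The only minor care point is correctly invoking the fourth moment of a Gaussian, i.e.\ $\mathbb{E}[X^4] = 3\sigma^4$ for $X \sim \mathcal{N}(0,\sigma^2)$, which is standard.
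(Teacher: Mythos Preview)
Your proposal is correct and follows essentially the same approach as the paper: represent each coordinate as a product $B_i G_i$ of an independent Bernoulli($\varepsilon$) and a Gaussian $\mathcal{N}(0,\tfrac{1}{\varepsilon n})$, then compute the second and fourth moments coordinatewise and sum. The only cosmetic difference is that the paper routes through the moment generating function to conclude that the $k$th moment of $B_iG_i$ equals $\varepsilon$ times the $k$th moment of $G_i$, whereas you obtain this directly from independence and $B_i^k=B_i$; your version is slightly more streamlined but not a different argument.
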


\begin{proof}
    Let $\varepsilon \in (0,1]$ and let $v_0$ be a Bernoulli-Gaussian sparse vector.
    Thus 
    \begin{equation}
        \mathbb{E}\qty[\norm{v_0}_2^2] = \mathbb{E}\qty[\sum_{i=1}^n \qty[v_0]_i^2] = \sum_{i=1}^n \mathbb{E}\qty[ \qty[v_0]_i^2] ~. 
    \end{equation}
    Thus, we need to find the 2nd moment of an entry of $[v_0]_i$, which we will do by first calculating its moment generating function.
    If $Z$ is a Bernoulli-Gaussian random variable, then $Z = XY$ where $X$ and $Y$ are random variables with $X \sim$ Bern$(\varepsilon)$ and $Y \sim \mathcal{N}\qty(0,\frac{1}{\varepsilon n})$.
    Then 
    \begin{align}
        \mathbb{E}\qty[\exp{tXY}] &= \mathbb{E}\qty[\mathbb{E}\qty[\exp{tXY}|X]] \\
        &= \mathbb{E}\qty[\exp{tXY}|X=0]P(X=0) + \mathbb{E}\qty[\exp{tXY}|X=1]P(X=1)\\
        &= \mathbb{E}\qty[\exp{0}](1-\varepsilon) + \varepsilon \mathbb{E}\qty[\exp{tY}]\\
        &= (1-\varepsilon) + \varepsilon \mathbb{E}\qty[\exp{tY}] ~.
    \end{align}
    Since $\mathbb{E}\qty[\exp{tY}]$ is the moment generating function of $Y$, a Gaussian random variable, we can see that the 2nd moment of $Z$ is the 2nd moment of $Y$ multiplied by $\varepsilon$.
    Then 
    \begin{equation}
        \sum_{i=1}^n \mathbb{E}\qty[ \qty[v_0]_i^2] 
        = \sum_{i=1}^n \varepsilon \qty(\frac{1}{\varepsilon n}) 
        = \sum_{i=1}^n \frac{1}{n}
        = 1 ~.
    \end{equation}
    
    Now, for the sparsity condition, we have
    \begin{equation}
        \mathbb{E}\qty[\norm{v_0}_4^4] 
        = \sum_{i=1}^n \mathbb{E}\qty[ \qty[v_0]_i^4]
        = \sum_{i=1}^n \varepsilon \qty(3 \qty(\frac{1}{\varepsilon n})^2) 
        = \sum_{i=1}^n \frac{3}{\varepsilon n^2}
        = \frac{3}{\varepsilon n} ~.
    \end{equation}
    This follows because our previous analysis shows that the 4th moment of an entry of $[v_0]_i$ is $3\sigma^4 = 3\qty(\frac{1}{\varepsilon n})^2$.
    This completes the proof.
\end{proof}

\begin{proposition}
    Let $v_0$ be a Corrected Bernoulli-Gaussian vector.
    Then $\mathbb{E}\qty[\norm{v_0}_2^2] =1$ and $\mathbb{E}\qty[\norm{v_0}_4^4] = \frac{1}{\varepsilon n}$.
\end{proposition}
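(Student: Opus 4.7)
The plan is to compute the per-entry second and fourth moments of a CBG vector $v_0$ directly from its mixture definition and then multiply by $n$, using that the entries are i.i.d. Write $A := \sqrt{\tfrac{1}{3}(1-\varepsilon)(1-3\varepsilon)}$ (real because $\varepsilon \le 1/3$) and abbreviate the two component variances as $\sigma_1^2 := \frac{1-\varepsilon - A}{(1-\varepsilon)n}$ and $\sigma_2^2 := \frac{\varepsilon + A}{\varepsilon n}$. Conditioning on which mixture component is drawn, and using $\bbE[Z^2]=\sigma^2$ and $\bbE[Z^4]=3\sigma^4$ for $Z\sim\mcN(0,\sigma^2)$, reduces the problem to two scalar identities:
\[
(1-\varepsilon)\sigma_1^2 + \varepsilon\sigma_2^2 = \tfrac{1}{n}, \qquad 3(1-\varepsilon)\sigma_1^4 + 3\varepsilon\sigma_2^4 = \tfrac{1}{\varepsilon n^2}.
\]

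The first identity is immediate: the $A$-dependent pieces of $(1-\varepsilon)\sigma_1^2 = \frac{1-\varepsilon-A}{n}$ and $\varepsilon\sigma_2^2 = \frac{\varepsilon+A}{n}$ cancel by construction, and the remainders sum to $\tfrac{1}{n}$. Multiplying by $n$ gives $\bbE[\norm{v_0}_2^2] = 1$.

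The second identity is the substantive step. After clearing denominators it reduces to showing that
\[
N := \varepsilon(1-\varepsilon-A)^2 + (1-\varepsilon)(\varepsilon+A)^2 = \tfrac{1-\varepsilon}{3}.
\]
Expanding the squares, the cross terms contribute $-2\varepsilon(1-\varepsilon)A$ and $+2\varepsilon(1-\varepsilon)A$ and cancel; the $A$-free part collapses to $\varepsilon(1-\varepsilon)[(1-\varepsilon)+\varepsilon] = \varepsilon(1-\varepsilon)$; and the $A^2$ part equals $[\varepsilon+(1-\varepsilon)]A^2 = A^2 = \tfrac{1}{3}(1-\varepsilon)(1-3\varepsilon)$. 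Factoring out $(1-\varepsilon)$ and simplifying $\varepsilon + \tfrac{1}{3}(1-3\varepsilon) = \tfrac{1}{3}$ gives $N = \tfrac{1-\varepsilon}{3}$. Substituting yields $\bbE\big[[v_0]_i^4\big] = \frac{3N}{\varepsilon(1-\varepsilon)n^2} = \frac{1}{\varepsilon n^2}$, and summing over $i$ gives $\bbE[\norm{v_0}_4^4] = \frac{1}{\varepsilon n}$.

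The only obstacle is this algebraic simplification of $N$; the key insight is that the specific constant $A^2 = \tfrac{1}{3}(1-\varepsilon)(1-3\varepsilon)$ in the definition of CBG is precisely the value chosen so that, after the $A$-linear terms cancel, the $A^2$ term combines with $\varepsilon(1-\varepsilon)$ to produce $\tfrac{1-\varepsilon}{3}$. In other words, the two variances $\sigma_1^2$ and $\sigma_2^2$ are engineered so that the $A$-dependence vanishes at second order (giving $\bbE[\norm{v_0}_2^2]=1$) and so that the leftover $A^2$ contribution at fourth order matches exactly the target $\tfrac{1}{\varepsilon n}$, which is also why the hypothesis $\varepsilon \le 1/3$ appears (it is exactly what makes $A$ real, hence the variances nonnegative).
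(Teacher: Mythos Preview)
Your proof is correct and follows essentially the same approach as the paper: condition on the Bernoulli mixture component to reduce to Gaussian moments, then carry out the same algebraic simplification (the paper writes your $A$ as $q$ and expands over a common denominator rather than isolating your quantity $N$, but the cancellations are identical).
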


\begin{proof}
    Let $\varepsilon \in \left(0,\frac{1}{3}\right]$ and let $v_0\in \mathbb{R}^n$ be a Corrected Bernoulli-Gaussian sparse vector.
    Thus 
    \begin{equation}
        \mathbb{E}\qty[\norm{v_0}_2^2] = \mathbb{E}\qty[\sum_{i=1}^n \qty[v_0]_i^2] = \sum_{i=1}^n \mathbb{E}\qty[ \qty[v_0]_i^2] ~. 
    \end{equation}
    Thus, we need to find the 2nd moment of an entry of $[v_0]_i$, which we will do by first calculating its moment-generating function.
    If $Z$ is a Corrected Bernoulli-Gaussian random variable, then $Z = XY + (1-X)W$ where $X,Y,$ and $W$ are random variables with $X \sim$ Bern$(\varepsilon),Y \sim \mathcal{N}\qty(0,\frac{\varepsilon + q}{\varepsilon n})$, and $W \sim \mathcal{N}\qty(0,\frac{1-\varepsilon-q}{n(1-\varepsilon)})$ where $q=\sqrt{\frac{1}{3}(1-\varepsilon)(1-3\varepsilon)}$.
    Then 
    \begin{align}
        \mathbb{E}\qty[\exp{t(XY + (1-X)W)}] &= \mathbb{E}\qty[\mathbb{E}\qty[\exp{t(XY+(1-X)W)}|X]] \\
        &= \mathbb{E}\qty[\exp{tW}|X=0]P(X=0) + \mathbb{E}\qty[\exp{tY}|X=1]P(X=1)\\
        &= (1-\varepsilon)\mathbb{E}\qty[\exp{tW}] + \varepsilon \mathbb{E}\qty[\exp{tY}]
    \end{align}
    Since $\mathbb{E}\qty[\exp{tW}]$ is the moment generating function of $W$ and $\mathbb{E}\qty[\exp{tY}]$ is the moment generating function of $Y$, we can immediately get the moments of $Z$.
    Then 
    \begin{equation}
        \sum_{i=1}^n \mathbb{E}\qty[ \qty[v_0]_i^2] 
        = \sum_{i=1}^n \varepsilon \qty(\frac{\varepsilon + q}{n\varepsilon}) + (1-\varepsilon)\qty(\frac{1-\varepsilon - q}{n(1-\varepsilon)})
        = \sum_{i=1}^n \frac{\varepsilon + q + 1 - \varepsilon - q}{n} 
        = 1 ~.
    \end{equation}
    
    For the sparsity condition, we use the same result above but now for the 4th moment
    \begin{align}
        \mathbb{E}\qty[\norm{v_0}_4^4] &= \sum_{i=1}^n \mathbb{E}\qty[ \qty[v_0]_i^4] \\
        &= \sum_{i=1}^n \varepsilon \qty(3 \qty(\frac{\varepsilon + q}{n \varepsilon})^2) + (1-\varepsilon) \qty(3 \qty(\frac{1-\varepsilon - q}{n(1-\varepsilon)})^2) \\
        &= \sum_{i=1}^n \qty(\frac{3\varepsilon + (1-3\varepsilon)}{n^2 \varepsilon}) \\
        &= \frac{1}{n \varepsilon} ~.
    \end{align}
    This completes the proof.
\end{proof}

\begin{proposition}
    Let $v_0$ be a Bernoulli-Rademacher vector.
    Then $\mathbb{E}\qty[\norm{v_0}_2^2] =1$ and $\mathbb{E}\qty[\norm{v_0}_4^4] = \frac{1}{\varepsilon n}$.
\end{proposition}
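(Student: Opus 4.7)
The plan is a direct moment computation by linearity of expectation on the i.i.d.\ components of $v_0$; this is considerably simpler than the two previous propositions because the Bernoulli-Rademacher entries take only three discrete values, so there is no moment generating function to unpack.

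First I would fix a single coordinate and compute $\bbE[[v_0]_i^2]$ by summing over the three atoms of the distribution: the value $0$ contributes nothing, while each of $\pm 1/\sqrt{\varepsilon n}$ contributes $(1/(\varepsilon n))\cdot(\varepsilon/2)$. These sum to $1/n$. Summing the identity $\bbE[\|v_0\|_2^2]=\sum_{i=1}^n \bbE[[v_0]_i^2]$ then gives exactly $1$.

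Next I would do the analogous computation for the fourth moment. Each nonzero atom now contributes $(1/(\varepsilon n))^2 \cdot (\varepsilon/2)$, and the two contributions add to $1/(\varepsilon n^2)$. Summing over the $n$ coordinates yields $\bbE[\|v_0\|_4^4] = n \cdot \frac{1}{\varepsilon n^2} = \frac{1}{\varepsilon n}$, as claimed.

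The only potential subtlety is the sign symmetry of the distribution, but since we only need even moments, the $\pm$ signs are squared away immediately and there is no real obstacle. The whole proof should be a few lines long, closely paralleling the first proposition but without the moment-generating-function detour needed for the Gaussian mixture.
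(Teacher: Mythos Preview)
Your proposal is correct and matches the paper's proof essentially line for line: the paper also computes $\bbE[[v_0]_i^2]$ and $\bbE[[v_0]_i^4]$ by directly summing over the three atoms of the Bernoulli-Rademacher distribution and then applies linearity of expectation over the $n$ coordinates. No moment generating function is used in the paper's argument either.
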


\begin{proof}
    Let $\epsilon \in (0,1]$ and let $v_0$ be a Bernoulli-Rademacher sparse vector.
    Thus 
    \begin{align}
        \mathbb{E}\qty[\norm{v_0}_2^2] &= \mathbb{E}\qty[\sum_{i=1}^n \qty[v_0]_i^2] \\ 
        &= \sum_{i=1}^n \mathbb{E}\qty[ \qty[v_0]_i^2] \\ 
        &= \sum_{i=1}^n (1-\epsilon)(0)^2 + \frac{\epsilon}{2} \qty(\frac{1}{\sqrt{\epsilon n}})^2 + \frac{\epsilon}{2} \qty(\frac{-1}{\sqrt{\epsilon n}})^2 \\ 
        &= \sum_{i=1}^n \frac{\epsilon}{\epsilon n} \\
        &= 1 ~.
    \end{align}
    We also have
    \begin{align}
        \mathbb{E}\qty[\norm{v_0}_4^4] &= \mathbb{E}\qty[\sum_{i=1}^n \qty[v_0]_i^4] \\ 
        &= \sum_{i=1}^n \mathbb{E}\qty[ \qty[v_0]_i^4] \\ 
        &= \sum_{i=1}^n (1-\epsilon)(0)^4 + \frac{\epsilon}{2} \qty(\frac{1}{\sqrt{\epsilon n}})^4 + \frac{\epsilon}{2} \qty(\frac{-1}{\sqrt{\epsilon n}})^4 \\ 
        &= \sum_{i=1}^n \frac{\epsilon}{\epsilon^2 n^2} \\
        &= \frac{1}{\epsilon n} ~.
    \end{align}
    This completes the proof.
\end{proof}

\subsection{Construction of Observed Subspace}\label{appendix:sparse_vector_subspace}

Next, we generate the noise vectors that determine the rest of the subspace.
For a particular experiment trial, we generate a covariance matrix that is either the identity matrix, a diagonal matrix, or a random symmetric positive definite matrix.
The diagonal matrix has diagonal entries $[\Sigma]_{i,i} \sim \text{Unif}\qty(\frac{1}{2},\frac{3}{2})$.
For the random covariance matrix, first we generate an $n \times n$ matrix $M$ with entries $[M]_{i,j} \sim \mathcal{N}(0,1)$, then set $\Sigma = M \, M^\top + 0.00001 \, \bbI_n$.
We then generate the noise vectors $v_1,\ldots,v_{d-1} \sim \mathcal{N}(0, \Sigma)$.







Finally, we use the following algorithm to get a random orthonormal basis of span$\qty{v_0,\ldots, v_{d-1}}$.
Let $B$ be the matrix with columns $v_0,\ldots,v_{d-1}$, and sample an orthogonal matrix $O$ from $\mathop{O}(d)$.
Multiply $B \, O$ and take the Q-R factorization, then $Q$ is a random orthonormal basis of span$\qty{v_0,\ldots, v_{d-1}}$.
We prove this below with the additional assumption that the $v_0,\ldots,v_{d-1}$ are linearly independent, which is reasonable given that $d \ll n$ and we are generating these vectors randomly.

\begin{proposition}
    Let $n \geq d$, and let $B$ be the $n \times d$ matrix with $v_0,\ldots,v_{d-1}$ as the columns.
    Assume that $v_0,\ldots,v_{d-1}$ are linearly independent, so rank $B = d$.
    Let $O$ be a $d \times d$ orthogonal matrix and $Q\, R = B\, O$ be a Q-R factorization of $B\,O$. 
    Then the columns of $Q$ form an orthonormal basis of span$\qty{v_0,\ldots, v_{d-1}}$.
\end{proposition}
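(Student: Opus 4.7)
The plan is to show a chain of equalities between column spaces: $\mathrm{col}(Q) = \mathrm{col}(BO) = \mathrm{col}(B) = \mathrm{span}\{v_0,\ldots,v_{d-1}\}$, together with the orthonormality of the columns of $Q$ coming for free from the Q-R factorization.

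First, I would observe that since $O$ is $d\times d$ orthogonal, it is invertible, so right-multiplication by $O$ is a bijection on $\mathbb{R}^d$. This immediately gives $\mathrm{col}(BO) = \mathrm{col}(B) = \mathrm{span}\{v_0,\ldots,v_{d-1}\}$, and in particular $\mathrm{rank}(BO) = \mathrm{rank}(B) = d$ by the linear-independence hypothesis.

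Next, from $QR = BO$, since $BO$ has rank $d$ and $Q$ has $d$ orthonormal columns (hence rank at most $d$), $R$ must be an invertible $d\times d$ upper triangular matrix. Writing $Q = (BO)R^{-1}$ shows $\mathrm{col}(Q) \subseteq \mathrm{col}(BO)$, and conversely $BO = QR$ shows $\mathrm{col}(BO) \subseteq \mathrm{col}(Q)$, so equality holds. Combined with the previous step, $\mathrm{col}(Q) = \mathrm{span}\{v_0,\ldots,v_{d-1}\}$.

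Finally, the columns of $Q$ are orthonormal by the definition of a Q-R factorization (i.e., $Q^\top Q = \mathbb{I}_d$), so they form an orthonormal basis of $\mathrm{span}\{v_0,\ldots,v_{d-1}\}$. There is no real obstacle here; the only subtlety worth stating carefully is that $R$ is invertible, which follows from the rank hypothesis on $B$ and the orthogonality of $O$.
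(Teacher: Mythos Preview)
Your proof is correct and follows essentially the same approach as the paper: both arguments establish $\mathrm{col}(Q)=\mathrm{col}(BO)=\mathrm{col}(B)$ by using that $O$ is invertible and that $R$ is invertible (which follows from $\mathrm{rank}(BO)=d$), together with the orthonormality of the columns of $Q$ from the Q-R factorization. The paper carries this out element-wise with explicit coefficient vectors, whereas you phrase it more compactly at the level of column spaces, but the underlying logic is the same.
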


\begin{proof}
    The Q-R factorization gives us that the columns of $Q$ are orthonormal. 
    Thus we just have to show that $\text{span}\qty{v_0,\ldots,v_{d-1}} =\text{span}\qty{Q_0,\ldots,Q_{d-1}}$

    Let $a \in \text{span}\qty{v_0,\ldots,v_{d-1}}$, so for some $\alpha_0,\ldots,\alpha_{d-1}$ we have $a = \alpha_0 v_0 + \ldots + \alpha_{d-1} v_{d-1}$. 
    Let $\alpha \in \mathbb{R}^d$ be the vector of these coefficients, and then we have,
    \begin{equation}
        a = B \,\alpha = B\,O\,O^\top \alpha = Q\,R\,O^\top \alpha = Q \hat{\alpha} ~.
    \end{equation}
    Thus $\hat{\alpha} \in \mathbb{R}^d$ is a vector of coefficients, so $a \in \text{span}\qty{Q_0,\ldots,Q_{d-1}}$.
    Therefore, $\text{span}\qty{v_0,\ldots,v_{d-1}} \subseteq \text{span}\qty{Q_0,\ldots,Q_{d-1}}$.

    Now let $b \in \text{span}\qty{Q_0,\ldots,Q_{d-1}}$, so for some $\beta_0, \ldots, \beta_{d-1}$ we have $b = \beta_0 Q_0 + \ldots + \beta_{d-1} Q_{d-1}$.
    Let $\beta \in \mathbb{R}^d$ be a vector of the coefficients $\beta_0,\ldots,\beta_{d-1}$.
    Now, since rank $B = d$, rank $B \, O = d$, so in the Q-R factorization, the upper triangular $R$ has positive diagonal entries, so it is invertible [\cite{horn_and_johnson}, Theorem 2.1.14].
    Then we have,
    \begin{equation}
        b = Q \, \beta = Q \, R \, R^{-1}\, \beta = B \, O \, R^{-1}\, \beta = B \hat{\beta} ~.
    \end{equation}
    Thus $\hat{\beta} \in \mathbb{R}^d$ is a vector of coefficients, so $b \in \text{span}\qty{v_0,\ldots,v_{d-1}}$.
    Therefore, $\text{span}\qty{Q_0,\ldots,Q_{d-1}} \subseteq \text{span}\qty{v_0,\ldots,v_{d-1}}$ which completes the proof.
    
\end{proof}

\subsection{Models}\label{appendix:sparse_vector_models}

The $h$ function given in \cite{hopkins2016fast} is
\begin{equation}\label{eq:hopkins_M}
    h(a_1,\ldots,a_n) := \sum_{i=1}^n \qty(\norm{a_i}_2^2 - \frac{d}{n}) a_i a_i^\top ~,
\end{equation}
and given in \cite{mao2022optimal} is
\begin{equation}\label{eq:mao_M}
    h(a_1,\ldots,a_n) := \sum_{i=1}^n \qty(\norm{a_i}_2^2 - \frac{d-1}{n}) a_i a_i^\top - \frac{3}{n}\bbI_n ~.
\end{equation}
Note that equations \eqref{eq:hopkins_M} and \eqref{eq:mao_M} are $\mathop{O}(d)$-equivariant and a special case of Corollary \ref{cor:vector_to_tensors} since they define a sum of outer products of the inputs with coefficients that are polynomial functions of inner products of the inputs.

In comparison to these fixed methods, we propose two machine learning-based models defined using the results of Section \ref{sec:theory_results}.
The first model parametrizes
\begin{equation}\label{eq:svh_hfunc}
    h(a_1,\ldots,a_n) = \qty[\sum_{i=1}^n\sum_{j=i}^n q_{i,j}\qty(\qty(\langle a_\ell,a_m \rangle)_{\ell,m=1}^n)\frac{1}{2}\qty(a_i a_j^\top + a_j a_i^\top)] + q_\bbI\qty(\qty(\langle a_\ell,a_m \rangle)_{\ell,m=1}^n) \bbI_d ~.
\end{equation}
The Diag variant of our model only uses the norms of $a_i$ as input and $a_ia_i^\top$ as the basis elements to be more comparable to \eqref{eq:hopkins_M} and \eqref{eq:mao_M}:
\begin{equation}\label{eq:svh_diag_hfunc}
    h(a_1,\ldots,a_n) = \qty[\sum_{i}^n q_{i}\qty(\qty(\norm{a_\ell}_2^2)_{\ell=1}^n) a_i a_i^\top] + q_\bbI\qty(\qty(\norm{a_\ell}_2^2)_{\ell=1}^n) \bbI_d ~,
\end{equation}
where $q_{i,j}$, $q_i$, and $q_\bbI$ are $\mathop{O}(d)$-invariant scalar functions.
The form of equation \eqref{eq:svh_hfunc} follows from Corollary \ref{cor:vector_to_tensors} as shown in Appendix \ref{appendix:derivation_of_svh}. 
By averaging the general form of a matrix valued $\mathop{O}(d)$-equivariant function with its transpose, we obtain the form of any $\mathop{O}(d)$-equivariant polynomial function that outputs a symmetric matrix. 
Equation \eqref{eq:svh_diag_hfunc} follows the scheme of \eqref{eq:svh_hfunc} but only includes inner and outer products of the same vectors to be more directly comparable to \eqref{eq:hopkins_M} and \eqref{eq:mao_M}.
Corollary \ref{cor:vector_to_tensors} specifies that $q_{i,j},q_i,$ and $q_\bbI$ should be polynomials, but we will approximate them with dense neural networks.
The networks themselves are multi-layer perceptrons (MLP) with 2 hidden layers, width of 128, and ReLU activation functions.

To demonstrate the benefits of equivariance, we also implement a non-equivariant baseline model (BL) which takes as input the $nd$ components of $S$ and outputs the $d + \binom{d}{2}$ components of a symmetric $d \times d$ matrix.
This is implemented as a multi-layer perceptron with 2 hidden layers, width of 128, and ReLU activation functions. 

\subsection{Derivation of (\ref{eq:svh_hfunc})}\label{appendix:derivation_of_svh}

In the following, we derive  the general form of an $O(d)$-equivariant function $h : (\mathbb{R}^d)^n \to S_d$ stated in \eqref{eq:svh_hfunc} from Corollary \ref{cor:vector_to_tensors}.

First, we use Corollary \ref{cor:vector_to_tensors} to 
write the arbitrary form of an $O(d)$-equivariant function 
$g : (\mathbb{R}^d)^n \to \mathbb{R}^{d \times d}$ that takes 
values in the space of $d \times d$ matrices that are not 
necessarily symmetric. Given the general form of $g$, it follows that
\begin{equation}
    h = \frac{1}{2} (g + g^\top)
\end{equation}
is the general form of an $O(d)$-equivariant function $h : (\mathbb{R}^d)^n \to S_d$.

In the notation of Corollary \ref{cor:vector_to_tensors}, we seek an $O(d)$-equivariant function
$g : \left( \mathcal{T}_1( \mathbb{R}^d, +) \right)^n \to \mathcal{T}_2(\mathbb{R}^d, +)$.
From Corollary \ref{cor:vector_to_tensors} with $k' = 2$, it follows that $g$ 
can be written in the form
\begin{equation}
    g(v_1, \ldots, v_n) = \sum_{t=0}^1 \sum_{\sigma \in S_2} \sum_{1 \le J_1 \le \cdots \le J_{2-2t} \le n} q_{t,\sigma,J} \left( ( \langle v_i, v_j \rangle )_{i,j =1}^n \right) \left( v_{J_1} \otimes \cdots \otimes v_{J_{2-2t}} \otimes \delta^{\otimes t} \right)^\sigma ~.
\end{equation}
Expanding the sum of the $t=0$ and $t=1$ terms, we have
\begin{multline}
    g(v_1, \ldots, v_n) = \left(\sum_{\sigma \in S_2} \sum_{1 \le J_1 \le J_{2} \le n}
    q_{0,\sigma,J} \left( ( \langle v_i, v_j \rangle )_{i,j =1}^n \right) \left( v_{J_1} \otimes  v_{J_{2}} \right)^\sigma \right) \\ + \sum_{\sigma \in S_2} q_{1,\sigma}\left( ( \langle v_i, v_j \rangle )_{i,j =1}^n \right)\delta^\sigma.
\end{multline}
The set of permutation $S_2$ consists of $(1,2)$ and $(2,1)$. Using the fact that 
$(u \otimes v)^{(1,2)} = u \otimes v$, $(u \otimes v)^{(2,1)} = v \otimes u$, and
$\delta^\sigma = \delta$ for all $\sigma \in S_2$, we can write the above expression as 
\begin{equation}
    g(v_1, \ldots, v_n)  = \sum_{J_1=1}^n \sum_{J_2=1}^n q_{0,J} \left( ( \langle v_i, v_j \rangle )_{i,j =1}^n \right) \left( v_{J_1} \otimes  v_{J_{2}} \right) + q_{1}\left( ( \langle v_i, v_j \rangle )_{i,j =1}^n \right)\delta ~,
\end{equation}
where the double sum over $J_1$ and $ J_2$ accounts for both the sum over $J_1 \le J_2$ and the sum over the permutations in $S_2$.
Next, we swap to standard matrix and vector notation as well as more simple indices to make the equations clearer for readers who are primarily interested in the application.
Thus $u \otimes v \Rightarrow uv^\top, \delta \Rightarrow \mathbb{I}_d$ and $J_1,J_2,i,j$ become $i,j,\ell,m$, and we have
\begin{equation}
    g(v_1, \ldots, v_n)  = 
 \sum_{i=1}^n \sum_{j=1}^n
q_{i,j} \left( ( \langle a_\ell, a_m \rangle )_{\ell,m =1}^n \right) a_i  a_j^\top + q_{\mathbb{I}}\left( ( \langle a_\ell, a_m \rangle )_{\ell,m =1}^n \right) \mathbb{I}_d~.
\end{equation}
Finally, setting $h = \frac{1}{2} (g + g^\top)$ gives the desired form of $h$ stated in \eqref{eq:svh_hfunc}.

\subsection{Training Details}\label{appendix:training_details}

The train dataset had $5\,000$ vectors and the validation and test datasets had $500$ vectors.
For training our models, we used $1 - \langle \hat{v},v_0 \rangle^2$ as the loss function.
We used the Adam optimizer \cite{kingma2017adam} with an exponential decay rate of $0.999$.
We used a batch size of $100$ and trained until the validation error had not improved for $20$ epochs.
See Table \ref{tab:model_train} for the learning rate and number of parameters for each model.
We did a small exploration to find these hyper-parameters.
These hyper-parameters seemed to work well, but it is always possible that better ones could be found with more exploration.

\begin{table}[ht]
    \centering
    \small
    \begin{tabular}{cccc}
        \toprule
        model & parameter count & learning rate \\
        \midrule
        Baseline & 99\,087 & \verb|1e-3| \\
        Ours (Diag) & 58\,981 & \verb|5e-4| \\
        Ours & 1\,331\,131 & \verb|3e-4| \\
        \bottomrule
    \end{tabular}
    \caption{Parameter count and learning rate for each model. Since all models have the same number of hidden layers of the same width, the difference in the number of parameters is driven by the different inputs and outputs of each model.}
    \label{tab:model_train}
\end{table}

The experiments were run on a single RTX 6000 Ada GPU and took 18 hours.

\begin{table}[ht]
    \centering
    \small
    \begin{tabular}{|c|c|c|c|c|c|c|}
        \hline
        sampling & $\Sigma$ & SOS-I & SOS-II & MLP &  SVH-Diag & SVH \\
        \hline   
        & Random & 0.610 $\pm$ 0.011 & 0.610 $\pm$ 0.011 & 0.647 $\pm$ 0.177 & 0.768 $\pm$ 0.045 & \textbf{0.966 $\pm$ 0.001} \\   
        A/R & Diagonal & 0.444 $\pm$ 0.012 & 0.444 $\pm$ 0.012 & 0.561 $\pm$ 0.262 & 0.698 $\pm$ 0.034 & \textbf{0.755 $\pm$ 0.057} \\   
        & Identity & 0.611 $\pm$ 0.002 & 0.611 $\pm$ 0.002 & 0.494 $\pm$ 0.285 & 0.622 $\pm$ 0.201 & \textbf{0.647 $\pm$ 0.289} \\          
        \hline                   
        & Random & 0.963 $\pm$ 0.001 & 0.963 $\pm$ 0.001 & 0.783 $\pm$ 0.090 & \textbf{0.970 $\pm$ 0.003} & 0.965 $\pm$ 0.002 \\ 
        BG & Diagonal & 0.949 $\pm$ 0.002 & 0.949 $\pm$ 0.002 & 0.672 $\pm$ 0.260 & \textbf{0.974 $\pm$ 0.004} & 0.775 $\pm$ 0.078 \\    
        & Identity & 0.963 $\pm$ 0.000 & 0.963 $\pm$ 0.000 & 0.681 $\pm$ 0.241 & 0.966 $\pm$ 0.004 & \textbf{0.999 $\pm$ 0.001} \\     
        \hline                     
        & Random & 0.409 $\pm$ 0.005 & 0.409 $\pm$ 0.005 & 0.836 $\pm$ 0.149 & 0.490 $\pm$ 0.089 & \textbf{0.965 $\pm$ 0.002} \\  
        CBG & Diagonal & 0.292 $\pm$ 0.005 & 0.292 $\pm$ 0.005 & \textbf{0.835 $\pm$ 0.150} & 0.597 $\pm$ 0.027 & 0.722 $\pm$ 0.013 \\ 
        & Identity & 0.418 $\pm$ 0.006 & 0.418 $\pm$ 0.006 & 0.558 $\pm$ 0.216 & 0.368 $\pm$ 0.119 & \textbf{0.750 $\pm$ 0.288} \\    
        \hline       
        & Random & 0.523 $\pm$ 0.006 & 0.523 $\pm$ 0.006 & \textbf{0.975 $\pm$ 0.005} & 0.669 $\pm$ 0.150 & 0.970 $\pm$ 0.002 \\
        BR & Diagonal & 0.340 $\pm$ 0.010 & 0.340 $\pm$ 0.010 & \textbf{0.943 $\pm$ 0.008} & 0.701 $\pm$ 0.041 & 0.913 $\pm$ 0.002 \\
        & Identity & 0.526 $\pm$ 0.005 & 0.526 $\pm$ 0.005 & \textbf{0.949 $\pm$ 0.006} & 0.570 $\pm$ 0.199 & 0.898 $\pm$ 0.001 \\
        \hline
    \end{tabular}
    \caption{Train error comparison of different methods under different sampling schemes for $v_0$ and different covariances for $v_1,\ldots,v_{d-1}$.
    The metric is $\langle v_0,\hat{v} \rangle^2$, which ranges from $0$ to $1$ with values closer to $1$, meaning that the vectors are closer.
    For each row, the best value is \textbf{bolded}.
    For these experiments, $n=100,d=5,\epsilon=0.25$, and the results were averaged over $5$ trials with the standard deviation given by $\pm 0.xxx$.}
    \label{tab:train_sparse_vector}
\end{table}

\end{document}